\documentclass[twoside]{article}
\usepackage{amsthm,amsmath}
\usepackage{amssymb}
\usepackage{bm}
\usepackage{bbm}
\usepackage{url}
\usepackage{url}
\usepackage{graphicx}
\usepackage{color}
\usepackage{tikz}
\usepackage{graphicx}
\usepackage{caption}
\usepackage{float}

%
\usepackage[accepted]{aistats2015}
%

    \setcounter{topnumber}{2}
    \setcounter{bottomnumber}{2}
    \setcounter{totalnumber}{4}     
    \setcounter{dbltopnumber}{2}    



\begin{document}

%

%

\newtheorem{definition}{Definition}[section]
\newtheorem{theorem}{Theorem}[section]
\newtheorem{lemma}{Lemma}[section]
\newtheorem{corollary}{Corollary}[section]

\def\layersep{2cm}
\def\layersepp{4cm}

\twocolumn[

\aistatstitle{The Loss Surfaces of Multilayer Networks}

\aistatsauthor{ Anna Choromanska\\ \texttt{achoroma@cims.nyu.edu} \And Mikael Henaff\\ \texttt{mbh305@nyu.edu} \And Michael Mathieu\\ \texttt{mathieu@cs.nyu.edu} \And G\'{e}rard {Ben Arous}\\ \texttt{benarous@cims.nyu.edu} \And Yann LeCun\\ \texttt{yann@cs.nyu.edu}}

\aistatsaddress{ \\Courant Institute of Mathematical Sciences\\ New York, NY, USA } ]

\begin{abstract}
We study the connection between the highly non-convex loss function of a simple model of the fully-connected feed-forward neural network and the Hamiltonian of the spherical spin-glass model under the assumptions of: i) variable independence, ii) redundancy in network parametrization, and iii) uniformity. These assumptions enable us to explain the complexity of the fully decoupled neural network through the prism of the results from random matrix theory. We show that for large-size decoupled networks the lowest critical values of the random loss function form a layered structure and they are located in a well-defined band lower-bounded by the global minimum. The number of local minima outside that band diminishes exponentially with the size of the network. We empirically verify that the mathematical model exhibits similar behavior as the computer simulations, despite the presence of high dependencies in real networks. We conjecture that both simulated annealing and SGD converge to the band of low critical points, and that all critical points found there are local minima of high quality measured by the test error. This emphasizes a major difference between large- and small-size networks where for the latter poor quality local minima have non-zero probability of being recovered. Finally, we prove that recovering the global minimum becomes harder as the network size increases and that it is in practice irrelevant as global minimum often leads to overfitting. 
\end{abstract}

\section{Introduction}

Deep learning methods have enjoyed a resurgence of interest in the last few years for such applications as image recognition~\cite{NIPS2012_4824}, speech recognition~\cite{hinton12}, and natural language processing~\cite{DBLP:conf/emnlp/WestonCA14}. Some of the most popular methods use multi-stage architectures composed of alternated layers of linear transformations and max function. In a particularly popular version, the max functions are known as ReLUs (Rectified Linear Units) and compute the mapping $y = {\rm max}(x,0)$ in a pointwise fashion~\cite{DBLP:conf/icml/NairH10}. In other architectures, such as convolutional networks~\cite{lecun-gradientbased-learning-applied-1998} and maxout networks~\cite{Goodfellow_maxout_2013}, the max operation is performed over a small set of variable within a layer.

The vast majority of practical applications of deep learning use supervised learning with very deep networks. The supervised loss function, generally a cross-entropy or hinge loss, is minimized using some form of stochastic gradient descent (SGD)~\cite{bottou-98x}, in which the gradient is evaluated using the back-propagation procedure~\cite{lecun-98b}.

The general shape of the loss function is very poorly understood. In the early days of neural nets (late 1980s and early 1990s), many researchers and engineers were experimenting with relatively small networks, whose convergence tends to be unreliable, particularly when using batch optimization.  Multilayer neural nets earned a reputation of being finicky and unreliable, which in part caused the community to focus on simpler method with convex loss functions, such as kernel machines and boosting.

However, several researchers experimenting with larger networks and SGD had noticed that, while multilayer nets do have many local minima, the result of multiple experiments consistently give very similar performance. This suggests that, while local minima are numerous, they are relatively easy to find, and they are all more or less equivalent in terms of performance on the test set. The present paper attempts to explain this peculiar property through the use of random matrix theory applied to the analysis of critical points in high degree polynomials on the sphere.

We first establish that the loss function of a typical multilayer net with ReLUs can be expressed as a polynomial function of the weights in the network, whose degree is the number of layers, and whose number of monomials is the number of paths from inputs to output. As the weights (or the inputs) vary, some of the monomials are switched off and others become activated, leading to a piecewise, continuous polynomial whose monomials are switched in and out at the boundaries between pieces.

An important question concerns the distribution of critical points (maxima, minima, and saddle points) of such functions. Results from random matrix theory applied to spherical spin glasses have shown that these functions have a combinatorially large number of saddle points. Loss surfaces for large neural nets have many local minima that are essentially equivalent from the point of view of the test error, and these minima tend to be highly degenerate, with many eigenvalues of the Hessian near zero.

We empirically verify several hypotheses regarding learning with large-size networks:
\begin{itemize}
\item For large-size networks, most local minima are equivalent and yield similar performance on a test set.
\item The probability of finding a ``bad'' (high value) local minimum is non-zero for small-size networks and decreases quickly with network size.
\item Struggling to find the global minimum on the training set (as opposed to one of the many good local ones) is not useful in practice and may lead to overfitting.
\end{itemize}
The above hypotheses can be directly justified by our theoretical findings. We finally conclude the paper with brief discussion of our results and future research directions in Section~\ref{sec:ConandFutWork}. 

We confirm the intuition and empirical evidence expressed in previous works that the problem of training deep learning systems resides with avoiding saddle points and quickly ``breaking the symmetry'' by picking sides of saddle points and choosing a suitable attractor~\cite{lecun-98b,DBLP:journals/corr/SaxeMG13,DBLP:journals/corr/DauphinPGCGB14}.

What is new in this paper? To the best of our knowledge, this paper is the first work providing a theoretical description of the optimization paradigm with neural networks in the presence of large number of parameters. It has to be emphasized however that this connection relies on a number of possibly unrealistic assumptions. It is also an attempt to shed light on the puzzling behavior of modern deep learning systems when it comes to optimization and generalization.

\section{Prior work}
\label{sec:PriorWork}

In the 1990s, a number of researchers studied the convergence of gradient-based learning for multilayer networks using the methods of statistical physics, i.e.~\cite{saad1995exact}, and the edited works~\cite{saad2009line}. Recently, Saxe~\cite{DBLP:journals/corr/SaxeMG13} and Dauphin~\cite{DBLP:journals/corr/DauphinPGCGB14} explored the statistical properties of the error surface in multi-layer architectures, pointing out the importance of saddle points. 

Earlier theoretical analyses~\cite{Baldi:1989:NNP:70359.70362,wigner_semicircle,Fyodorov2007,Bray2007} suggest the existence of a certain structure of critical points of random Gaussian error functions on high dimensional continuous spaces. They imply that critical points whose error is much higher than the global minimum are exponentially likely to be saddle points with many negative and approximate plateau directions whereas all local minima are likely to have an error very close to that of the global minimum (these results are conveniently reviewed in~\cite{DBLP:journals/corr/DauphinPGCGB14}). The work of~\cite{DBLP:journals/corr/DauphinPGCGB14} establishes a strong empirical connection between neural networks and the theory of random Gaussian fields by providing experimental evidence that the cost function of neural networks exhibits the same properties as the Gaussian error functions on high dimensional continuous spaces. Nevertheless they provide no theoretical justification for the existence of this connection which instead we provide in this paper.

This work is inspired by the recent advances in random matrix theory and the work of~\cite{AAC2010} and~\cite{AAC2013}. The authors of these works provided an asymptotic evaluation of the complexity of the spherical spin-glass model (the spin-glass model originates from condensed matter physics where it is used to represent a magnet with irregularly aligned spins). They discovered and mathematically proved the existence of a layered structure of the low critical values for the model's Hamiltonian which in fact is a Gaussian process. Their results are not discussed in details here as it will be done in Section~\ref{sec:theory} in the context of neural networks. We build the bridge between their findings and neural networks and show that the objective function used by neural network is analogous to the Hamiltonian of the spin-glass model under the assumptions of: i) variable independence, ii) redundancy in network parametrization, and iii) uniformity, and thus their landscapes share the same properties. We emphasize that the connection between spin-glass models and neural networks was already explored back in the past (a summary can be found in~\cite{Dotsenko1995}). In example in~\cite{PhysRevA.32.1007} the authors showed that the long-term behavior of certain neural network models are governed by the statistical mechanism of infinite-range Ising spin-glass Hamiltonians. Another work~\cite{0305-4470-30-23-009} examined the nature of the spin-glass transition in the Hopfield neural network model. None of these works however make the attempt to explain the paradigm of optimizing the highly non-convex neural network objective function through the prism of spin-glass theory and thus in this respect our approach is very novel.

\section{Deep network and spin-glass model}
\label{sec:NNSG}

\subsection{Preliminaries}

For the theoretical analysis, we consider a simple model of the fully-connected feed-forward deep network with a single output and rectified linear units. We call the network $\mathcal{N}$. We focus on a binary classification task. Let $X$ be the random input vector of dimensionality $d$. Let $(H-1)$ denote the number of hidden layers in the network and we will refer to the input layer as the $0^{\text{th}}$ layer and to the output layer as the $H^{\text{th}}$ layer. Let $n_i$ denote the number of units in the $i^{\text{th}}$ layer (note that $n_0 = d$ and $n_H = 1$). Let $W_i$ be the matrix of weights between $(i - 1)^{\text{th}}$ and $i^{th}$ layers of the network. Also, let $\sigma$ denote the activation function that converts a unit's weighted input to its output activation. We consider linear rectifiers thus $\sigma(x) = \max(0,x)$. We can therefore write the (random) network output $Y$ as
\[Y = q\sigma(W_H^{\top}\sigma(W_{H-1}^{\top}\dots\sigma(W_1^{\top}X)))\dots),
\]
where $q = \sqrt{(n_0n_1...n_H)^{(H-1)/2H}}$ is simply a normalization factor. The same expression for the output of the network can be re-expressed in the following way:
\begin{equation}
Y = q\sum_{i=1}^{n_0}\sum_{j = 1}^\gamma X_{i,j}A_{i,j}\prod_{k = 1}^{H}w_{i,j}^{(k)},
\label{eq:befrein}
\end{equation}
where the first summation is over the network inputs and the second one is over all paths from a given network input to its output, where $\gamma$ is the total number of such paths (note that $\gamma = n_1n_2\dots n_H$). Also, for all $i = \{1,2,\dots,n_0\}$: $X_{i,1} = X_{i,2} = \dots = X_{i,\gamma}$. Furthermore, $w_{i,j}^{(k)}$ is the weight of the $k^{\text{th}}$ segment of path indexed with $(i,j)$ which connects layer $(k-1)$ with layer $k$ of the network. Note that each path corresponds to a certain set of $H$ weights, which we refer to as a \textit{configuration of weights}, which are multiplied by each other. Finally, $A_{i,j}$ denotes whether a path $(i,j)$ is active ($A_{i,j} = 1$) or not ($A_{i,j} = 0$). 

\begin{definition}
The mass of the network $\Psi$ is the total number of all paths between all network inputs and outputs: $\Psi = \prod_{i=0}^Hn_i$. Also let $\Lambda$ as $\Lambda = \sqrt[H]{\Psi}$.
\end{definition}

\begin{definition}
The size of the network $N$ is the total number of network parameters: $N = \sum_{i=0}^{H-1}n_in_{i+1}$.
\end{definition}

The mass and the size of the network depend on each other as captured in Theorem~\ref{thm:arge}. All proofs in this paper are deferred to the Supplementary material. 

\begin{theorem}
Let $\Psi$ be the mass of the network, $d$ be the number of network inputs and $H$ be the depth of the network. The size of the network is bounded as
\[\Psi^2H = \Lambda^{2H}H \geq N \geq \sqrt[H]{\Psi^2}\frac{H}{\sqrt[H]{d}} \geq \sqrt[H]{\Psi} = \Lambda.
\]
\label{thm:arge}
\end{theorem}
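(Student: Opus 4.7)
The plan is to prove the three inequalities from left to right, using only the definitions of $\Psi$, $\Lambda$, $N$ together with the facts $n_0 = d$, $n_H = 1$, and $n_i \geq 1$ for all $i$. The only nontrivial ingredient is the AM--GM inequality applied to the sum defining $N$.

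For the upper bound $N \leq \Psi^2 H$, I would observe that each summand satisfies $n_i n_{i+1} \leq \prod_{j=0}^{H} n_j = \Psi \leq \Psi^2$, because all remaining factors $n_j$ are at least $1$. Since $N$ is a sum of $H$ such terms, $N \leq H\Psi^2 = H\Lambda^{2H}$.

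For the middle inequality, I would apply the arithmetic--geometric mean inequality to the $H$ nonnegative terms of $N$:
\[
N \;=\; \sum_{i=0}^{H-1} n_i n_{i+1} \;\geq\; H \left(\prod_{i=0}^{H-1} n_i n_{i+1}\right)^{1/H}.
\]
The product telescopes: each index $i \in \{1, \dots, H-1\}$ contributes $n_i$ twice, while $n_0$ and $n_H$ contribute once, so
\[
\prod_{i=0}^{H-1} n_i n_{i+1} \;=\; \frac{\left(\prod_{i=0}^{H} n_i\right)^2}{n_0 n_H} \;=\; \frac{\Psi^2}{d},
\]
using $n_0 = d$ and $n_H = 1$. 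Substituting gives $N \geq H\,(\Psi^2/d)^{1/H} = \sqrt[H]{\Psi^2}\, H/\sqrt[H]{d}$, which is exactly the middle bound.

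The last inequality $\sqrt[H]{\Psi^2}\, H/\sqrt[H]{d} \geq \sqrt[H]{\Psi} = \Lambda$ is equivalent, after raising to the $H$th power and rearranging, to $H^H \Psi \geq d$. Since $n_i \geq 1$ for every intermediate layer, we have $\Psi = d \cdot n_1 n_2 \cdots n_{H-1} \cdot 1 \geq d$, and $H^H \geq 1$, so this is immediate. I do not anticipate a serious obstacle; the only step requiring any thought is recognizing the telescoping structure of $\prod_i n_i n_{i+1}$ and that the boundary factor is $n_0 n_H = d$ rather than something larger, which is what makes the lower bound tight up to the $1/\sqrt[H]{d}$ factor.
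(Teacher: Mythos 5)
Your proposal is correct and follows essentially the same route as the paper: the key middle inequality is obtained by AM--GM applied to the $H$ terms $n_in_{i+1}$, whose product telescopes to $\Psi^2/(n_0n_H)=\Psi^2/d$, and the outer inequalities follow from trivial estimates (your $n_in_{i+1}\leq\Psi\leq\Psi^2$ versus the paper's $n_in_{i+1}\leq n_{\max}^2\leq\Psi^2$ for the upper bound, and your $H^H\Psi\geq d$ versus the paper's observation that $\sqrt[H]{\Psi/d}\,H=\sqrt[H]{\prod_{i=1}^H n_i}\,H\geq 1$ for the last step, which are equivalent). No gaps.
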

We assume the depth of the network $H$ is bounded. Therefore $N \rightarrow \infty$ iff $\Psi \rightarrow \infty$, and $N \rightarrow \infty$ iff $\Lambda \rightarrow \infty$. 

In the rest of this section we will be establishing a connection between the loss function of the neural network and the Hamiltonian of the spin-glass model. We next provide the outline of our approach. 

\subsection{Outline of the approach}

In Subsection~\ref{sec:Approximation} we introduce randomness to the model by assuming $X$'s and $A$'s are random. 
We make certain assumptions regarding the neural network model. 
First, we assume certain distributions and mutual dependencies concerning the random variables $X$'s and $A$'s.
We also introduce a spherical constraint on the model weights. We finally make two other assumptions regarding the redundancy of network parameters and their uniformity, both of which are justified by empirical evidence in the literature. 
These assumptions will allow us to show in Subsection~\ref{subsec:LFSG} that the loss function of the neural network, after re-indexing terms\footnote{The terms are re-indexed in Subsection~\ref{sec:Approximation} and it is done to preserve consistency with the notation in~\cite{AAC2010} where the proofs of the results of Section~\ref{sec:theory} can be found.}, has the form of a centered Gaussian process on the sphere $\mathcal{S} = S^{\Lambda-1}(\sqrt{\Lambda})$, which is equivalent to the Hamiltonian of the $H$-spin spherical spin-glass model, given as
\begin{equation}
\mathcal{L}_{\!\Lambda,H}({\bm {\tilde{w}}}) \!=\! \frac{1}{\Lambda^{\!(H\!-\!1)/2}}\!\!\!\!\!\!\!\sum_{i_1,i_2,\dots,i_H=1}^{\Lambda}\!\!\!\!\!\!\!\!\!\!\!X_{i_1,i_2,\dots,i_H}\!\tilde{w}_{i_1}\!\tilde{w}_{i_2}\!\!\dots\!\tilde{w}_{i_H},
\label{eq:spinglass}
\end{equation}
with spherical constraint
\begin{equation}
\frac{1}{\Lambda}\sum_{i=1}^{\Lambda}\tilde{w}_i^2 = 1.
\label{eq:spherical}
\end{equation}
The redundancy and uniformity assumptions will be explained in Subsection~\ref{sec:Approximation} in detail. However, on the high level of generality the redundancy assumption enables us to skip superscript $(k)$ appearing next to the weights in Equation~\ref{eq:befrein} (note it does not appear next to the weights in Equation~\ref{eq:spinglass}) by determining a set of unique network weights of size no larger than $N$, and the uniformity assumption ensures that all ordered products of unique weights appear in Equation~\ref{eq:spinglass} the same number of times. 

An asymptotic evaluation of the complexity of $H$-spin spherical spin-glass models via random matrix theory was studied in the literature~\cite{AAC2010} where a precise description of the energy landscape for the Hamiltonians of these models is provided. In this paper (Section~\ref{sec:theory}) we use these results to explain the optimization problem in neural networks. 

\subsection{Approximation}
\label{sec:Approximation}

\paragraph{Input}
We assume each input $X_{i,j}$ is a normal random variable such that $X_{i,j}\sim N(0,1)$. 
Clearly the model contains several dependencies as one input is associated with many paths in the network. 
That poses a major theoretical problem in analyzing these models as it is unclear how to account for these dependencies. 
In this paper we instead study fully decoupled model~\cite{opac-b1095246}, where $X_{i,j}$'s are assumed to be independent. 
We allow this simplification as to the best of our knowledge there exists no theoretical description of the optimization paradigm with neural networks in the literature either under independence assumption or when the dependencies are allowed. 
Also note that statistical learning theory heavily relies on this assumption~\cite{hastie01statisticallearning} even when the model under consideration is much simpler than a neural network. Under the independence assumption we will demonstrate the similarity of this model to the spin-glass model. We emphasize that despite the presence of high dependencies in real neural networks, both models exhibit high similarities as will be empirically demonstrated. 

\paragraph{Paths}
We assume each path in Equation~\ref{eq:befrein} is equally likely to be active thus $A_{i,j}$'s will be modeled as Bernoulli random variables with the same probability of success $\rho$. By assuming the independence of $X$'s and $A$'s we get the following
\begin{equation}
\mathbb{E}_A[Y] = q\sum_{i=1}^{n_0}\sum_{j = 1}^\gamma X_{i,j}\rho\prod_{k = 1}^{H}w_{i,j}^{(k)}.
\label{eq:befrein2}
\end{equation}

\paragraph{Redundancy in network parametrization}
Let $\mathcal{W} = \{w_1,w_2,\dots,w_N\}$ be the set of all weights of the network. Let $\mathcal{A}$ denote the set of all $H$-length configurations of weights chosen from $\mathcal{W}$ (order of the weights in a configuration does matter). Note that the size of $\mathcal{A}$ is therefore $N^H$. Also let $\mathcal{B}$ be a set such that each element corresponds to the single configuration of weights from Equation~\ref{eq:befrein2}, thus $\mathcal{B} = \{(w_{1,1}^1,w_{1,1}^2,\dots,w_{1,1}^H),(w_{1,2}^1,w_{1,2}^2,\dots,w_{1,2}^H),\dots,\\(w_{n_0,\gamma}^1,w_{n_0,\gamma}^2,\dots,w_{n_0,\gamma}^H)\}$, where every single weight comes from set $\mathcal{W}$ (note that $\mathcal{B} \subset \mathcal{A}$). Thus Equation~\ref{eq:befrein2} can be equivalently written as 
\begin{equation}
Y_N \!:=\! \mathbb{E}_A[Y] \!=\! q\!\!\!\!\!\!\!\sum_{i_1,i_2,\dots,i_H=1}^{N}\!\!\!\!\!\!\!\sum_{j = 1}^{r_{i_1\!,i_2,\dots,i_H}}\!\!\!\!\!X_{i_1,i_2,\dots,i_H}^{(j)}\rho\!\prod_{k = 1}^{H}\!\!w_{i_k}.
\label{eq:befapprox}
\end{equation}
We will now explain the notation. It is over-complicated for purpose, as this notation will be useful later on. $r_{i_1,i_2,\dots,i_H}$ denotes whether the configuration $(w_{i_1},w_{i_2},\dots,w_{i_H})$ appears in Equation~\ref{eq:befrein2} or not, thus $r_{i_1,i_2,\dots,i_H} \in \{0\cup{1}\}$, and $\{X_{i_1,i_2,\dots,i_H}^{(j)}\}_{j=1}^{r_{i_1,i_2,\dots,i_H}}$ denote a set of random variables corresponding to the same weight configuration (since $r_{i_1,i_2,\dots,i_H} \in \{0\cup{1}\}$ this set has at most one element). Also $r_{i_1,i_2,\dots,i_H} = 0$ implies that summand $X_{i_1,i_2,\dots,i_H}^{(j)}\rho\prod_{k = 1}^{H}w_{i_k}$ is zeroed out). Furthermore, the following condition has to be satisfied: $\sum_{i_1,i_2,\dots,i_H=1}^{N}r_{i_1,i_2,\dots,i_H} = \Psi$. In the notation $Y_N$, index $N$ refers to the number of unique weights of a network (this notation will also be helpful later).  

Consider a family of networks which have the same graph of connections as network $\mathcal{N}$ but different edge weighting such that they only have $s$ unique weights and $s \leq N$ (by notation analogy the expected output of this network will be called $Y_s$). It was recently shown~\cite{NIPS2013_5025,DBLP:journals/corr/DentonZBLF14} that for large-size networks large number of network parameters (according to~\cite{NIPS2013_5025} even up to $95\%$) are redundant and can either be learned from a very small set of unique parameters or even not learned at all with almost no loss in prediction accuracy.  

\begin{definition}
A network $\mathcal{M}$ which has the same graph of connections as $\mathcal{N}$ and $s$ unique weights satisfying $s \leq N$ is called a $(s,\epsilon)$-\textit{reduction image} of $\mathcal{N}$ for some $\epsilon \in [0,1]$ if the prediction accuracy of $\mathcal{N}$ and $\mathcal{M}$ differ by no more than $\epsilon$ (thus they classify at most $\epsilon$ fraction of data points differently).
\end{definition}

\begin{theorem}
Let $\mathcal{N}$ be a neural network giving the output whose expectation $Y_N$ is given in Equation~\ref{eq:befapprox}. Let $\mathcal{M}$ be its $(s,\epsilon)$-reduction image for some $s \leq N$ and $\epsilon \in [0,0.5]$. By analogy, let $Y_s$ be the expected output of network $\mathcal{M}$. Then the following holds
\[corr(sign(Y_s),sign(Y_N)) \geq \frac{1-2\epsilon}{1+2\epsilon},
\]
where $corr$ denotes the correlation defined as $corr(A,B) = \frac{\mathbb{E}[(A - \mathbb{E}[A]])(B - \mathbb{E}[B]])}{std(A)std(B)}$, $std$ is the standard deviation and $sign(\cdot)$ denotes the sign of prediction ($sign(Y_s)$ and $sign(Y_N)$ are both random).
\label{thm:redun}
\end{theorem}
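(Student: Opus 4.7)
The plan is to reduce the claim to an elementary manipulation of Pearson correlation for two $\{-1,+1\}$-valued random variables. Set $A := \mathrm{sign}(Y_N)$ and $B := \mathrm{sign}(Y_s)$, and let $\delta := \Pr(A \neq B)$, so that the $(s,\epsilon)$-reduction-image hypothesis is precisely $\delta \leq \epsilon$. The target inequality then becomes a statement about $\delta$ together with the marginals $a := \mathbb{E}[A]$ and $b := \mathbb{E}[B]$.

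First I would compute the three moments that enter the correlation. Because $AB = +1$ on $\{A = B\}$ and $AB = -1$ on $\{A \neq B\}$, one gets $\mathbb{E}[AB] = 1 - 2\delta$. Because $A^2 = B^2 \equiv 1$, $\mathrm{Var}(A) = 1 - a^2$ and $\mathrm{Var}(B) = 1 - b^2$. A short accounting of the four joint probabilities further yields the marginal constraints $|a - b| \leq 2\delta$ and $|a + b| \leq 2(1 - \delta)$, which cut out the feasible rhombus for $(a,b)$ given $\delta$.

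Substituting into the Pearson formula gives
\[
\mathrm{corr}(A, B) \;=\; \frac{1 - 2\delta - ab}{\sqrt{(1 - a^2)(1 - b^2)}}.
\]
I would then apply the algebraic identity $(1 + ab)^2 - (1 - a^2)(1 - b^2) = (a + b)^2 \geq 0$, which together with $|a|,|b| \leq 1$ (so that $1 + ab \geq 0$) gives $\sqrt{(1 - a^2)(1 - b^2)} \leq 1 + ab$. Provided the numerator of the correlation is nonnegative, this yields the cleaner bound
\[
\mathrm{corr}(A, B) \;\geq\; \frac{1 - 2\delta - ab}{1 + ab}.
\]
The right-hand side is strictly decreasing in $ab$, so the remaining task is to upper-bound $ab$ over the feasible rhombus and then invoke $\delta \leq \epsilon$ to substitute $\delta$ by $\epsilon$, arriving at the target ratio $(1 - 2\epsilon)/(1 + 2\epsilon)$.

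The main obstacle is precisely this last optimization step. The feasible rhombus has four vertices and the function to minimize is not monotone in either coordinate alone, so the extremal configuration must be identified by comparing the candidate edges and vertices. The specific clean ratio $(1 - 2\epsilon)/(1 + 2\epsilon)$ suggests that the extremum is attained on the boundary $|a - b| = 2\delta$ at a balanced configuration of the two marginals, with $\delta$ saturating to $\epsilon$; once this extremal location is pinned down, the bound reduces to an elementary simplification of the above ratio. Verifying this extremum, and handling the regime in which the correlation numerator could turn negative, are the delicate parts of the argument.
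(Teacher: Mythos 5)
There is a genuine gap, and it sits exactly where you flag ``the delicate part'': the final optimization over the rhombus cannot succeed, because the inequality you want is false if the only information retained is $\delta\le\epsilon$, $|a-b|\le 2\delta$, $|a+b|\le 2(1-\delta)$. Concretely, take $\delta=\epsilon=0.1$ and the joint law $\Pr(A{=}1,B{=}1)=0.85$, $\Pr(A{=}1,B{=}-1)=\Pr(A{=}-1,B{=}1)=\Pr(A{=}-1,B{=}-1)=0.05$. Then $a=b=0.8$ (feasible in your rhombus, since $|a-b|=0\le 2\delta$ and $|a+b|=1.6\le 1.8$), $\mathbb{E}[AB]=1-2\delta=0.8$, and the \emph{true} correlation is $\mathrm{corr}(A,B)=\frac{0.8-0.64}{0.36}\approx 0.44$, strictly below $\frac{1-2\epsilon}{1+2\epsilon}=\frac{0.8}{1.2}\approx 0.67$ (your intermediate bound $\frac{1-2\delta-ab}{1+ab}$ is even smaller, about $0.10$). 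So no amount of care in minimizing over the rhombus can deliver the stated bound; the extremal analysis you defer is not merely delicate, it is impossible with only these constraints.

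What is missing is the one structural fact about the networks that the paper's proof invokes after proving a general two-classifier lemma: for fixed weights, $Y_N$ and $Y_s$ are linear combinations of the zero-mean Gaussian variables $X$, hence centered and symmetric, so $\Pr(\mathrm{sign}(Y_N)=1)=\Pr(\mathrm{sign}(Y_s)=1)=\tfrac12$, i.e. $a=b=0$ in your notation. (The paper's lemma holds for a general marginal $p$ of the first classifier and its bound degrades as $p$ moves away from $\tfrac12$ --- consistent with the counterexample above --- and the clean ratio $\frac{1-2\epsilon}{1+2\epsilon}$ is obtained precisely by substituting $p=\tfrac12$ using this symmetry.) Once you add $a=0$, your own computation closes immediately and even gives something slightly stronger: $\mathrm{Cov}(A,B)=\mathbb{E}[AB]-ab=1-2\delta$, $\mathrm{std}(A)=1$, $\mathrm{std}(B)=\sqrt{1-b^2}\le 1$, hence $\mathrm{corr}(A,B)\ge 1-2\delta\ge \frac{1-2\epsilon}{1+2\epsilon}$. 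The fix is therefore not a cleverer optimization but importing the zero-mean/symmetry hypothesis on $Y_N$ and $Y_s$ into the argument, which is exactly how the paper's proof proceeds.
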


The redundancy assumption implies that one can preserve $\epsilon$ to be close to $0$ even with $s << N$.

\paragraph{Uniformity}
Consider the network $\mathcal{M}$ to be a $(s,\epsilon)$-reduction image of $\mathcal{N}$ for some $s \leq N$ and $\epsilon \in [0,1]$. The output $Y_s$ of the image network can in general be expressed as
\[Y_s = q\sum_{i_1,\dots,i_H=1}^{s}\sum_{j=1}^{t_{i_1,\dots,i_H}}X_{i_1,\dots,i_H}^{(j)}\rho\prod_{k = 1}^{H}w_{i_k},
\]
where $t_{i_1,\dots,i_H} \in \{\mathbb{Z}^{+}\cup{0}\}$ is the number of times each configuration $(w_{i_1},w_{i_2},\dots,w_{i_H})$ repeats in Equation~\ref{eq:befapprox} and $\sum_{i_1,\dots,i_H=1}^{s}t_{i_1,\dots,i_H} = \Psi$. We assume that unique weights are close to being evenly distributed on the graph of connections of network $\mathcal{M}$. We call this assumption a \textit{uniformity assumption}. Thus this assumption implies that for all $(i_1,i_2,\dots,i_H):i_1,i_2,\dots,i_H \in \{1,2,\dots,s\}$ there exists a positive constant $c \geq 1$ such that the following holds
\begin{equation}
\frac{1}{c}\cdot\frac{\Psi}{s^H} \leq t_{i_1,i_2,\dots,i_H} \leq c\cdot\frac{\Psi}{s^H}.
\label{eq:uniform}
\end{equation}
The factor $\frac{\Psi}{s^H}$ comes from the fact that for the network where every weight is uniformly distributed on the graph of connections (thus with high probability every node is adjacent to an edge with any of the unique weights) it holds that $t_{i_1,i_2,\dots,i_H} = \frac{\Psi}{s^H}$. For simplicity assume $\frac{\Psi}{s^H} \in \mathbb{Z}^{+}$ and $\sqrt[H]{\Psi} \in \mathbb{Z}^{+}$. Consider therefore an expression as follows
\begin{equation}
\hat{Y}_s = q\sum_{i_1,\dots,i_H=1}^{s}\sum_{j=1}^{\frac{\Psi}{s^H}}X_{i_1,\dots,i_H}^{(j)}\rho\prod_{k = 1}^{H}w_{i_k},
\label{eq:approx}
\end{equation}
which corresponds to a network for which the lower-bound and upper-bound in Equation~\ref{eq:uniform} match. Note that one can combine both summations in Equation~\ref{eq:approx} and re-index its terms to obtain
\begin{equation}
\hat{Y} := \hat{Y}_{(s= \Lambda)} = q\sum_{i_1,\dots,i_H=1}^{\Lambda}X_{i_1,\dots,i_H}\rho\prod_{k = 1}^{H}w_{i_k}.
\label{eq:approxfinal}
\end{equation}
The following theorem (Theorem~\ref{thm:unif}) captures the connection between $\hat{Y}_s$ and $Y_{s}$.

\begin{theorem}
Under the uniformity assumption of Equation~\ref{eq:uniform}, random variable $\hat{Y}_s$ in Equation~\ref{eq:approx} and random variable $Y_s$ in Equation~\ref{eq:befapprox} satisfy the following: $corr(\hat{Y}_s,Y_s) \geq \frac{1}{c^2}$.
\label{thm:unif}
\end{theorem}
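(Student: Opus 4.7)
The plan is to treat both $Y_s$ and $\hat Y_s$ as centered Gaussian random variables indexed by the same underlying i.i.d.\ family $\{X_{i_1,\dots,i_H}^{(j)}\}$, and then read off the correlation from a direct variance/covariance computation using the uniformity bound in Equation~\ref{eq:uniform}. Concretely, I would fix notation $p_{i_1,\dots,i_H}=\prod_{k=1}^H w_{i_k}$, $t=t_{i_1,\dots,i_H}$, and $\bar t=\Psi/s^H$, and use the natural coupling in which the symbol $X_{i_1,\dots,i_H}^{(j)}$ denotes the same standard normal in the two expressions (justified by the identical notation used in Equations~\ref{eq:befapprox} and~\ref{eq:approx}).

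Under that coupling, since the $X$'s are independent $N(0,1)$ and the $\rho$, $w_{i_k}$ are deterministic, both $Y_s$ and $\hat Y_s$ are mean-zero Gaussians and their variances/covariance are sums over configurations:
\begin{align*}
\mathrm{Var}(Y_s) &= q^2\rho^2\!\!\sum_{i_1,\dots,i_H}\! t\, p^2, \\
\mathrm{Var}(\hat Y_s) &= q^2\rho^2\,\bar t\!\!\sum_{i_1,\dots,i_H}\! p^2, \\
\mathrm{Cov}(Y_s,\hat Y_s) &= q^2\rho^2\!\!\sum_{i_1,\dots,i_H}\!\min(t,\bar t)\, p^2,
\end{align*}
the last identity using that $\{X^{(1)},\dots,X^{(\min(t,\bar t))}\}$ is the common block of summands shared by the two expressions for each configuration.

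Next I would plug in the uniformity bound $\bar t/c\le t\le c\bar t$ with $c\ge 1$. This immediately gives $\min(t,\bar t)\ge \bar t/c$, while $t\le c\bar t$ yields $\mathrm{Var}(Y_s)\le c\bar t\,q^2\rho^2\sum p^2$, and the trivial bound $\mathrm{Var}(\hat Y_s)\le c\bar t\,q^2\rho^2\sum p^2$ (since $c\ge 1$) holds as well. Therefore
\[
\mathrm{corr}(\hat Y_s,Y_s)\ \ge\ \frac{(\bar t/c)\,q^2\rho^2\sum p^2}{\sqrt{(c\bar t)(c\bar t)}\,q^2\rho^2\sum p^2}\ =\ \frac{1}{c^2},
\]
which is the claimed inequality.

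The only subtle point — and the place I expect to spend the most care — is justifying the coupling between the two expressions. If one instead reinterpreted the $X^{(j)}$'s in $\hat Y_s$ as a fresh copy, the correlation statement would be empty; the proof therefore hinges on reading the paper's notation as identifying the shared i.i.d.\ indices, so that configurations with $t\neq \bar t$ contribute either extra unpaired summands (to one of the two expressions) or fully overlapping ones (to both). Everything else is routine Gaussian algebra, and the slack between the sharper constant $1/c^{3/2}$ obtainable by using $\mathrm{Var}(\hat Y_s)=\bar t\,q^2\rho^2\sum p^2$ exactly and the paper's bound $1/c^2$ simply reflects the looser step of bounding $\mathrm{Var}(\hat Y_s)$ by $c\bar t\,q^2\rho^2\sum p^2$ so that the two variance factors absorb symmetrically.
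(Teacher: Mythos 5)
Your proposal is correct and follows essentially the same route as the paper: both compute $\mathbb{E}[\hat{Y}_sY_s]$ as a sum of $\min\bigl(\Psi/s^H,\,t_{i_1,\dots,i_H}\bigr)\prod_k w_{i_k}^2$ terms under the shared-$X$ coupling, compute the two standard deviations, and then invoke the uniformity bound to conclude $corr(\hat{Y}_s,Y_s)\geq 1/c^2$ (you merely spell out the final inequality, and correctly note the slightly sharper $1/c^{3/2}$ available by using $\mathrm{Var}(\hat{Y}_s)$ exactly).
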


\paragraph{Spherical constraint}
We finally assume that for some positive constant $\mathcal{C}$ weights satisfy the spherical condition 
\begin{equation}
\frac{1}{\Lambda}\sum_{i=1}^{\Lambda}w_i^2 = \mathcal{C}.
\label{eq:befspherical}
\end{equation}

Next we will consider two frequently used loss functions, absolute loss and hinge loss, where we approximate $Y_N$ (recall $Y_N := \mathbb{E}_A[Y]$) with $\hat{Y}$. 

\subsection{Loss function as a $H$-spin spherical spin-glass model}
\label{subsec:LFSG}

Let $\mathcal{L}^a_{\Lambda,H}(w)$ and $\mathcal{L}^h_{\Lambda,H}(w)$ be the (random) absolute loss and (random) hinge loss that we define as follows
\[\mathcal{L}^a_{\Lambda,H}({\bm w}) = \mathbb{E}_A[\:|Y_t - Y|\:]
\]
and
\[\mathcal{L}^h_{\Lambda,H}({\bm w}) = \mathbb{E}_A[\max(0,1-Y_tY)],
\]
where $Y_t$ is a random variable corresponding to the true data labeling that takes values $-S$ or $S$ in case of the absolute loss, where $S = \sup_{\bf{w}}{\hat{Y}}$, and $-1$ or $1$ in case of the hinge loss. Also note that in the case of the hinge loss $\max$ operator can be modeled as Bernoulli random variable, which we assume is \textit{independent} of $\hat{Y}$. Given that one can show that after approximating $\mathbb{E}_A[Y]$ with $\hat{Y}$ both losses can be generalized to the following expression
\[\mathcal{L}_{\!\Lambda,H}({\bm {\tilde{w}}}) = \mathcal{C}_1 + \mathcal{C}_2q\sum_{i_1,i_2,\dots,i_H=1}^{\Lambda}X_{i_1,i_2,\dots,i_H}\prod_{k = 1}^{H}\tilde{w}_{i_k},
\]
and $\mathcal{C}_1$, $\mathcal{C}_2$ are some constants and weights $\tilde{w}$ are simply scaled weights $w$ satisfying $\frac{1}{\Lambda}\sum_{i=1}^{\Lambda}\tilde{w}_i^2 = 1$. In case of the absolute loss the term $Y_t$ is incorporated into the term $\mathcal{C}_1$, and in case of the hinge loss it vanishes (note that $\hat{Y}$ is a symmetric random quantity thus multiplying it by $Y_t$ does not change its distribution). We skip the technical details showing this equivalence, and defer them to the Supplementary material. Note that after simplifying the notation by i) dropping the letter accents and simply denoting $\tilde{w}$ as $w$, ii) skipping constants $\mathcal{C}_1$ and $\mathcal{C}_2$ which do not matter when minimizing the loss function, and iii) substituting $q = \frac{1}{\Psi^{(H-1)/2H}} = \frac{1}{\Lambda^{(H-1)/2}}$, we obtain the Hamiltonian of the $H$-spin spherical spin-glass model of Equation~\ref{eq:spinglass} with spherical constraint captured in Equation~\ref{eq:spherical}. 

\section{Theoretical results}
\label{sec:theory}

In this section we use the results of the theoretical analysis of the complexity of spherical spin-glass models of~\cite{AAC2010} to gain an understanding of the optimization of strongly non-convex loss functions of neural networks. These results show that for high-dimensional (large $\Lambda$) spherical spin-glass models the lowest critical values of the Hamiltonians of these models form a layered structure and are located in a well-defined band lower-bounded by the global minimum. Simultaneously, the probability of finding them outside the band diminishes exponentially with the dimension of the spin-glass model. We next present the details of these results in the context of neural networks. We first introduce the notation and definitions. 
\begin{definition}
Let $u \in \mathbb{R}$ and $k$ be an integer such that $0 \leq k < \Lambda$. We will denote as $\mathcal{C}_{\Lambda,k}(u)$ a random number of critical values of $\mathcal{L}_{\Lambda,H}({\bm w})$ in the set $\Lambda B = \{\Lambda X:x\in (-\infty,u)\}$ with index\footnote{The number of negative eigenvalues of the Hessian $\nabla^2\mathcal{L}_{\Lambda,H}$ at ${\bm w}$ is also called index of $\nabla^2\mathcal{L}_{\Lambda,H}$ at ${\bm w}$.} equal to $k$. Similarly we will denote as $\mathcal{C}_{\Lambda}(B)$ a random total number of critical values of $\mathcal{L}_{\Lambda,H}(w)$.
\end{definition}
Later in the paper by critical values of the loss function that have non-diverging (fixed) index, or low-index, we mean the ones with index non-diverging with $\Lambda$.

\paragraph{The existence of the band of low-index critical points}
One can directly use Theorem 2.12 in~\cite{AAC2010} to show that for large-size networks (more precisely when $\Lambda \rightarrow \infty$ but recall that $\Lambda \rightarrow \infty$ iff $N \rightarrow \infty$) it is improbable to find a critical value below certain level $-\Lambda E_0(H)$ (which we call the \textit{ground state}), where $E_0(H)$ is some real number. 

Let us also  introduce the number that we will refer to as $E_{\infty}$. We will refer to this important threshold as the \textit{energy barrier} and define it as
\[E_{\infty} = E_{\infty}(H) = 2\sqrt{\frac{H-1}{H}}.
\]
Theorem 2.14 in~\cite{AAC2010} implies that for large-size networks all critical values of the loss function that are of non-diverging index must lie below the threshold $-\Lambda E_{\infty}(H)$. Any critical point that lies above the energy barrier is a high-index saddle point with overwhelming probability. Thus for large-size networks all critical values of the loss function that are of non-diverging index must lie in the band $\left(-\Lambda E_0(H),-\Lambda E_{\infty}(H)\right)$.

\begin{figure*}[htp!]
  \center
\includegraphics[width = 1.65in]{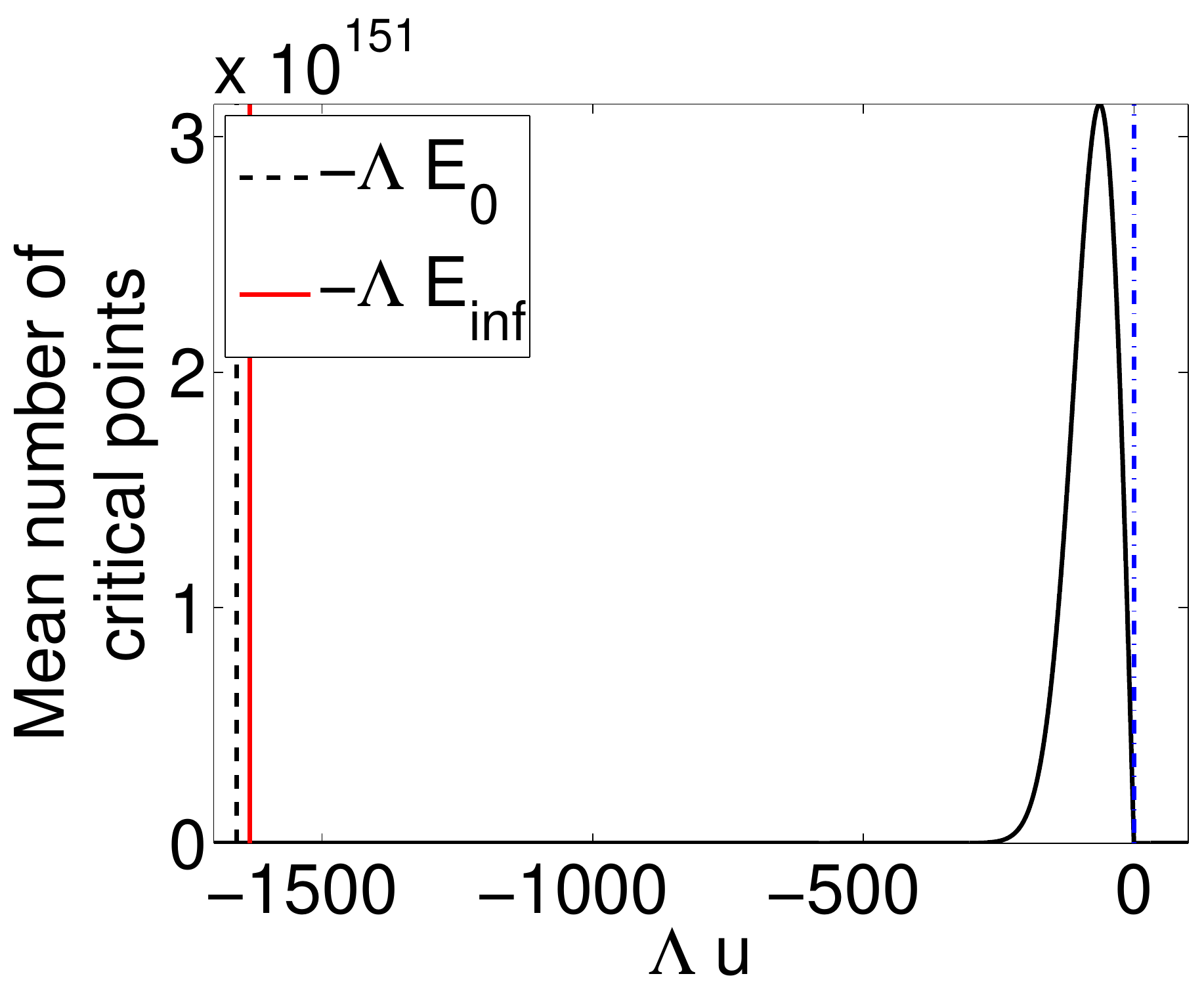}
\includegraphics[width = 1.65in]{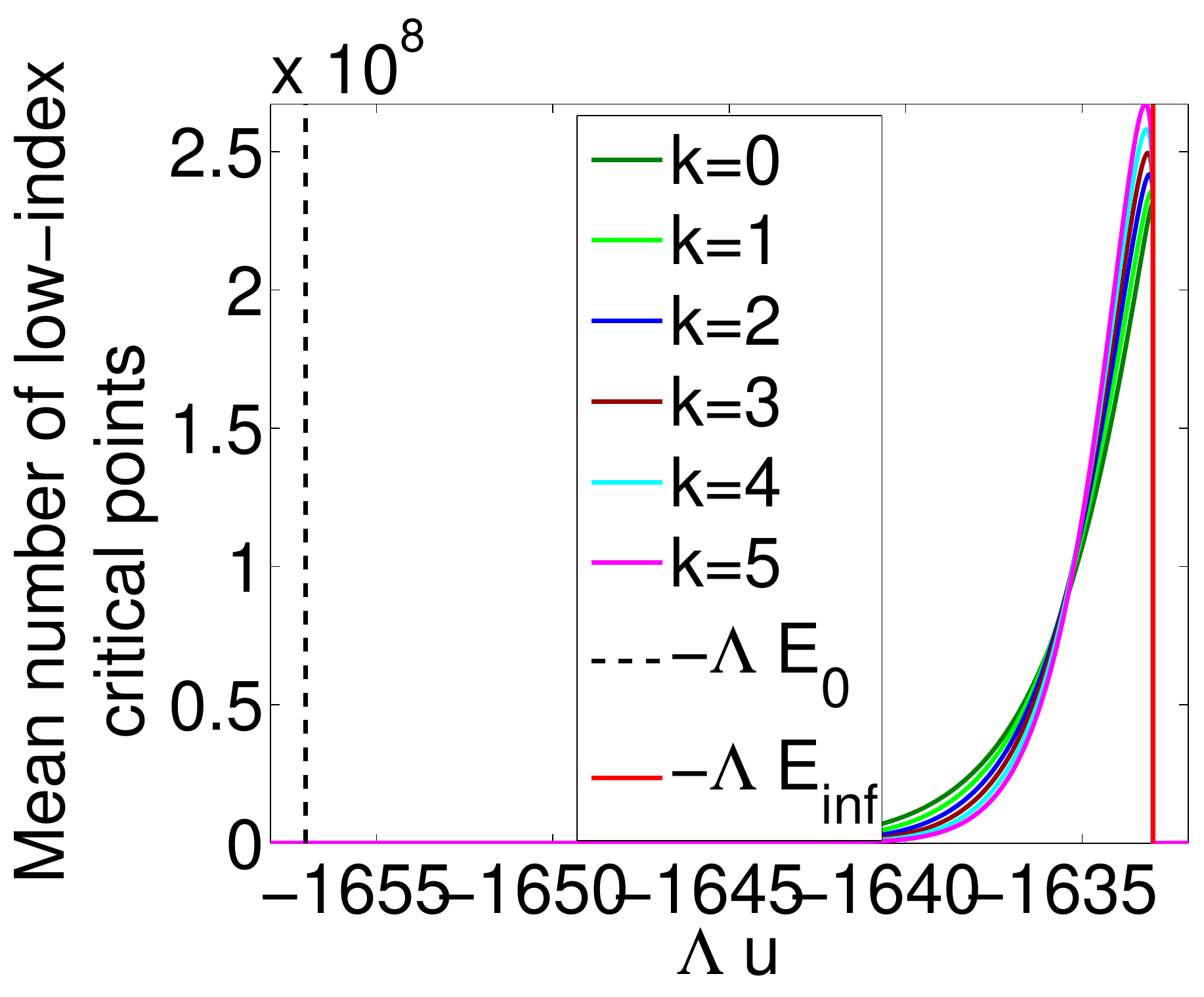} 
\includegraphics[width = 1.65in]{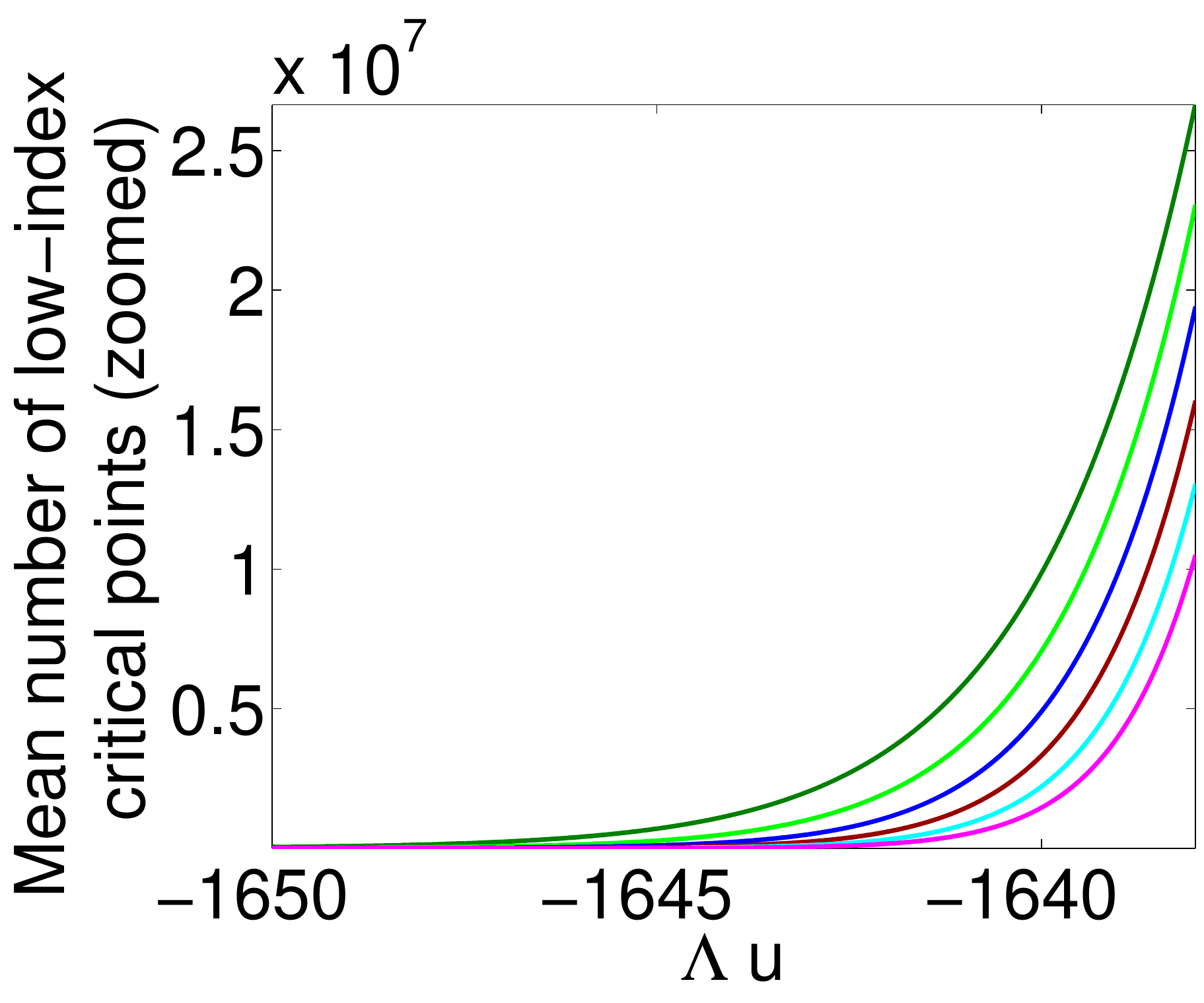} 
\includegraphics[width = 1.65in]{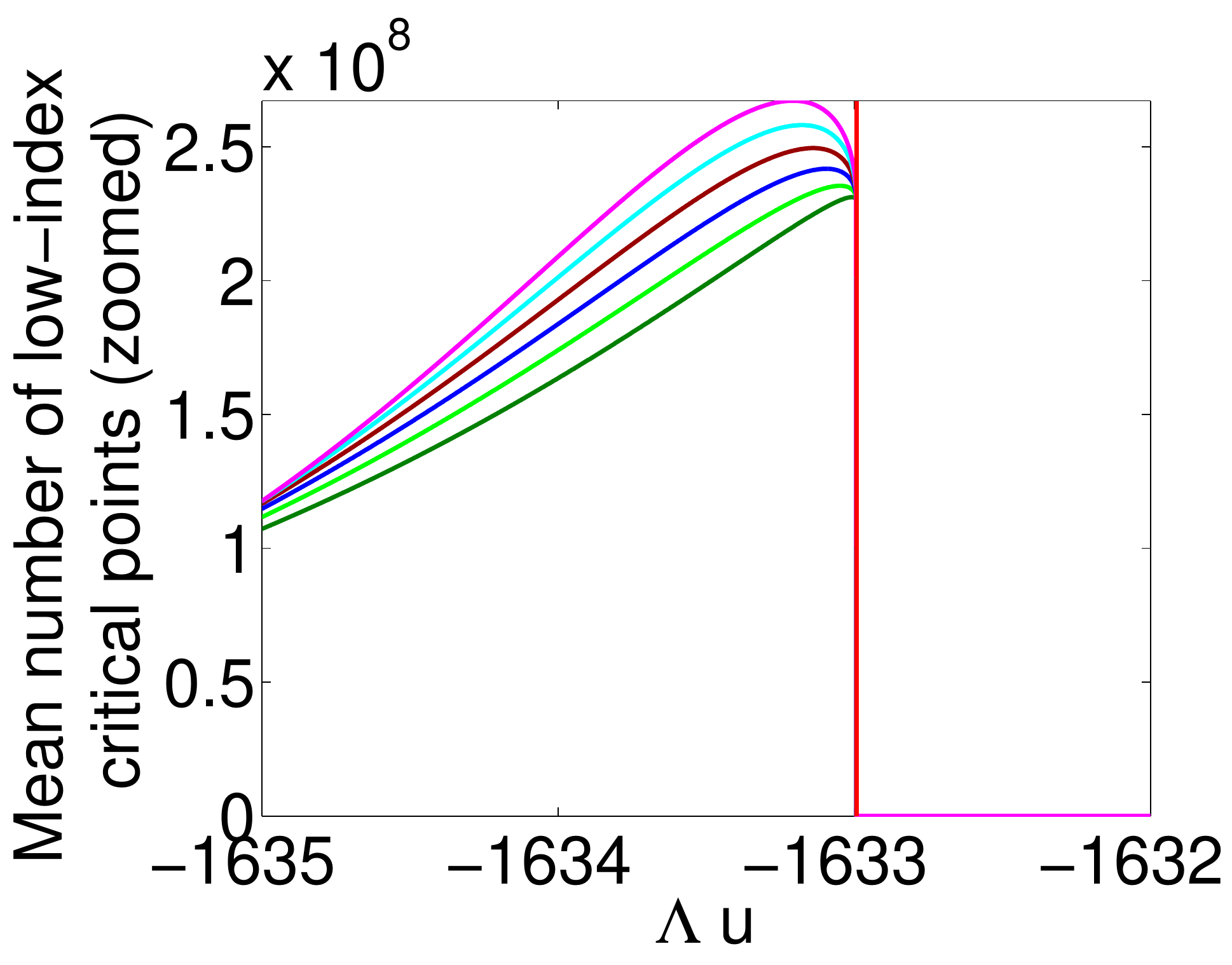} 
\vspace{-0.3in}
\caption{Distribution of the mean number of critical points, local minima and low-index saddle points (original and zoomed). Parameters $H$ and $\Lambda$ were set to $H = 3$ and $\Lambda = 1000$. Black line: $u = -\Lambda E_0(H)$, red line: $u = -\Lambda E_{\infty}(H)$. Figure must be read in color.}
\label{fig:Distr_cp_lm_sp}
\vspace{-0.15in}
\end{figure*}

\paragraph{Layered structure of low-index critical points}
From Theorem 2.15 in~\cite{AAC2010} it follows that for large-size networks finding a critical value with index larger or equal to $k$ (for any fixed integer $k$) below energy level $-\Lambda E_k(H)$ is improbable, where $-E_k(H) \in [-E_0(H),-E_{\infty}(H)]$. Furthermore, the sequence $\{E_k(H)\}_{k\in\mathbb{N}}$ is strictly decreasing and converges to $E_{\infty}$ as $k \rightarrow \infty$~\cite{AAC2010}. 

These results unravel a layered structure for the lowest critical values of the loss function of a large-size network, where with overwhelming probability the critical values above the global minimum (ground state) of the loss function are local minima exclusively. Above the band ($\left(-\Lambda E_0(H),-\Lambda E_1(H)\right)$) containing only local minima (critical points of index $0$), there is another one, ($\left(-\Lambda E_1(H),-\Lambda E_2(H)\right)$), where one can only find local minima and saddle points of index $1$, and above this band there exists another one, ($\left(-\Lambda E_2(H),-\Lambda E_3(H)\right)$), where one can only find local minima and saddle points of index $1$ and $2$, and so on. 

\paragraph{Logarithmic asymptotics of the mean number of critical points}
We will now define two non-decreasing, continuous functions on $\mathbb{R}$, $\Theta_H$ and $\Theta_{k,H}$ (their exemplary plots are captured in Figure~\ref{fig:Thetas}).
\[\Theta_H(u) = \left \{
  \begin{tabular}{c}
  $\!\!\!\!\frac{1}{2}\log(H\!-\!1) \!-\! \frac{(H-2)u^2}{4(H-1)} \!-\! I(u)$ $\:\:\:$if$\:\:$$u \!\leq\! -E_{\infty}$\\
  $\!\!\!\!\!\frac{1}{2}\log(H\!-\!1) \!-\! \frac{(H-2)u^2}{4(H-1)}$ $\:\:\:\:\:\:$if$\:\:$$-E_{\infty} \!\leq\! u \leq 0$\\
  $\!\!\!\!\!\!\!\!\!\!\!\!\!\!\!\frac{1}{2}\log(H-1)$ $\:\:\:\:\:\:\:\:\:\:\:\:\:\:\:\:\:\:\:\:\:\:\:\:\:\:\:\:\:\:\:$if$\:\:$$0 \!\leq\! u$
  \end{tabular}
\right.,
\]
and for any integer $k \geq 0$:
\[\Theta_{k,H}(u) \!=\! \left \{
  \begin{tabular}{c}
  $\!\!\!\!\!\frac{1}{2}\!\log(\!H\!-\!1\!) \!-\! \frac{(H\!-\!2)u^2}{4(\!H-1\!)} \!-\! (k\!+\!1)I(u)$ if$\:$$u \!\!\leq\!\! -E_{\infty}$\\
  $\!\!\!\!\frac{1}{2}\log(\!H\!-\!1\!) \!-\! \frac{H\!-\!2}{4(\!H-1\!)}$ $\:\:\:\:\:\:\:\:\:\:\:\:\:\:\:\:\:\:\:\:\:\:\:\:$if$\:$$u \!\!\geq\!\! -E_{\infty}$
  \end{tabular}
\right.
\]
where 
\[I(u) = -\frac{u}{E_{\infty}^2}\sqrt{u^2 \!-\! E_{\infty}^2} - \log(-u + \sqrt{u^2 \!-\! E_{\infty}^2}) + \log E_{\infty}.
\] 

\begin{figure}[h]
  \center
\includegraphics[width = 1.6in]{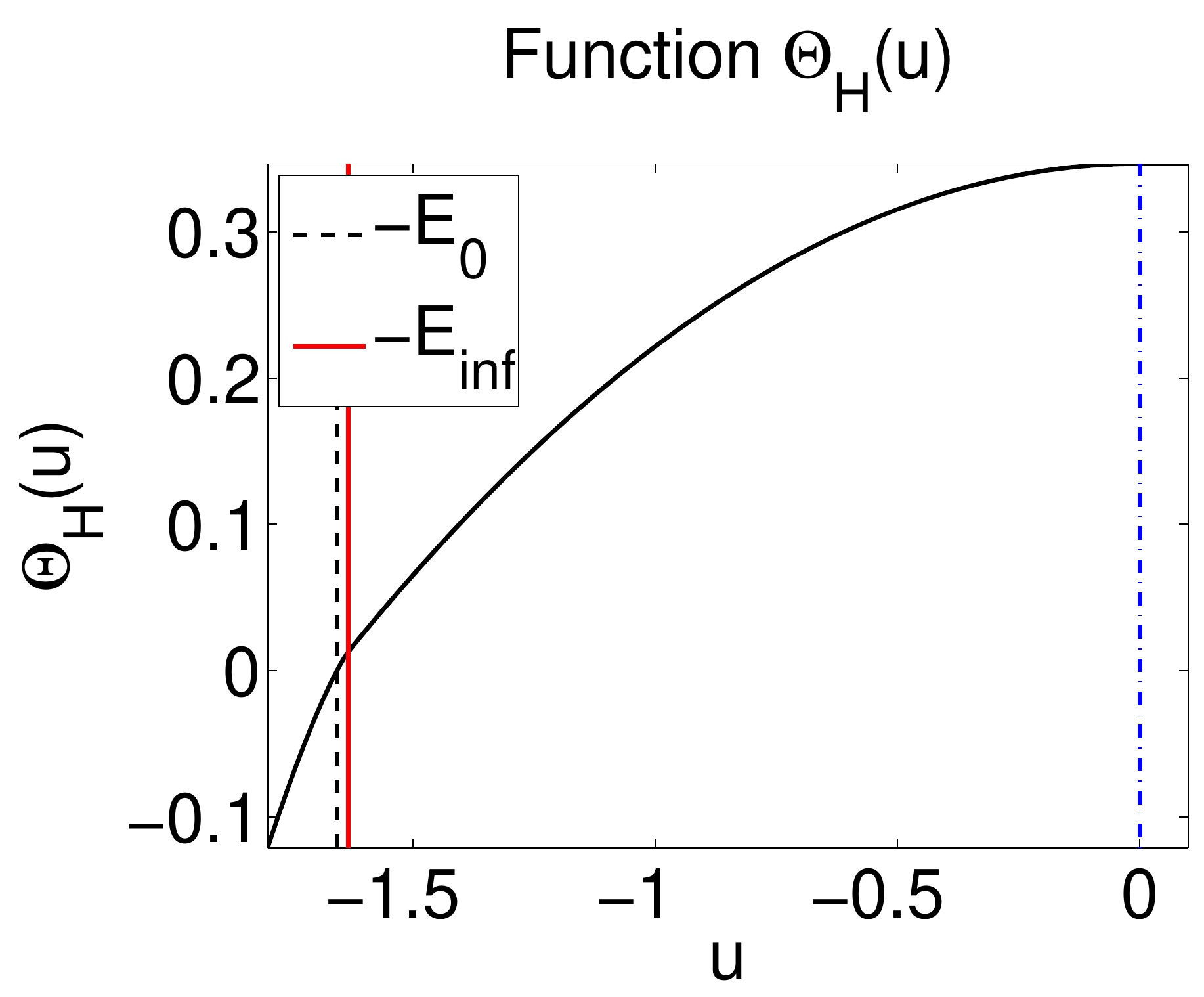}
\includegraphics[width = 1.6in]{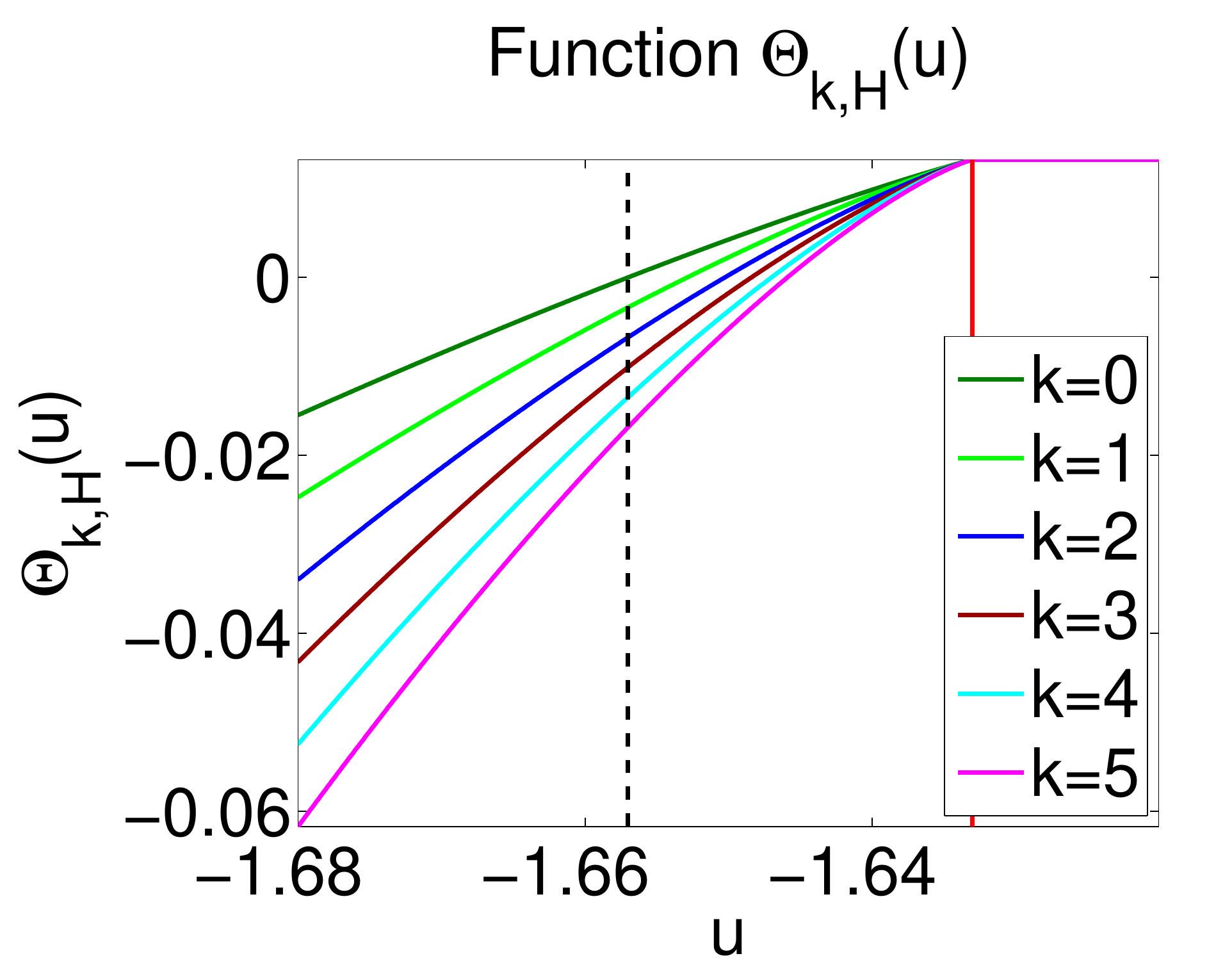}
\vspace{-0.3in}
\caption{Functions $\Theta_H(u)$ and $\Theta_{k,H}(u)$ for $k = \{0,1,\dots,5\}$. Parameter $H$ was set to $H = 3$. Black line: $u = -E_0(H)$, red line: $u = -E_{\infty}(H)$. Figure must be read in color.}
\label{fig:Thetas}
\vspace{-0.1in}
\end{figure}

Also note that the following corollary holds.

\begin{corollary}
For all $k > 0$ and $u < -E_{\infty}$, $\Theta_{k,H}(u) < \Theta_{0,H}(u)$.
\label{cor:Theta}
\end{corollary}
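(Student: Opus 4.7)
}

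The plan is to reduce the inequality to a single-variable statement about $I(u)$. First I would observe that both $u$ and $-E_\infty$ satisfy the condition $u \leq -E_\infty$, so both $\Theta_{k,H}(u)$ and $\Theta_{0,H}(u)$ are evaluated using the top branch of their piecewise definitions. Subtracting the two expressions, the $\frac{1}{2}\log(H-1)$ and $-\frac{(H-2)u^2}{4(H-1)}$ terms cancel, leaving
\[
\Theta_{k,H}(u) - \Theta_{0,H}(u) = -(k+1)I(u) - (-I(u)) = -k\, I(u).
\]
So the corollary is equivalent to proving $I(u) > 0$ for all $u < -E_\infty$, since $k > 0$.

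Next I would establish $I(u) > 0$ for $u < -E_\infty$ by a boundary-value-plus-monotonicity argument. At the boundary $u = -E_\infty$, the square-root term vanishes and $\log(-u) = \log E_\infty$, giving $I(-E_\infty) = 0$. To control the sign away from the boundary it is convenient to change variable to $t = -u$, so $t > E_\infty > 0$, and rewrite
\[
I(u) = \frac{t}{E_\infty^2}\sqrt{t^2 - E_\infty^2} - \log\!\bigl(t + \sqrt{t^2 - E_\infty^2}\bigr) + \log E_\infty .
\]
Differentiating in $t$ and using the identity
\[
\frac{1 + t/\sqrt{t^2 - E_\infty^2}}{t + \sqrt{t^2 - E_\infty^2}} = \frac{1}{\sqrt{t^2 - E_\infty^2}},
\]
the three terms combine over the common denominator $E_\infty^2\sqrt{t^2-E_\infty^2}$ into
\[
\frac{dI}{dt} = \frac{2(t^2 - E_\infty^2)}{E_\infty^2 \sqrt{t^2 - E_\infty^2}} = \frac{2\sqrt{t^2 - E_\infty^2}}{E_\infty^2} > 0
\]
for every $t > E_\infty$. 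Hence $I$, viewed as a function of $t = -u$, is strictly increasing on $(E_\infty, \infty)$, vanishes at $t = E_\infty$, and is therefore strictly positive on $(E_\infty, \infty)$; equivalently, $I(u) > 0$ for every $u < -E_\infty$.

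Combining the two steps gives $\Theta_{k,H}(u) - \Theta_{0,H}(u) = -k\, I(u) < 0$, which is exactly the claim. The only mildly non-trivial part is the derivative computation, and even there the identity above makes the simplification immediate; conceptually, $I$ is just a standard Legendre-type rate function associated with the semicircle law (which is also why it vanishes at the edge $-E_\infty$ and grows strictly outside the bulk), so its positivity off the support is expected and the calculation merely verifies it explicitly.
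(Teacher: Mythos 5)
Your proposal is correct, and in fact the paper states Corollary~\ref{cor:Theta} without any proof, so you are supplying exactly the argument that is implicitly assumed: for $u<-E_\infty$ both functions are on their first branch, the difference collapses to $\Theta_{k,H}(u)-\Theta_{0,H}(u)=-k\,I(u)$, and positivity of $I$ strictly below $-E_\infty$ follows from $I(-E_\infty)=0$ together with the derivative computation $\frac{dI}{dt}=\frac{2\sqrt{t^2-E_\infty^2}}{E_\infty^2}>0$ in the variable $t=-u$, which I have checked and is right. Nothing is missing; this is the natural (and essentially the only) route to the statement.
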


Next we will show the logarithmic asymptotics of the mean number of critical points (the asymptotics of the mean number of critical points can be found in the Supplementary material). 
\begin{theorem}[\cite{AAC2010}, Theorem 2.5 and 2.8]
For all $H \geq 2$
\[\lim_{\Lambda \rightarrow \infty}\frac{1}{\Lambda}\log\mathbb{E}[\mathcal{C}_{\Lambda}(u)] = \Theta_{H}(u).
\]
and for all $H \geq 2$ and $k \geq 0$ fixed
\[\lim_{\Lambda \rightarrow \infty}\frac{1}{\Lambda}\log\mathbb{E}[\mathcal{C}_{\Lambda,k}(u)] = \Theta_{k,H}(u).
\]
\label{thm:lmsp}
\end{theorem}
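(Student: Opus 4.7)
The plan is to apply the Kac--Rice formula to the centered Gaussian random field $\mathcal{L}_{\Lambda,H}$ on the sphere $S^{\Lambda-1}(\sqrt{\Lambda})$, reducing the expected count of critical points to a quantitative large-deviations problem about the spectrum of a shifted GOE matrix. The covariance of $\mathcal{L}_{\Lambda,H}$ depends only on the inner product of weight vectors, so by rotational invariance all local densities are constant in $w$ and one can work at an arbitrary basepoint. Writing $\nabla$ and $\nabla^2$ for the intrinsic gradient and Hessian on the sphere,
\[
\mathbb{E}[\mathcal{C}_{\Lambda,k}(u)] = \int_{S^{\Lambda-1}(\sqrt{\Lambda})} \mathbb{E}\bigl[|\det \nabla^2 \mathcal{L}(w)|\,\mathbf{1}_{\{\mathcal{L}(w)\le \Lambda u,\, \mathrm{ind}(\nabla^2\mathcal{L})=k\}} \bigm| \nabla\mathcal{L}(w)=0\bigr]\,p_{\nabla\mathcal{L}(w)}(0)\,d\sigma(w),
\]
and after rotational symmetry the spherical volume and the Gaussian normalizations combine to give an exponential prefactor of rate $\tfrac{1}{2}\log(H-1)-\tfrac{(H-2)u^2}{4(H-1)}$.

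Next I would compute the joint Gaussian distribution of $\mathcal{L}(w)$, $\nabla\mathcal{L}(w)$, $\nabla^2\mathcal{L}(w)$ at a single point. A direct covariance calculation using the explicit form of $\mathcal{L}_{\Lambda,H}$ shows that $\nabla\mathcal{L}(w)$ is independent of the pair $(\mathcal{L}(w),\nabla^2\mathcal{L}(w))$, and that conditional on $\nabla\mathcal{L}(w)=0$ and $\mathcal{L}(w)=\Lambda x$ the tangential Hessian equals in law
\[
\nabla^2\mathcal{L}(w) \stackrel{d}{=} c_1(H)\sqrt{\Lambda}\bigl(G_{\Lambda-1}-c_2(H)\,x\,I_{\Lambda-1}\bigr),
\]
where $G_{\Lambda-1}$ is a GOE matrix normalized so that its empirical spectral measure converges to the semicircle law on $[-E_\infty,E_\infty]$. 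This distributional identity is what converts the Kac--Rice integral into a random-matrix computation: the integrand collapses to
\[
A_\Lambda\int_{-\infty}^{u}\mathbb{E}\bigl[|\det(G_{\Lambda-1}-c\,x I)|\,\mathbf{1}_{\{\mathrm{ind}(G_{\Lambda-1}-cxI)=k\}}\bigr]\,e^{-\Lambda x^2/(2\tau_H^2)}\,dx.
\]

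Taking $\tfrac{1}{\Lambda}\log$ and using concentration of the GOE empirical measure to the semicircle law $\mu_{\mathrm{sc}}$ yields
\[
\frac{1}{\Lambda}\log\mathbb{E}|\det(G_{\Lambda-1}-cxI)| \longrightarrow \int \log|\lambda-cx|\,\mu_{\mathrm{sc}}(d\lambda),
\]
and the right-hand side is explicitly computable: it is constant for $|cx|\le E_\infty$ and contributes an additional $-I(x)$ outside that interval, recovering exactly the three-piece expression of $\Theta_H(u)$ after Laplace's method on the $x$-integral (the maximiser sits at the endpoint $x=u$ in the relevant regime). For the index-$k$ count I would further track the large-deviation cost of realising exactly $k$ outlier eigenvalues below the left edge of the semicircle: for $u\le -E_\infty$ the Ben Arous--Dembo--Guionnet rate for the smallest GOE eigenvalue produces an extra factor $\exp(-\Lambda k I(x))$, which combines with the bulk determinantal factor $\exp(-\Lambda I(x))$ to give the $-(k+1)I(u)$ exponent; for $u\ge -E_\infty$ the index-$k$ event happens at a vanishing rate and only the flat regime $\tfrac{1}{2}\log(H-1)-\tfrac{H-2}{4(H-1)}$ survives.

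The main obstacle I anticipate is the second step: obtaining the clean conditional distributional identity between the tangential Hessian and a shifted GOE matrix, with the spherical Lagrange-multiplier term correctly absorbed into the shift. This is the nontrivial input that turns an otherwise intractable Kac--Rice integral into an exactly solvable random-matrix problem, and it hinges on a careful covariance computation for $\mathcal{L}_{\Lambda,H}$ and its first two derivatives. A secondary difficulty is the joint large-deviation bookkeeping for fixed index, since the event $\{\mathrm{ind}=k\}$ constrains the fine structure of several outlier eigenvalues simultaneously and one must show that the outlier and bulk rates genuinely add to $(k+1)I(u)$ without additional entropy corrections.
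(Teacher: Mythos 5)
This theorem is not proved in the paper at all---it is imported verbatim from \cite{AAC2010} (their Theorems 2.5 and 2.8)---and your outline is precisely the strategy of that reference: the Kac--Rice formula on the sphere, the exact conditional identification of the tangential Hessian at a critical point of level $\Lambda x$ with a shifted GOE matrix, the semicircle log-potential asymptotics of $\mathbb{E}\bigl[|\det|\bigr]$, and the large-deviation rates for extreme GOE eigenvalues to handle the fixed-index counts, with Laplace's method yielding the piecewise functions $\Theta_H$ and $\Theta_{k,H}$. So you are reconstructing the cited proof rather than giving a different route; just note that what you wrote is a plan, and the two steps you yourself flag---the covariance computation giving the conditional shifted-GOE law of the Hessian, and the joint (not merely factorized) asymptotics of $\mathbb{E}\bigl[|\det|\,\mathbf{1}_{\{\mathrm{ind}=k\}}\bigr]$ showing the rates genuinely add to $(k+1)I(u)$---are exactly the technical core carried out in \cite{AAC2010} and would still have to be supplied to turn the sketch into a proof.
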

From Theorem~\ref{thm:lmsp} and Corollary~\ref{cor:Theta} the number of critical points in the band $\left(-\Lambda E_0(H),-\Lambda E_{\infty}(H)\right)$ increases exponentially as $\Lambda$ grows and that local minima dominate over saddle points and this domination also grows exponentially as $\Lambda$ grows. Thus for large-size networks the probability of recovering a saddle point in the band $\left(-\Lambda E_0(H),-\Lambda E_{\infty}(H)\right)$, rather than a local minima, goes to $0$.

Figure~\ref{fig:Distr_cp_lm_sp} captures exemplary plots of the distributions of the mean number of critical points, local minima and low-index saddle points. Clearly local minima and low-index saddle points are located in the band $\left(-\Lambda E_0(H),-\Lambda E_{\infty}(H)\right)$ whereas high-index saddle points can only be found above the energy barrier $-\Lambda E_{\infty}(H)$. Figure~\ref{fig:Distr_cp_lm_sp} also reveals the layered structure for the lowest critical values of the loss function\footnote{The large mass of saddle points above $-\Lambda E_{\infty}$ is a consequence of Theorem~\ref{thm:lmsp} and the properties of $\Theta$ functions.}. This 'geometric' structure plays a crucial role in the optimization problem. The optimizer, e.g. SGD, easily avoids the band of high-index critical points, which have many negative curvature directions, and descends to the band of low-index critical points which lie closer to the global minimum. Thus finding bad-quality solution, i.e. the one far away from the global minimum, is highly unlikely for large-size networks.

\paragraph{Hardness of recovering the global minimum}
Note that the energy barrier to cross when starting from any (local) minimum, e.g. the one from the band $\left(-\Lambda E_i(H),-\Lambda E_{i+1}(H)\right)$, in order to reach the global minimum diverges with $\Lambda$ since it is bounded below by $\Lambda (E_0(H) - E_i(H))$. Furthermore, suppose we are at a local minima with a scaled energy of $-E_{\infty} - \delta$. 
In order to find a further low lying minimum we must pass through a saddle point. 
Therefore we must go up at least to the level where there is an equal amount of saddle points to have a decent chance of finding a path that might possibly take us to another local minimum. 
This process takes an exponentially long time so in practice finding the global minimum is not feasible.

Note that the variance of the loss in Equation \ref{eq:spinglass} is $\Lambda$ which suggests that the extensive quantities should scale with $\Lambda$. 
In fact this is the reason behind the scaling factor in front of the summation in the loss.
The relation to the logarithmic asymptotics is as follows: the number of critical values of the loss below the level $\Lambda u$ is roughly $e^{\Lambda\Theta_H (u)}$. 
The gradient descent gets trapped roughly at the barrier denoted by $-\Lambda E_{\infty}$, as will be shown in the experimental section. 

\begin{figure*}
  \center
\includegraphics[width = 3.0in]{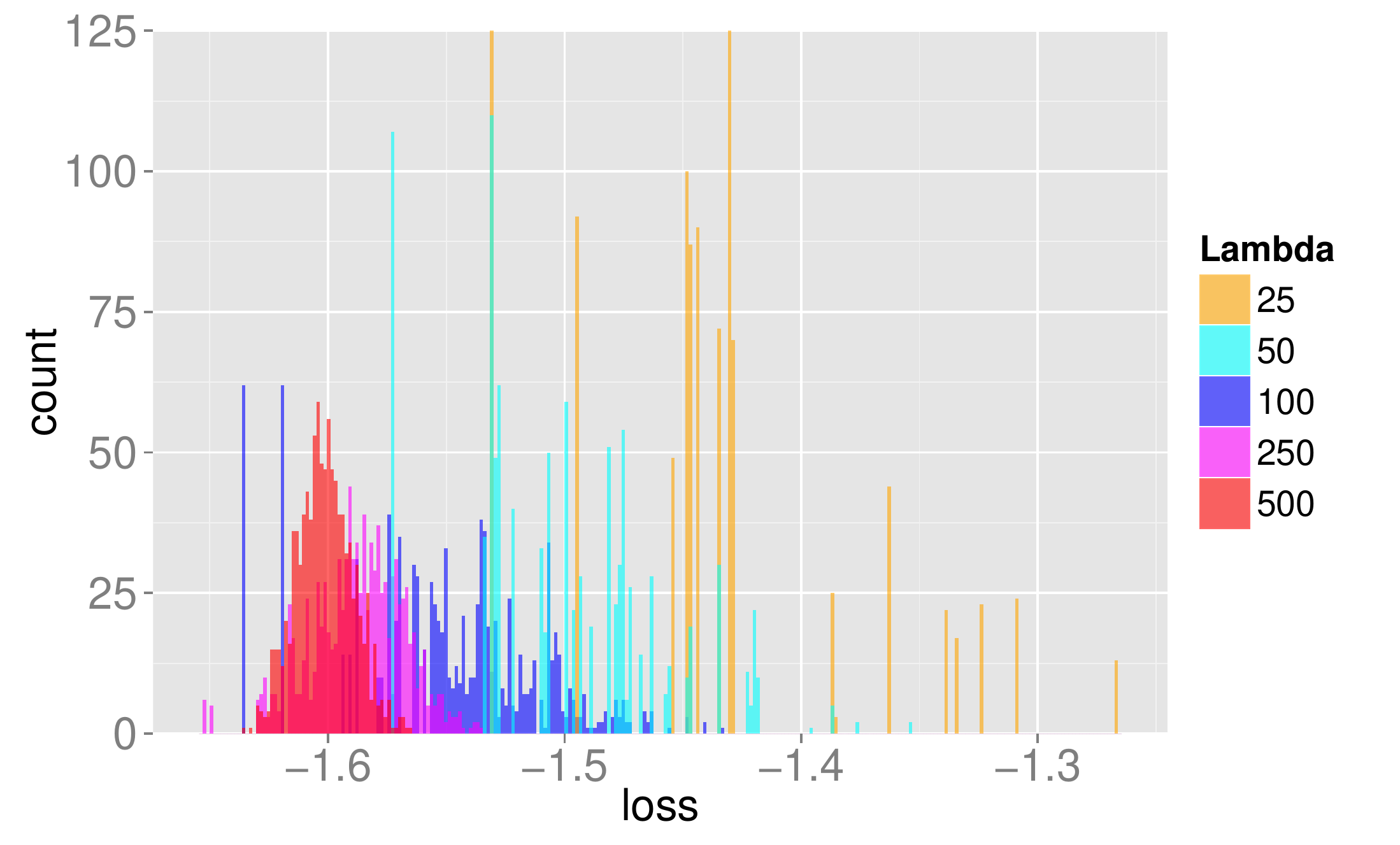}
\includegraphics[width = 3.0in]{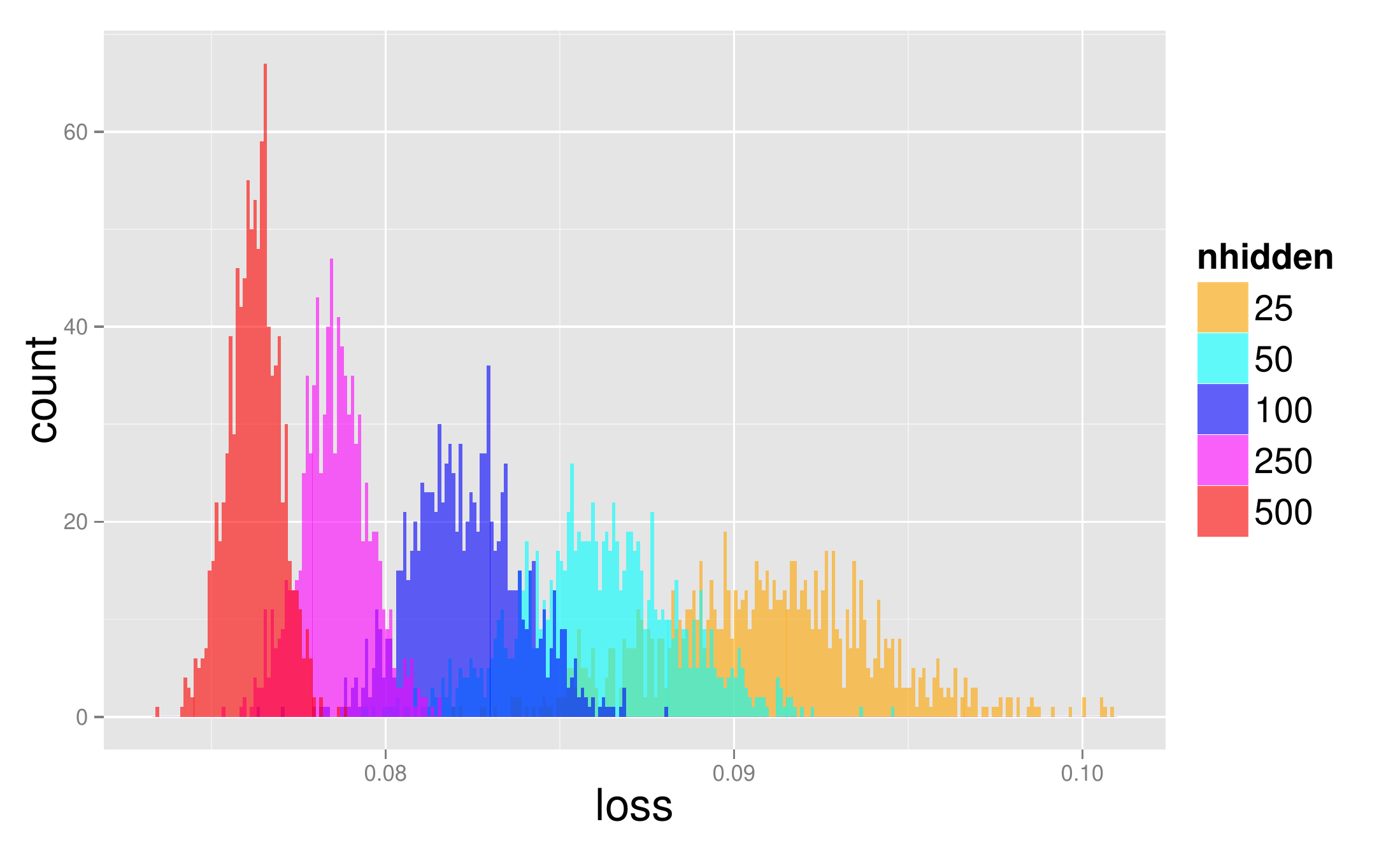} 
\vspace{-0.15in}
\caption{Distributions of the scaled test losses for the spin-glass (left) and the neural network (right) experiments.}
\label{fig:spinglass_and_mnist}
\vspace{-0.17in}
\end{figure*}

\section{Experiments}
\label{sec:Experiments}

The theoretical part of the paper considers the problem of training the neural network, whereas the empirical results focus on its generalization properties.

\subsection{Experimental Setup}

\paragraph{Spin-Glass}

To illustrate the theorems in Section 4, we conducted spin-glass simulations for different dimensions $\Lambda$ from 25 to 500. 
For each value of $\Lambda$, we obtained an estimate of the distribution of minima by sampling 1000 initial points on the unit sphere and performing stochastic gradient descent (SGD) to find a minimum energy point.
Note that throughout this section we will refer to the energy of the Hamiltonian of the spin-glass model as its loss.

\paragraph{Neural Network}

We performed an analogous experiment on a scaled-down version of MNIST, where each image was downsampled to size $10 \times 10$.
Specifically, we trained 1000 networks with one hidden layer and $n_1 \in \{25,50,100,250,500 \}$ hidden units (in the paper we also refer to the number of hidden units as \textit{nhidden}), each one starting from a random set of parameters sampled uniformly within the unit cube. 
All networks were trained for 200 epochs using SGD with learning rate decay.  

To verify the validity of our theoretical assumption of parameter redundancy, we also trained a neural network on a subset of MNIST using simulated annealing (SA) where $95\%$ of parameters were assumed to be redundant.
Specifically, we allowed the weights to take one of $3$ values uniformly spaced in the interval $[-1,1]$. 
We obtained less than $2.5\%$ drop in accuracy, which demonstrates the heavy over-parametrization of neural networks as discussed in Section \ref{sec:NNSG}.

\paragraph{Index of critical points}
It is necessary to verify that our solutions obtained through SGD are low-index critical points rather than high-index saddle points of poor quality. As observed by \cite{DBLP:journals/corr/DauphinPGCGB14} certain optimization schemes have the potential to get trapped in the latters. We ran two tests to ensure that this was not the case in our experimental setup. 
First, for $n_1=\{10,25,50,100\}$ we computed the eigenvalues of the Hessian of the loss function at each solution and computed the index. 
All eigenvalues less than 0.001 in magnitude were set to 0. Figure~\ref{fig:index_distr} captures an exemplary distribution of normalized indices, which is the proportion of negative eigenvalues, for $n_1 = 25$ (the results for $n_1=\{10,50,100\}$ can be found in the Supplementary material). It can be seen that all solutions are either minima or saddle points of very low normalized index (of the order 0.01). Next, we compared simulated annealing to SGD on a subset of MNIST. Simulated annealing does not compute gradients and thus does not tend to become trapped in high-index saddle points. We found that SGD performed at least as well as simulated annealing, which indicates that becoming trapped in poor saddle points is not a problem in our experiments. The result of this comparison is in the Supplementary material. All figures in this paper should be read in color.
\begin{figure}[h]
\vspace{-0.1in}
\includegraphics[trim=0cm 0cm 0cm 2.4cm,clip,scale=0.4]{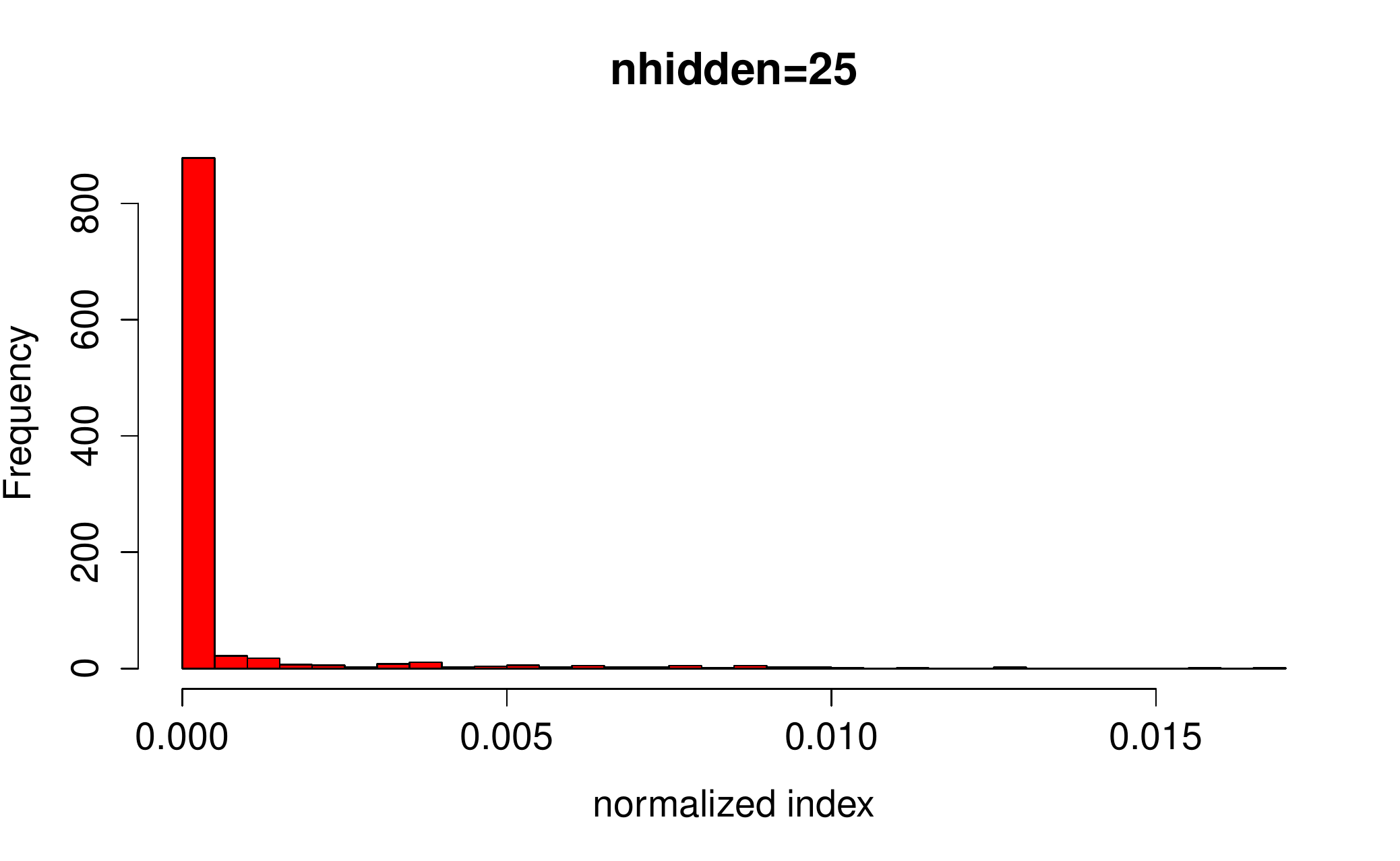}
\vspace{-0.35in}
\caption{Distribution of normalized index of solutions for 25 hidden units.}
\label{fig:index_distr}
\vspace{-0.25in}
\end{figure} 

\paragraph{Scaling loss values}
To observe qualitative differences in behavior for different values of $\Lambda$ or $n_1$, it is necessary to rescale the loss values to make their expected values approximately equal.
For spin-glasses, the expected value of the loss at critical points scales linearly with $\Lambda$, therefore we divided the losses by $\Lambda$ (note that this normalization is in the statement of Theorem \ref{thm:lmsp}) which gave us the histogram of points at the correct scale. 
For MNIST experiments, we empirically found that the loss with respect to number of hidden units approximately follows an exponential power law: $ \mathbb{E}[L] \propto e^{\alpha n_1^{\beta}}$.
We fitted the coefficients $\alpha,\beta$ and scaled the loss values to $L/e^{\alpha n_1^{\beta}}$.


\subsection{Results}
Figure 3 shows the distributions of the scaled test losses for both sets of experiments. 
For the spin-glasses (left plot), we see that for small values of $\Lambda$, we obtain poor local minima on many experiments, while for larger values of $\Lambda$ the distribution becomes increasingly concentrated around the energy barrier where local minima have high quality. 
We observe that the left tails for all $\Lambda$ touches the barrier that is hard to penetrate and as $\Lambda$ increases the values concentrate around $-E_{\infty}$. In fact this concentration result has long been predicted but not proved until \cite{AAC2010}. We see that qualitatively the distribution of losses for the neural network experiments (right plot) exhibits similar behavior.
Even after scaling, the variance decreases with higher network sizes. This is also clearly captured in Figure~\ref{fig:spinglass_and_mnist2} and~\ref{fig:mnist_statistics1} in the Supplementary material. This indicates that getting stuck in poor local minima is a major problem for smaller networks but becomes gradually of less importance as the network size increases.
This is because critical points of large networks exhibit the layered structure where high-quality low-index critical points lie close to the global minimum.

\subsection{Relationship between train and test loss}

\begin{table}[h]
\centering
\begin{tabular}{|c|c|c|c|c|c|}
\hline
 $n_1$ & 25 & 50 & 100 & 250 & 500 \\
 \hline
 $\rho$ & 0.7616 & 0.6861 & 0.5983 & 0.5302 & 0.4081 \\
\hline
\end{tabular}
\caption{Pearson correlation between training and test loss for different numbers of hidden units.}
\label{tab:correlations}
\vspace{-0.05in}
\end{table}
The theory and experiments thus far indicate that minima lie in a band which gets smaller as the network size increases. 
This indicates that computable solutions become increasingly equivalent with respect to training error, but how does this relate to error on the test set? 
To determine this, we computed the correlation $\rho$ between training and test loss for all solutions for each network size. The results are captured in Table~\ref{tab:correlations} and Figure~\ref{fig:correlations} (the latter is in the Supplementary material). The training and test error become increasingly decorrelated as the network size increases. This provides further indication that attempting to find the absolute possible minimum is of limited use with regards to generalization performance. 

\section{Conclusion}
\label{sec:ConandFutWork}

This paper establishes a connection between the neural network and the spin-glass model. We show that under certain assumptions, the loss function of the fully decoupled large-size neural network of depth $H$ has similar landscape to the Hamiltonian of the $H$-spin spherical spin-glass model. 
We empirically demonstrate that both models studied here are highly similar in real settings, despite the presence of variable dependencies in real networks. 
To the best of our knowledge our work is one of the first efforts in the literature to shed light on the theory of neural network optimization.

\subsubsection*{Acknowledgements}
The authors thank L. Sagun and the referees for valuable feedback.

\bibliographystyle{apalike}
\bibliography{PAPER_AMMGY}

\begin{thebibliography}{}

\bibitem[Amit et~al., 1985]{PhysRevA.32.1007}
Amit, D.~J., Gutfreund, H., and Sompolinsky, H. (1985).
\newblock Spin-glass models of neural networks.
\newblock {\em Phys. Rev. A}, 32:1007--1018.

\bibitem[Auffinger and {Ben Arous}, 2013]{AAC2013}
Auffinger, A. and {Ben Arous}, G. (2013).
\newblock Complexity of random smooth functions on the high-dimensional sphere.
\newblock {\em arXiv:1110.5872}.

\bibitem[Auffinger et~al., 2010]{AAC2010}
Auffinger, A., {Ben Arous}, G., and Cerny, J. (2010).
\newblock Random matrices and complexity of spin glasses.
\newblock {\em arXiv:1003.1129}.

\bibitem[Baldi and Hornik, 1989]{Baldi:1989:NNP:70359.70362}
Baldi, P. and Hornik, K. (1989).
\newblock Neural networks and principal component analysis: Learning from
  examples without local minima.
\newblock {\em Neural Networks}, 2:53--58.

\bibitem[Bottou, 1998]{bottou-98x}
Bottou, L. (1998).
\newblock Online algorithms and stochastic approximations.
\newblock In {\em Online Learning and Neural Networks}. Cambridge University
  Press.

\bibitem[Bray and Dean, 2007]{Bray2007}
Bray, A.~J. and Dean, D.~S. (2007).
\newblock The statistics of critical points of gaussian fields on
  large-dimensional spaces.
\newblock {\em Physics Review Letter}.

\bibitem[Dauphin et~al., 2014]{DBLP:journals/corr/DauphinPGCGB14}
Dauphin, Y., Pascanu, R., G{\"{u}}l{\c{c}}ehre, {\c{C}}., Cho, K., Ganguli, S.,
  and Bengio, Y. (2014).
\newblock Identifying and attacking the saddle point problem in
  high-dimensional non-convex optimization.
\newblock In {\em NIPS}.

\bibitem[De~la Pe\~{n}a and Gin\'{e}, 1999]{opac-b1095246}
De~la Pe\~{n}a, V.~H. and Gin\'{e}, E. (1999).
\newblock {\em Decoupling : from dependence to independence : randomly stopped
  processes, {U}-statistics and processes, martingales and beyond}.
\newblock Probability and its applications. Springer.

\bibitem[Denil et~al., 2013]{NIPS2013_5025}
Denil, M., Shakibi, B., Dinh, L., Ranzato, M., and Freitas, N.~D. (2013).
\newblock Predicting parameters in deep learning.
\newblock In {\em NIPS}.

\bibitem[Denton et~al., 2014]{DBLP:journals/corr/DentonZBLF14}
Denton, E., Zaremba, W., Bruna, J., LeCun, Y., and Fergus, R. (2014).
\newblock Exploiting linear structure within convolutional networks for
  efficient evaluation.
\newblock In {\em NIPS}.

\bibitem[Dotsenko, 1995]{Dotsenko1995}
Dotsenko, V. (1995).
\newblock {\em An Introduction to the Theory of Spin Glasses and Neural
  Networks}.
\newblock World Scientific Lecture Notes in Physics.

\bibitem[Fyodorov and Williams, 2007]{Fyodorov2007}
Fyodorov, Y.~V. and Williams, I. (2007).
\newblock Replica symmetry breaking condition exposed by random matrix
  calculation of landscape complexity.
\newblock {\em Journal of Statistical Physics}, 129(5-6),1081-1116.

\bibitem[Goodfellow et~al., 2013]{Goodfellow_maxout_2013}
Goodfellow, I.~J., Warde-Farley, D., Mirza, M., Courville, A., and Bengio, Y.
  (2013).
\newblock Maxout networks.
\newblock In {\em ICML}.

\bibitem[Hastie et~al., 2001]{hastie01statisticallearning}
Hastie, T., Tibshirani, R., and Friedman, J. (2001).
\newblock {\em The Elements of Statistical Learning}.
\newblock Springer Series in Statistics.

\bibitem[Hinton et~al., 2012]{hinton12}
Hinton, G., Deng, L., Yu, D., Dahl, G., rahman Mohamed, A., Jaitly, N., Senior,
  A., Vanhoucke, V., Nguyen, P., Sainath, T., and Kingsbury, B. (2012).
\newblock Deep neural networks for acoustic modeling in speech recognition.
\newblock {\em Signal Processing Magazine}.

\bibitem[Krizhevsky et~al., 2012]{NIPS2012_4824}
Krizhevsky, A., Sutskever, I., and Hinton, G. (2012).
\newblock Imagenet classification with deep convolutional neural networks.
\newblock In {\em NIPS}.

\bibitem[LeCun et~al., 1998a]{lecun-gradientbased-learning-applied-1998}
LeCun, Y., Bottou, L., Bengio, Y., and Haffner, P. (1998a).
\newblock Gradient-based learning applied to document recognition.
\newblock {\em Proceedings of the IEEE}, 86:2278--2324.

\bibitem[LeCun et~al., 1998b]{lecun-98b}
LeCun, Y., Bottou, L., Orr, G., and Muller, K. (1998b).
\newblock Efficient backprop.
\newblock In {\em Neural Networks: Tricks of the trade}. Springer.

\bibitem[Nair and Hinton, 2010]{DBLP:conf/icml/NairH10}
Nair, V. and Hinton, G. (2010).
\newblock Rectified linear units improve restricted boltzmann machines.
\newblock In {\em ICML}.

\bibitem[Nakanishi and Takayama, 1997]{0305-4470-30-23-009}
Nakanishi, K. and Takayama, H. (1997).
\newblock Mean-field theory for a spin-glass model of neural networks: Tap free
  energy and the paramagnetic to spin-glass transition.
\newblock {\em Journal of Physics A: Mathematical and General}, 30:8085.

\bibitem[Saad, 2009]{saad2009line}
Saad, D. (2009).
\newblock {\em On-line learning in neural networks}, volume~17.
\newblock Cambridge University Press.

\bibitem[Saad and Solla, 1995]{saad1995exact}
Saad, D. and Solla, S.~A. (1995).
\newblock Exact solution for on-line learning in multilayer neural networks.
\newblock {\em Physical Review Letters}, 74(21):4337.

\bibitem[Saxe et~al., 2014]{DBLP:journals/corr/SaxeMG13}
Saxe, A.~M., McClelland, J.~L., and Ganguli, S. (2014).
\newblock Exact solutions to the nonlinear dynamics of learning in deep linear
  neural networks.
\newblock In {\em ICLR}.

\bibitem[Weston et~al., 2014]{DBLP:conf/emnlp/WestonCA14}
Weston, J., Chopra, S., and Adams, K. (2014).
\newblock {\#}tagspace: Semantic embeddings from hashtags.
\newblock In {\em EMNLP}.

\bibitem[Wigner, 1958]{wigner_semicircle}
Wigner, E.~P. (1958).
\newblock {On the Distribution of the Roots of Certain Symmetric Matrices}.
\newblock {\em The Annals of Mathematics}, 67:325--327.

\end{thebibliography}

\clearpage

\toptitlebar 
{\Large \bf  \centering{The Loss Surfaces of Multilayer Networks\\(Supplementary Material)} \par}
\bottomtitlebar

\section{Proof of Theorem~\ref{thm:arge}}
\begin{proof}
First we will prove the lower-bound on $N$. By the inequality between arithmetic and geometric mean the mass and the size of the network are connected as follows
\[N \geq \sqrt[H]{\Psi^2}\frac{H}{\sqrt[H]{n_0n_H}} = \sqrt[H]{\Psi^2}\frac{H}{\sqrt[H]{d}},
\]
and since $\sqrt[H]{\Psi}\frac{H}{\sqrt[H]{d}} = \sqrt[H]{\prod_{i = 1}^{H}n_i}H \geq 1$ then 
\[N \geq \sqrt[H]{\Psi^2}\frac{H}{\sqrt[H]{d}} \geq \sqrt[H]{\Psi}.
\]
Next we show the upper-bound on $N$. Let $n_{max} = \max_{i \in \{1,2,\dots,H\}}n_i$. Then
\[N \leq Hn_{max}^2 \leq H\Psi^2.
\]
\end{proof}

\section{Proof of Theorem~\ref{thm:redun}}
\begin{proof}
We will first proof the following more general lemma.
\begin{lemma}
Let $Y_1$ and $Y_2$ be the outputs of two arbitrary binary classifiers. Assume that the first classifiers predicts $1$ with probability $p$ where, without loss of generality, we assume $p \leq 0.5$ and $-1$ otherwise. Furthemore, let the prediction accuracy of the second classifier differ from the prediction accuracy of the first classifier by no more than $\epsilon \in [0,p]$. Then the following holds
\[corr(sign(Y_1),sign(Y_2))
\]
\[\geq \frac{1 - 2\epsilon - (1 - 2p)^2 - 2(1 - 2p)\epsilon}{4\sqrt{p(1 - p)(p + \epsilon)(1 - p + \epsilon)}}.
\]
\end{lemma}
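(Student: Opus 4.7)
The plan is to recast the correlation in terms of the joint distribution of the two binary signs and then separately bound the numerator from below and the denominator from above. Let $S_1 = \operatorname{sign}(Y_1)$ and $S_2 = \operatorname{sign}(Y_2)$, both taking values in $\{-1,+1\}$. Write $q = P(S_2 = 1)$. Then $E[S_1] = 2p-1$, $\operatorname{Var}(S_1) = 4p(1-p)$, and analogously for $S_2$. Since $S_1 S_2 \in \{-1,+1\}$ and $\{S_1 \neq S_2\}$ is the disagreement event, one gets $E[S_1 S_2] = 1 - 2\,P(S_1 \neq S_2)$. The hypothesis on accuracy gives $P(S_1 \neq S_2) \leq \epsilon$, and the marginals satisfy $|q - p| \leq P(S_1 \neq S_2) \leq \epsilon$, hence $q \in [p-\epsilon,\,p+\epsilon]$. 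Substituting into the definition of correlation yields
\[
\operatorname{corr}(S_1,S_2) \;=\; \frac{1 - 2P(S_1\neq S_2) - (1-2p)(1-2q)}{4\sqrt{p(1-p)\,q(1-q)}}.
\]

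The next step is to minimize the numerator subject to the constraints. Because $p \leq \tfrac12$ one has $1-2p \geq 0$, so to make the subtracted term $(1-2p)(1-2q)$ as large as possible I take $q$ as small as possible, i.e.\ $q = p-\epsilon$, giving $(1-2p)(1-2q) \leq (1-2p)^2 + 2(1-2p)\epsilon$. Combined with $P(S_1\neq S_2) \leq \epsilon$, this produces the lower bound $1 - 2\epsilon - (1-2p)^2 - 2(1-2p)\epsilon$ for the numerator, which is exactly the numerator appearing in the lemma (and is easily checked to be non-negative on $\epsilon \in [0,p]$, hitting $0$ at $\epsilon = p$, so the bound is not vacuous on this range).

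For the denominator, I need to show $q(1-q) \leq (p+\epsilon)(1-p+\epsilon)$ for every $q \in [p-\epsilon, p+\epsilon]$. A short computation gives the identity $(p+\epsilon)(1-p+\epsilon) = p(1-p) + \epsilon + \epsilon^2$. The map $q \mapsto q(1-q)$ is concave and maximized at $q=\tfrac12$. Two cases: if $p+\epsilon \leq \tfrac12$, then the maximum of $q(1-q)$ on $[p-\epsilon, p+\epsilon]$ is attained at $q = p+\epsilon$, and expansion shows $(p+\epsilon)(1-p+\epsilon) - (p+\epsilon)(1-p-\epsilon) = 2\epsilon(p+\epsilon) \geq 0$. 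If instead $p+\epsilon \geq \tfrac12$, then $\epsilon \geq \tfrac12 - p$, so $\epsilon + \epsilon^2 \geq (\tfrac12 - p) + (\tfrac12 - p)^2 \geq (\tfrac12 - p)^2 = \tfrac14 - p(1-p)$, which rearranges to $(p+\epsilon)(1-p+\epsilon) \geq \tfrac14 = \max_q q(1-q)$. Either way, the denominator in the lemma is at least the true denominator.

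Dividing the numerator lower bound by the denominator upper bound (legitimate because the numerator has been shown to be nonnegative) yields the claimed inequality. The only mildly subtle step is verifying that replacing the unknown $q$ by $p+\epsilon$ in the denominator is genuinely an upper bound on $q(1-q)$ over the whole range $[p-\epsilon,p+\epsilon]$, which requires the case split above; everything else is bookkeeping with the joint law of $(S_1,S_2)$.
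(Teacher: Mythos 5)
Your proof is correct and takes essentially the same route as the paper's: both express the correlation of $sign(Y_1)$ and $sign(Y_2)$ exactly through the joint law of the two signs (your $q$ plays the role of the paper's $p+\epsilon^{+}-\epsilon^{-}$) and then bound the numerator from below and the denominator from above using $|q-p|\le P(S_1\ne S_2)\le\epsilon$ together with $1-2p\ge 0$. The extra verifications you include (the case split giving $q(1-q)\le(p+\epsilon)(1-p+\epsilon)$ and the nonnegativity of the numerator bound, needed to divide the two estimates) are details the paper leaves implicit.
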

\begin{proof}
Consider two random variables $Z_1 = sign(Y_1)$ and $Z_2 = sign(Y_2)$. Let $\mathcal{X}^{+}$ denote the set of data points for which the first classifier predicts $+1$ and let $\mathcal{X}^{-}$ denote the set of data points for which the first classifier predicts $-1$ ($\mathcal{X}^{+} \cup \mathcal{X}^{-} = \mathcal{X}$, where $\mathcal{X}$ is the entire dataset). Also let $p = \frac{|\mathcal{X}^{+}|}{|\mathcal{X}|}$. Furthermore, let $\mathcal{X}_{\epsilon}^{-}$ denote the dataset for which $Z_1 = +1$ and $Z_2 = -1$ and $\mathcal{X}_{\epsilon}^{+}$ denote the dataset for which $Z_1 = -1$ and $Z_2 = +1$, where $\frac{|\mathcal{X}_{\epsilon}^{+}| + |\mathcal{X}_{\epsilon}^{-}|}{\mathcal{X}} = \epsilon$. Also let $\epsilon^{+} = \frac{|\mathcal{X}_{\epsilon}^{+}|}{\mathcal{X}}$ and $\epsilon^{-} = \frac{|\mathcal{X}_{\epsilon}^{-}|}{\mathcal{X}}$. Therefore
\[Z_1 \!=\! \left \{
  \begin{tabular}{c}
  $1$ \:\:if $x \in \mathcal{X}^{+}$\\
  \!\!\!\!$-1$ \:\:if $x \in \mathcal{X}^{-}$
  \end{tabular}
\right.
\]
and
\[Z_2 \!=\! \left \{
  \begin{tabular}{c}
  $1$ \:\:if $x \in \mathcal{X}^{+} \cup \mathcal{X}_{\epsilon}^{+}  \setminus \mathcal{X}_{\epsilon}^{-}$\\
  \!\!\!$-1$ \:\:if $x \in \mathcal{X}^{-} \cup \mathcal{X}_{\epsilon}^{-} \setminus \mathcal{X}_{\epsilon}^{+}$.
  \end{tabular}
\right.
\]
One can compute that $\mathbb{E}[Z_1] = 2p-1$, $\mathbb{E}[Z_2] = 2(p + \epsilon^{+} - \epsilon^{-}) - 1$, $\mathbb{E}[Z_1Z_2] = 1 - 2\epsilon$, $std(Z_s) = 2\sqrt{p(1-p)}$, and finally $std(Z_\Lambda) = 2\sqrt{(p + \epsilon^{+} - \epsilon^{-})(1 - p - \epsilon^{+} + \epsilon^{-})}$.
Thus we obtain
\begin{eqnarray}
&&\!\!\!\!\!\!\!\!\!\!\!\!\!\!\!corr(sign(Y_1),sign(Y_2)) = corr(Z_1,Z_2) \nonumber\\
&\!\!\!\!\!\!\!\!\!\!\!\!\!\!\!=& \!\!\!\!\!\!\!\!\!\!\frac{\mathbb{E}[Z_1Z_2] - \mathbb{E}[Z_1]\mathbb{E}[Z_2]}{std(Z_1)std(Z_2)} \nonumber\\
&\!\!\!\!\!\!\!\!\!\!\!\!\!\!\!=& \!\!\!\!\!\!\!\!\!\!\frac{1 - 2\epsilon - (1 - 2p)^2 + 2(1 - 2p)(\epsilon^{+} - \epsilon^{-})}{4\sqrt{p(1 - p)(p + \epsilon^{+} - \epsilon^{-})(1 - p - \epsilon^{+} + \epsilon^{-})}} \nonumber\\
&\!\!\!\!\!\!\!\!\!\!\!\!\!\!\!\geq& \!\!\!\!\!\!\!\!\!\!\frac{1 - 2\epsilon - (1 - 2p)^2 - 2(1 - 2p)\epsilon}{4\sqrt{p(1 - p)(p + \epsilon)(1 - p + \epsilon)}}
\label{eq:tmp}
\end{eqnarray}
\end{proof}
Note that when the first classifier is network $\mathcal{N}$ considered in this paper and $\mathcal{M}$ is its $(s,\epsilon)$-reduction image $\mathbb{E}[Y_1] = 0$ and $\mathbb{E}[Y_2] = 0$ (that follows from the fact that $X$'s in Equation~\ref{eq:befapprox} have zero-mean). That implies $p=0.5$ which, when substituted to Equation~\ref{eq:tmp} gives the theorem statement.
\end{proof}

\section{Proof of Theorem~\ref{thm:unif}}
\begin{proof}
Note that $\mathbb{E}[\hat{Y}_s] = 0$ and $\mathbb{E}[Y_s] = 0$. Furthermore
\[\mathbb{E}[\hat{Y}_sY_s] = q^2\rho^2\!\!\!\sum_{i_1,i_2,\dots,i_H=1}^{s}\!\!\!\min\left(\frac{\Psi}{s^H},t_{i_1,i_2,\dots,i_H}\right)\prod_{k = 1}^{H}w_{i_k}^2
\]
and
\[std(\hat{Y}_s) = q\rho\sqrt{\sum_{i_1,i_2,\dots,i_H=1}^{s}\frac{\Psi}{s^H}\prod_{k = 1}^{H}w_{i_k}^2}
\]
\[std(Y_s) = q\rho\sqrt{\sum_{i_1,i_2,\dots,i_H=1}^{s}t_{i_1,i_2,\dots,i_H}\prod_{k = 1}^{H}w_{i_k}^2}.
\]
Therefore
\begin{eqnarray*}
&&\!\!\!\!\!\!\!\!\!corr(\hat{Y}_s,Y_s)\\ 
&\!\!\!\!\!\!\!\!\!=&\!\!\!\!\!\!\! \frac{\displaystyle\sum_{i_1,\dots,i_H=1}^{s}\min\left(\frac{\Psi}{s^H},t_{i_1,\dots,i_H}\right)\prod_{k = 1}^{H}w_{i_k}^2}{\sqrt{\!\!\!\left(\displaystyle\sum_{i_1,\dots,i_H=1}^{s}\frac{\Psi}{s^H}\prod_{k = 1}^{H}w_{i_k}^2\right)\!\!\!\!\left(\displaystyle\sum_{i_1,\dots,i_H=1}^{s}\!\!\!\!\!t_{i_1,\dots,i_H}\prod_{k = 1}^{H}w_{i_k}^2\right)}}\\
&\!\!\!\!\!\!\!\!\geq& \frac{1}{c^2}, 
\end{eqnarray*}
where the last inequality is the direct consequence of the uniformity assumption of Equation~\ref{eq:uniform}.
\end{proof}

\section{Loss function as a $H$ - spin spherical spin-glass model}

We consider two loss functions, (random) absolute loss $\mathcal{L}^a_{\Lambda,H}(w)$ and (random) hinge loss $\mathcal{L}^h_{\Lambda,H}(w)$ defined in the main body of the paper. Recall that in case of the hinge loss $\max$ operator can be modeled as Bernoulli random variable, that we will refer to as $M$, with success ($M = 1$) probability $\rho^{'} = \frac{\mathcal{C}^{'}}{\rho\sqrt{\mathcal{C}^H}}$ for some non-negative constant $\mathcal{C}^{'}$. We assume $M$ is independent of $\hat{Y}$. Therefore we obtain that
\[\mathcal{L}^a_{\Lambda,H}({\bm w}) = \left \{
  \begin{tabular}{c}
  $\!\!\!\!\!\!S - \hat{Y}$ $\:\:\:\:$if$\:\:\:\:$ $Y_t = S$ \\
  $\!S + \hat{Y}$ $\:\:\:\:$if$\:\:\:\:$ $Y_t = -S$ \\
  \end{tabular}
\right.
\]
and
\begin{eqnarray*}
\mathcal{L}^h_{\Lambda,H}({\bm w}) &=& \mathbb{E}_{M,A}[M(
1-Y_t\hat{Y})]\\&=& \left \{
  \begin{tabular}{c}
  $\!\!\!\!\!\!\mathbb{E}_{M}[M(1 - \hat{Y})]$ $\:\:\:\:$if$\:\:\:\:$ $Y_t = 1$ \\
  $\!\mathbb{E}_{M}[M(1 + \hat{Y})]$ $\:\:\:\:$if$\:\:\:\:$ $Y_t = -1$ \\
  \end{tabular}
\right.
\end{eqnarray*}
Note that both cases can be generalized as
\[\mathcal{L}_{\Lambda,H}({\bm w}) = \left \{
  \begin{tabular}{c}
  $\!\mathbb{E}_{M}[M(S - \hat{Y})]$ $\:\:\:\:$if$\:\:\:\:$ $Y_t  > 0$ \\
  $\!\mathbb{E}_{M}[M(S + \hat{Y})]$ $\:\:\:\:$if$\:\:\:\:$ $Y_t < 0$ \\
  \end{tabular}
\right.,
\]
where in case of the absolute loss $\rho^{'} = 1$ and in case of the hinge loss $S = 1$. Furthermore, using the fact that $X$'s are Gaussian random variables one we can further generalize both cases as
\[\mathcal{L}_{\Lambda,H}({\bm w}) = S\rho^{'} \!+\! q\sum_{i_1,i_2,\dots,i_H=1}^{\Lambda}\!\!\!\!\!\!\!X_{i_1,i_2,\dots,i_H}\rho\rho^{'}\prod_{k = 1}^{H}w_{i_k}.
\]
Let $\tilde{w}_i = \sqrt[H]{\frac{\rho\rho^{'}}{\mathcal{C}^{'}}}w_i$ for all $i = \{1,2,\dots,k\}$. Note that $\tilde{w}_i = \frac{1}{\sqrt{C}}w_i$. Thus
\begin{equation}
\mathcal{L}_{\Lambda,H}({\bm w}) = S\rho^{'} + q\mathcal{C}^{'}\sum_{i_1,\dots,i_H=1}^{\Lambda}X_{i_1,\dots,i_H}\prod_{k = 1}^{H}\tilde{w}_{i_k}.
\label{eq:befloss}
\end{equation}
Note that the spherical assumption in Equation~\ref{eq:befspherical} directly implies that
\begin{equation*}
\frac{1}{\Lambda}\sum_{i=1}^{\Lambda}\tilde{w}_i^2 = 1
\end{equation*}
To simplify the notation in Equation~\ref{eq:befloss} we drop the letter accents and simply denote $\tilde{w}$ as $w$. We skip constant $S\rho^{'}$ and $\mathcal{C}^{'}$ as it does not matter when minimizing the loss function. After substituting $q = \frac{1}{\Psi^{(H-1)/2H}}$ we obtain
\begin{equation*}
\mathcal{L}_{\!\Lambda,H}({\bm w}) \!=\! \frac{1}{\Lambda^{\!(H\!-\!1)/2}}\!\!\!\!\!\!\!\sum_{i_1,i_2,\dots,i_H=1}^{\Lambda}\!\!\!\!\!\!\!\!\!\!\!X_{i_1,i_2,\dots,i_H}\!w_{i_1}\!w_{i_2}\!\!\dots\!w_{i_H}.
\end{equation*}

\section{Asymptotics of the mean number of critical points and local minima}

Below, we provide the asymptotics of the mean number of critical points (Theorem~\ref{thm:cpprecise}) and the mean number of local minima (Theorem~\ref{thm:lmspprecise}), which extend Theorem~\ref{thm:lmsp}. Those results are the consequences of Theorem 2.17. and Corollary 2.18.~\cite{AAC2010}.
\begin{theorem}
For $H \geq 3$, the following holds as $\Lambda \rightarrow \infty$:
\begin{itemize}
\item For $u < -E_{\infty}$\\
\begin{eqnarray*}
\mathbb{E}[\mathcal{C}_{\Lambda}(u)] \!\!\!&=&\!\!\! \frac{h(v)}{\sqrt{2H\pi}}\frac{\exp(I_1(v) - \frac{v}{2}I_1^{'}(v))}{-\Phi^{'}(v) + I_1^{'}(v)}\Lambda^{-\frac{1}{2}}\\
&\cdot& \exp\left(\Lambda\Theta_H(u)\right)(1 + o(1)),
\end{eqnarray*}
where $v = -u\sqrt{\frac{H}{2(H-1)}}$, $\Phi(v) = -\frac{H-2}{2H}v^2$,\\
$h(v) = \left|\frac{v-\sqrt{2}}{v + \sqrt{2}}\right|^{\frac{1}{4}} + \left|\frac{v+\sqrt{2}}{v - \sqrt{2}}\right|^{\frac{1}{4}}$\\
and $I_1(v) = \int_{\sqrt{2}}^{v}\sqrt{|x^2 - 2|}dx$.
\item For $u = -E_{\infty}$\\
\begin{eqnarray*}
\mathbb{E}[\mathcal{C}_{\Lambda}(u)] = \frac{2A(0)\sqrt{2H}}{3(H-2)}\Lambda^{-\frac{1}{3}}\\
\cdot\exp\left(\Lambda\Theta_H(u)\right)(1 + o(1)),
\end{eqnarray*}
where $A$ is the Airy function of first kind.
\item For $u \in (-E_{\infty},0)$\\
\begin{eqnarray*}
\mathbb{E}[\mathcal{C}_{\Lambda}(u)] = \frac{2\sqrt{2H(E_{\infty}^2-u^2)}}{(2-H)\pi u}\\
\cdot\exp\left(\Lambda \Theta_H(u)\right)(1 + o(1)),
\end{eqnarray*}
\item For $u > 0$\\
\begin{eqnarray*}
\mathbb{E}[\mathcal{C}_{\Lambda}(u)] = \frac{4\sqrt{2}}{\sqrt{\pi(H-2)}}\Lambda^{\frac{1}{2}}\\
\cdot\exp\left(\Lambda\Theta_H(0)\right)(1 + o(1)),
\end{eqnarray*}
\end{itemize}
\label{thm:cpprecise}
\end{theorem}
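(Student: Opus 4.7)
\section*{Proof proposal for Theorem~\ref{thm:cpprecise}}

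The plan is to proceed via the Kac--Rice formula followed by asymptotic analysis of a GOE determinant. The starting point is to write
\begin{equation*}
\mathbb{E}[\mathcal{C}_{\Lambda}(u)] \;=\; \int_{\mathcal{S}}\mathbb{E}\!\left[\,\bigl|\det\nabla^{2}\mathcal{L}_{\Lambda,H}({\bm w})\bigr|\,\mathbbm{1}\{\mathcal{L}_{\Lambda,H}({\bm w})\leq \Lambda u\}\;\middle|\;\nabla\mathcal{L}_{\Lambda,H}({\bm w})=0\right]\varphi_{\nabla\mathcal{L}}(0)\,d{\bm w},
\end{equation*}
where $\nabla$ and $\nabla^2$ are the covariant derivative and Hessian on the sphere and $\varphi_{\nabla\mathcal{L}}(0)$ is the Gaussian density of the gradient at $0$. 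The spin-glass process is isotropic on $\mathcal{S}=S^{\Lambda-1}(\sqrt{\Lambda})$, so the integrand is constant in ${\bm w}$, and the surface integral reduces to a single evaluation multiplied by the volume $|\mathcal{S}|$, which is explicit in $\Lambda$.

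The next step is to identify the conditional distribution of the Hessian. A standard computation for the $H$-spin model (cf.~\cite{AAC2010}) shows that, after conditioning on $\nabla\mathcal{L}=0$ and on $\mathcal{L}=\Lambda x$, the Hessian is distributed as
\begin{equation*}
\nabla^{2}\mathcal{L}_{\Lambda,H}({\bm w})\;\stackrel{d}{=}\;\sqrt{\Lambda\,H(H-1)}\;M_{\Lambda-1}\;-\;xH\,\mathrm{Id},
\end{equation*}
where $M_{\Lambda-1}$ is a normalized GOE matrix of size $\Lambda-1$ whose empirical spectrum converges to the semicircle on $[-2,2]$. Substituting this and changing variables $v=-u\sqrt{H/(2(H-1))}$, the Kac--Rice integral becomes
\begin{equation*}
\mathbb{E}[\mathcal{C}_{\Lambda}(u)] \;=\; C_{H}\,\Lambda^{1/2}\,e^{-\Lambda\Phi(v)}\int_{-\infty}^{v}\!\mathbb{E}\bigl[\,|\det(M_{\Lambda-1}-t\,\mathrm{Id})|\,\bigr]\,e^{-\Lambda t^{2}/2}\,dt\cdot(1+o(1))
\end{equation*}
for an explicit constant $C_H$, with $\Phi(v)=-\tfrac{H-2}{2H}v^{2}$. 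This reduces the problem to the asymptotics of $\mathbb{E}|\det(M_N-t\,\mathrm{Id})|$ and the Laplace-type integral against the Gaussian weight.

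The four cases in the statement then correspond to four regimes of the saddle-point / Laplace analysis of this integral, driven by the behavior of the GOE determinant relative to the edge $t=\sqrt{2}$. For $v<\sqrt{2}$, i.e.\ $u<-E_{\infty}$, the integral is dominated by the upper endpoint $t=v$, lying outside the bulk, and the large-deviation rate of the determinant is $I_1(v)=\int_{\sqrt{2}}^{v}\sqrt{|x^{2}-2|}\,dx$; a boundary Laplace expansion produces the factor $h(v)/[-\Phi'(v)+I_1'(v)]$ and the $\Lambda^{-1/2}$ correction. For $v=\sqrt{2}$, i.e.\ $u=-E_{\infty}$, the saddle sits exactly at the spectral edge, the GOE determinant is governed by the Tracy--Widom/Airy fluctuations of the top eigenvalue, and a rescaling $t=\sqrt{2}+c\,\Lambda^{-2/3}s$ yields the Airy prefactor $A(0)$ and the $\Lambda^{-1/3}$ order. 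For $-E_{\infty}<u<0$ (i.e.\ $\sqrt{2}<v<\sqrt{2H/(H-2)}$) the saddle is in the bulk, so $\mathbb{E}|\det|\asymp e^{\Lambda\int\log|x-t|\mu_{sc}(dx)}$ with $\mu_{sc}$ the semicircle; a standard interior Laplace expansion gives the order-$1$ polynomial prefactor. Finally for $u>0$ the Gaussian factor $e^{-\Lambda t^{2}/2}$ pushes the effective saddle back into the bulk independently of $u$, so the integral saturates and $\mathbb{E}[\mathcal{C}_{\Lambda}(u)]$ grows like $\Lambda^{1/2}e^{\Lambda\Theta_H(0)}$.

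The hard part, and the reason the proof in~\cite{AAC2010} is delicate, is the sharp sub-exponential prefactor at and near the edge $u=-E_{\infty}$: one needs not just the exponential rate $\Theta_H$ but the exact constant, which requires a uniform edge expansion of $\mathbb{E}|\det(M_N-t\,\mathrm{Id})|$ together with a careful matching of the Laplace saddle and the Airy rescaling. The remaining routine ingredients are the isotropy reduction, the Gaussian identification of the conditional Hessian with a shifted GOE, and standard interior/boundary Laplace expansions; these together assemble the four cases of the theorem.
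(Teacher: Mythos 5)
First, note what the paper itself does with this statement: it is not proved in the paper at all. The theorem is imported verbatim, with the one-line justification that it is a consequence of Theorem 2.17 and Corollary 2.18 of \cite{AAC2010}. Your proposal therefore attempts something the authors deliberately did not do, namely re-derive the sharp asymptotics. Your outline is indeed the skeleton of the argument in \cite{AAC2010}: Kac--Rice on the sphere, isotropy reducing the integral to one point times the surface area, the conditional Hessian at a critical point of value $\Lambda x$ being a scaled GOE matrix shifted by $-xH\,\mathrm{Id}$, and then a Laplace/saddle analysis of an integral of $\mathbb{E}\lvert\det(\cdot)\rvert$ against a Gaussian weight, with the four cases of the theorem corresponding to the saddle lying beyond the spectral edge, at the edge, in the bulk, or being pushed to the bulk independently of $u$.

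The genuine gap is that the entire content of this theorem is the sharp sub-exponential prefactors --- the factors $\frac{h(v)}{\sqrt{2H\pi}}\frac{\exp(I_1(v)-\frac{v}{2}I_1'(v))}{-\Phi'(v)+I_1'(v)}\Lambda^{-1/2}$, the Airy constant $\frac{2A(0)\sqrt{2H}}{3(H-2)}\Lambda^{-1/3}$, and the bulk and $u>0$ constants --- since the exponential rate $\Theta_H$ is already the (separately stated) logarithmic asymptotics. Precisely these computations are what you defer to ``a uniform edge expansion of $\mathbb{E}\lvert\det(M_N-t\,\mathrm{Id})\rvert$'' and ``standard interior/boundary Laplace expansions,'' so as written the proposal establishes nothing beyond what Theorem~\ref{thm:lmsp} already gives; obtaining the exact constants requires the precise one-point-function/determinant asymptotics of the GOE uniformly up to the edge and the matching with the Airy scaling, which is the delicate core of \cite{AAC2010} and cannot be waved through. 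There are also technical slips that would have to be repaired before the sketch is even internally consistent: your displayed reduction places the quadratic exponent $\Phi$ outside the integral evaluated at the endpoint $v$, whereas the Gaussian weight must be a function of the integration variable (the endpoint evaluation only emerges after the Laplace step), and the $\frac{1}{2}\log(H-1)$ term of $\Theta_H$, which comes from the determinant scaling $(\Lambda H(H-1))^{(\Lambda-1)/2}$ and the sphere volume, is missing from your formula; moreover you state that the GOE spectrum converges to the semicircle on $[-2,2]$ while simultaneously using $\sqrt{2}$ as the edge in $h$, $I_1$ and the case $v=\sqrt{2}$, an inconsistency of normalization that would propagate into every constant you are trying to pin down.
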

\begin{theorem}
For $H \geq 3$ and $u < -E_{\infty}$, the following holds as $\Lambda \rightarrow \infty$:
\begin{eqnarray*}
\mathbb{E}[\mathcal{C}_{\Lambda,0}(u)] \!\!\!&=&\!\!\! \frac{h(v)}{\sqrt{2H\pi}}\frac{\exp(I_1(v) - \frac{v}{2}I_1^{'}(v))}{-\Phi^{'}(v) + I_1^{'}(v)}\Lambda^{-\frac{1}{2}}\\
&\cdot& \exp\left(\Lambda\Theta_H(u)\right)(1 + o(1)),
\end{eqnarray*}
where $v$, $\Phi$, $h$ and $I_1$ were defined in Theorem~\ref{thm:cpprecise}.
\label{thm:lmspprecise}
\end{theorem}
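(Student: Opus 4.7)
The plan is to invoke the Kac--Rice coarea formula for Gaussian random fields on $\mathcal{S} = S^{\Lambda-1}(\sqrt{\Lambda})$ and then carry out a careful random-matrix analysis of the conditional Hessian. The starting point is
\begin{equation*}
\mathbb{E}[\mathcal{C}_\Lambda(u)] \;=\; \int_{\mathcal{S}} \mathbb{E}\bigl[\,|\det \nabla^2 \mathcal{L}_{\Lambda,H}(w)|\, \mathbf{1}\{\mathcal{L}_{\Lambda,H}(w) < \Lambda u\} \,\big|\, \nabla \mathcal{L}_{\Lambda,H}(w)=0\bigr]\, p_{\nabla \mathcal{L}_{\Lambda,H}(w)}(0)\, d\sigma(w),
\end{equation*}
where $\nabla$ and $\nabla^2$ denote the Riemannian gradient and Hessian on $\mathcal{S}$. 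Since the law of $\mathcal{L}_{\Lambda,H}$ is invariant under the orthogonal group, the integrand is constant in $w$, and the spatial integration collapses to multiplication by the surface area of $\mathcal{S}$. The residual task is to evaluate the conditional expectation at one reference point.

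Next I would exploit the classical identity used in AAC2010: conditionally on $\nabla \mathcal{L}_{\Lambda,H}(w)=0$ and on the field value $\mathcal{L}_{\Lambda,H}(w)=\Lambda x$, the Riemannian Hessian is distributed as
\begin{equation*}
\nabla^2 \mathcal{L}_{\Lambda,H}(w) \;\stackrel{d}{=}\; \sqrt{H(H-1)}\,G_{\Lambda-1} \;-\; H x\, I_{\Lambda-1},
\end{equation*}
with $G_{\Lambda-1}$ a GOE matrix whose limiting spectrum is supported on $[-\sqrt{2},\sqrt{2}]$ and with the shift produced by the Lagrange multiplier from the spherical constraint. After the change of variables $x=-v\sqrt{2(H-1)/H}$ (so that $v=-u\sqrt{H/(2(H-1))}$ and $v=\pm\sqrt{2}$ correspond to $u=\mp E_\infty$), the problem reduces to the sharp evaluation of $\mathbb{E}\bigl[|\det(G_{\Lambda-1}-v I)|\bigr]$ integrated against the explicit Gaussian density of $\mathcal{L}_{\Lambda,H}(w)$.

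The sharp asymptotics then follow from two classical ingredients: an exact Pfaffian/skew-orthogonal-polynomial representation of $\mathbb{E}|\det(G_{\Lambda-1}-vI)|$, and Laplace's method applied to the resulting one-dimensional integral over the field value. For $u<-E_\infty$ the parameter $v$ lies strictly outside the semicircle's support; the large-deviation cost of pushing a Gaussian field this low is $-(H-2)u^2/(4(H-1))-I(u)$, and the saddle-point of Laplace's method is non-degenerate, producing the $\Lambda^{-1/2}$ prefactor together with the explicit constant $h(v)/\sqrt{2H\pi}$ and Jacobian $(-\Phi'(v)+I_1'(v))^{-1}$. For $u\in(-E_\infty,0)$ the parameter $v$ is inside the bulk, $|\det(G_{\Lambda-1}-vI)|$ incurs no exponential penalty, and one obtains the $\Lambda^0$ prefactor with the explicit rational expression involving $\sqrt{E_\infty^2-u^2}$. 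For $u>0$ the indicator is asymptotically trivial, the Gaussian tail contributes a $\Lambda^{1/2}$ factor, and the exponent saturates at $\Theta_H(0)$.

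The hardest case, and the main technical obstacle, is the transition $u=-E_\infty$. Here the spectral parameter $v=\sqrt{2}$ sits exactly at the edge of the semicircle, the saddle point in Laplace's method coalesces with the branch point of $I_1$, and the generic $\Lambda^{-1/2}$ correction degenerates. The correct $\Lambda^{-1/3}$ scaling is dictated by the Tracy--Widom edge behaviour of GOE: after rescaling the top eigenvalues by $\Lambda^{-2/3}$, the local statistics converge to the Airy kernel, and the constant $A(0)$ arises as the boundary value of the resulting Airy integral. For Theorem~\ref{thm:lmspprecise}, I would repeat the Kac--Rice computation but replace $|\det(\cdot)|$ by $\det(\cdot)\,\mathbf{1}\{\text{all eigenvalues positive}\}$. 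Since for $u<-E_\infty$ the shift $-Hx I_{\Lambda-1}$ pushes the entire GOE spectrum off zero with overwhelming probability, this extra indicator contributes $1+o(1)$, which is precisely why the sharp prefactor coincides with that of the total critical-point count in this regime.
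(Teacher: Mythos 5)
Your proposal is correct in outline, but note that the paper itself does not prove this statement: Theorem~\ref{thm:lmspprecise} (like Theorem~\ref{thm:cpprecise}) is imported directly from \cite{AAC2010}, stated as a consequence of their Theorem 2.17 and Corollary 2.18, with no argument given in the Supplementary material. What you have written is essentially a reconstruction of the proof in that reference: the Kac--Rice formula on $S^{\Lambda-1}(\sqrt{\Lambda})$, orthogonal invariance collapsing the spatial integral to a surface-area factor, the representation of the conditional Hessian at a critical point as a shifted GOE matrix, sharp asymptotics of $\mathbb{E}\left[|\det(\mathrm{GOE}-vI)|\right]$ combined with Laplace's method in the conditioned field value, and, for the index-$0$ count, the observation that for $u<-E_\infty$ (equivalently $v>\sqrt{2}$, strictly outside the semicircle bulk) the shifted matrix is positive definite with overwhelming probability, so that the positivity restriction costs only a factor $1+o(1)$ and the local-minima count inherits the prefactor of the total critical-point count. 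Two caveats if you intend this as a self-contained proof rather than a pointer to \cite{AAC2010}: (i) the final step must be made quantitative at the level of sharp, not merely logarithmic, asymptotics --- one has to check that the exponentially small probability of an outlier eigenvalue beats the determinant weighting (e.g.\ Cauchy--Schwarz against the growth of $\mathbb{E}[\det^2]$), uniformly over the range of field values that dominate the Laplace integral --- which you assert but do not supply; (ii) your conditional-Hessian identity suppresses the $\Lambda$-dependent normalization (in \cite{AAC2010} the GOE part carries a $\sqrt{(\Lambda-1)H(H-1)}$-type scale relative to conditioning on the value $\Lambda x$), a detail that matters when extracting the exact constants $h(v)$, $I_1$ and $\Phi$. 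Finally, most of your discussion (the regimes $u\in(-E_\infty,0)$, $u>0$, and the Airy/Tracy--Widom analysis at $u=-E_\infty$) addresses Theorem~\ref{thm:cpprecise} rather than the statement under review, which concerns only $u<-E_\infty$.
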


\section{Additional Experiments}

\subsection{Distribution of normalized indices of critical points.}

Figure~\ref{fig:index_distrall} shows the distribution of normalized indices, which is the proportion of negative eigenvalues, for neural networks with $n_1=\{10,25,50,100\}$.
We see that all solutions are minima or saddle points of very low index.
\begin{figure}[h]
\vspace{-0.1in}
a) $n_1=10$\\ \includegraphics[trim=0cm 0cm 0cm 2.25cm,clip,scale=0.4]{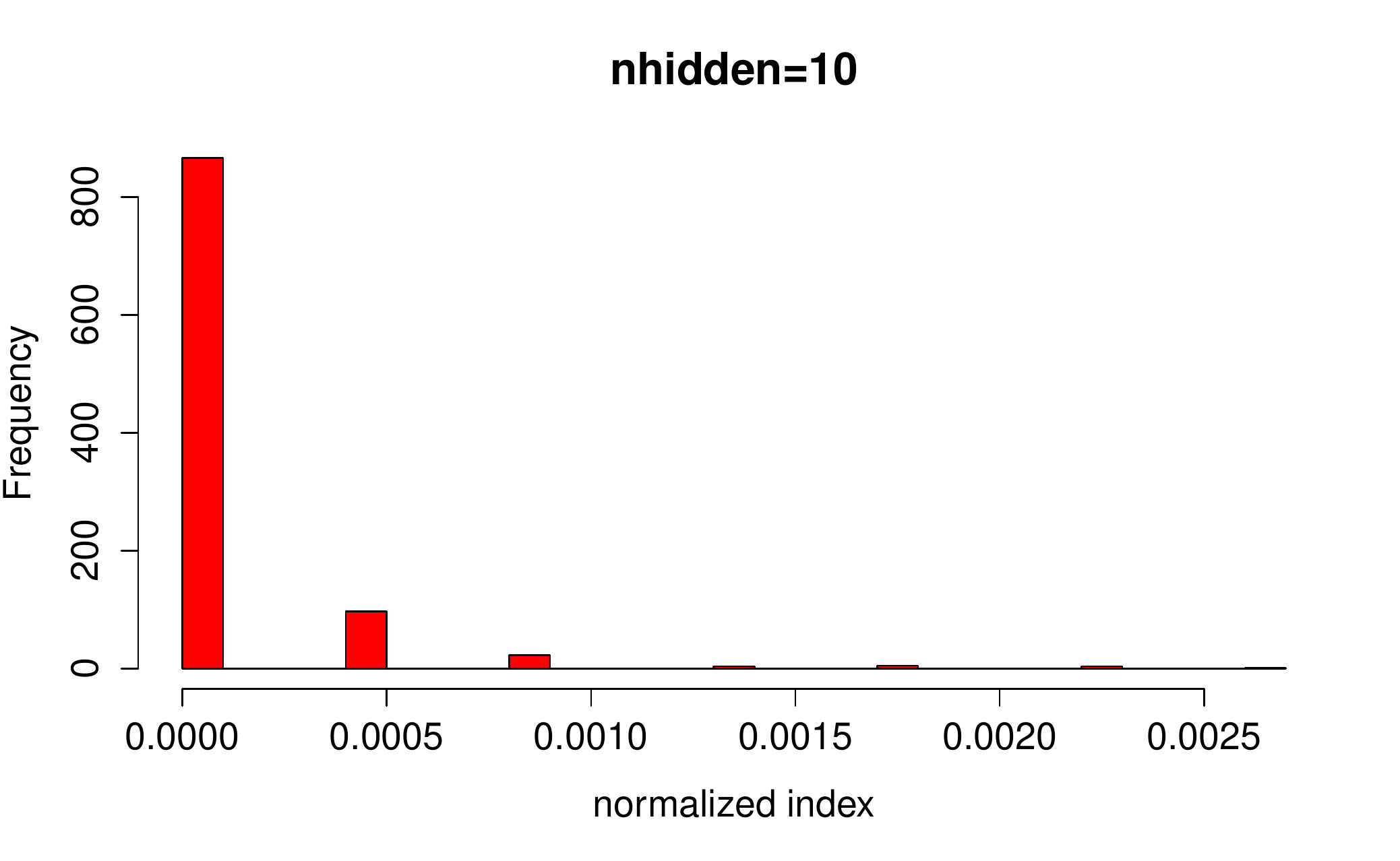}\vspace{-0.1in}
b) $n_1=25$\\ \includegraphics[trim=0cm 0cm 0cm 2.25cm,clip,scale=0.4]{hist_index_h25.pdf}\vspace{-0.1in}
c) $n_1=50$\\ \includegraphics[trim=0cm 0cm 0cm 2.25cm,clip,scale=0.4]{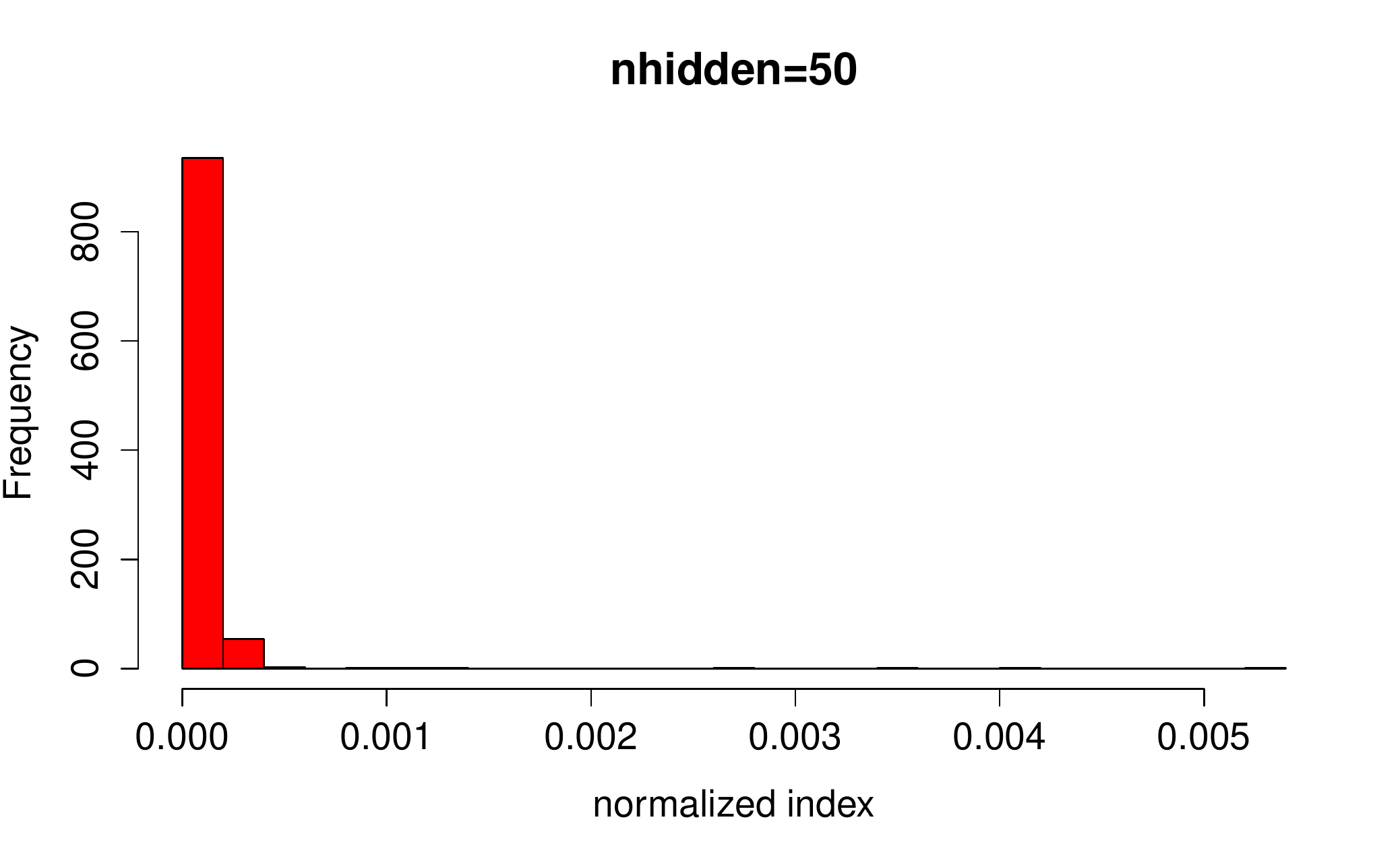}\vspace{-0.1in}
d) $n_1=100$\\ \includegraphics[trim=0cm 0cm 0cm 1.80cm,clip,scale=0.4]{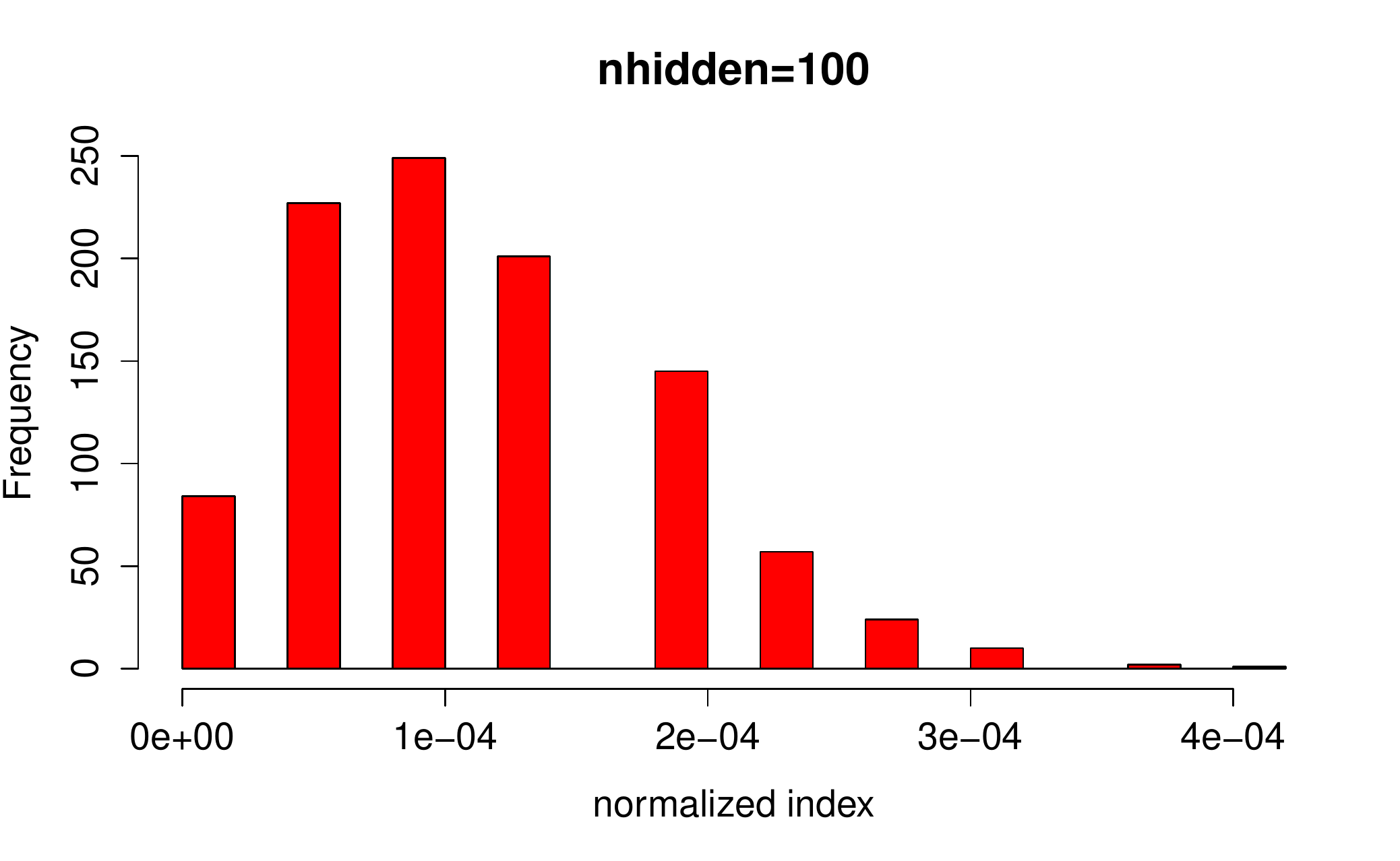}
\vspace{-0.3in}
\caption{Distribution of normalized index of solutions for $n_1=\{10,25,50,100\}$ hidden units.}
\label{fig:index_distrall}
\end{figure}

\newpage
\subsection{Comparison of SGD and SA.}

Figure \ref{fig:sa_vs_sgd} compares SGD with SA.
\begin{figure}[h]
\includegraphics[scale=0.4]{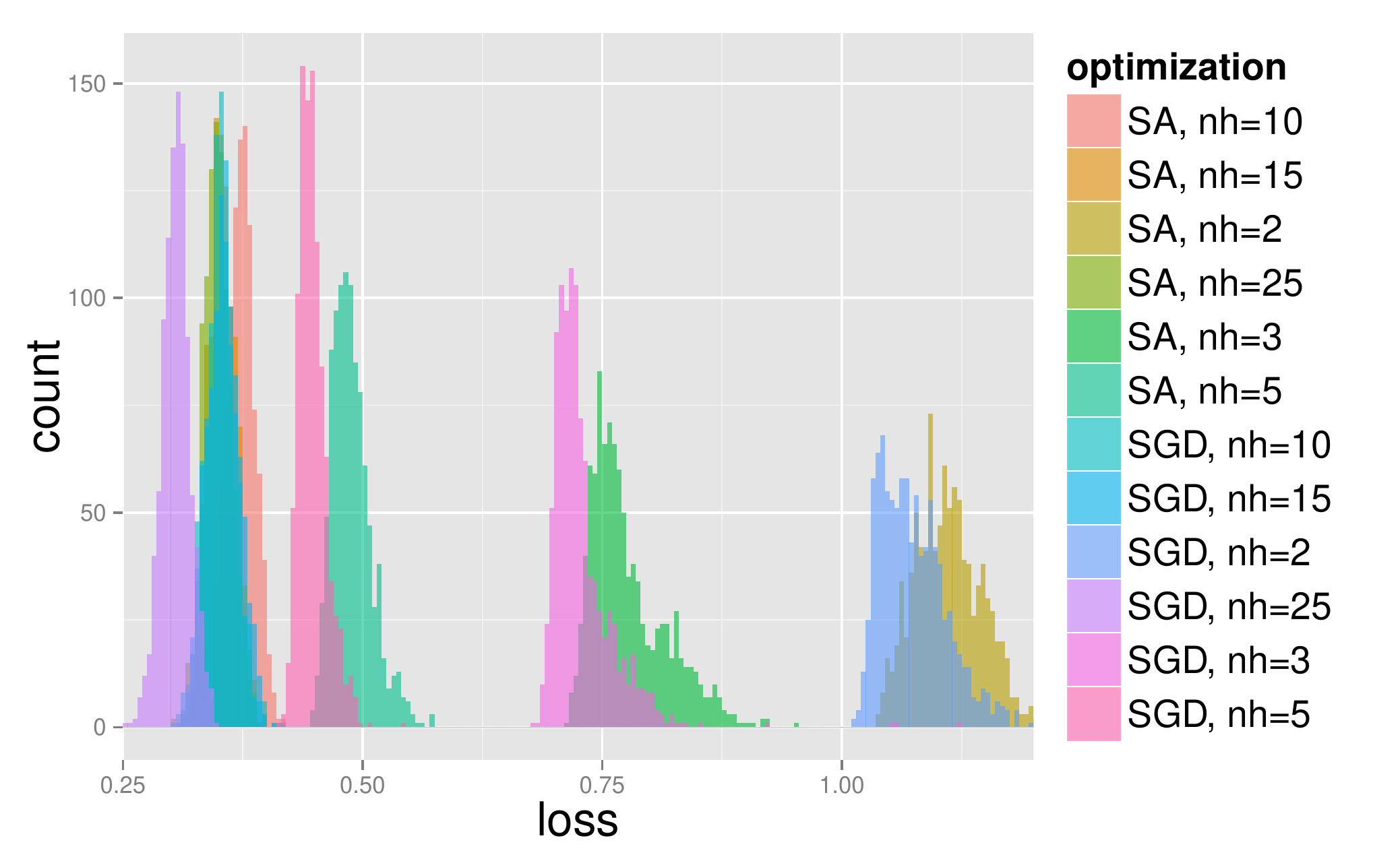}
\vspace{-0.27in}
\caption{Test loss distributions for SGD and SA for different numbers of hidden units (nh).}
 \label{fig:sa_vs_sgd}
\vspace{-0.1in}
\end{figure}

\begin{figure*}
  \center
\hspace{-0.8in}a) $n_1=2\:\:\:\:\:\:\:\:\:\:\:\:\:\:\:\:\:\:\:\:\:\:\:\:\:\:\:\:\:\:\:\:\:\:\:\:$b) $n_1=5\:\:\:\:\:\:\:\:\:\:\:\:\:\:\:\:\:\:\:\:\:\:\:\:\:\:\:\:\:\:\:\:\:\:\:\:\:$c) $n_1=10\:\:\:\:\:\:\:\:\:\:\:\:\:\:\:\:\:\:\:\:\:\:\:\:\:\:\:\:\:\:\:\:\:\:\:\:$d) $n_1=25$\\
\includegraphics[trim=0cm 0cm 0cm 1.4cm,clip,width = 1.69in]{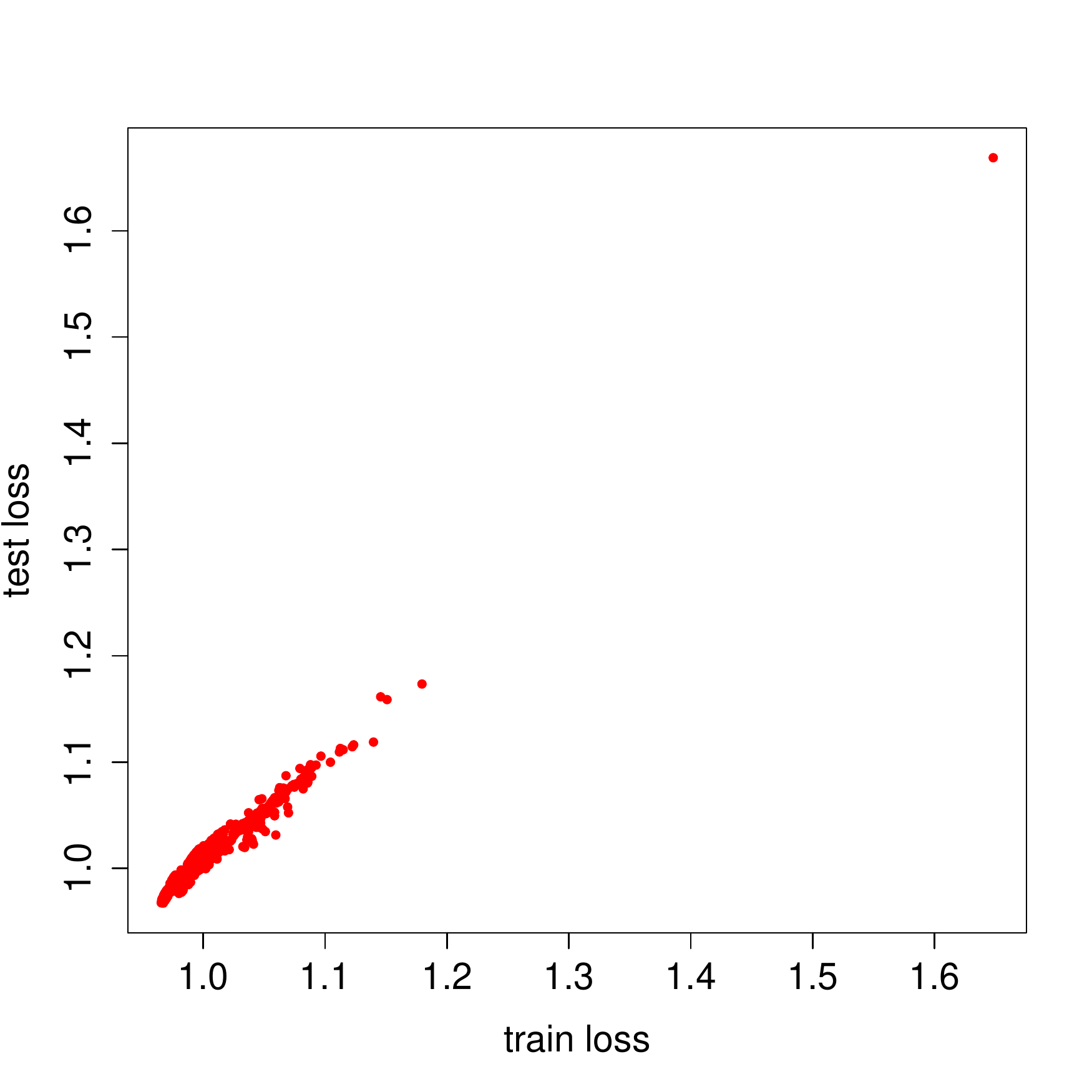}
\hspace{-0.05in}\includegraphics[trim=0cm 0cm 0cm 1.4cm,clip,width = 1.69in]{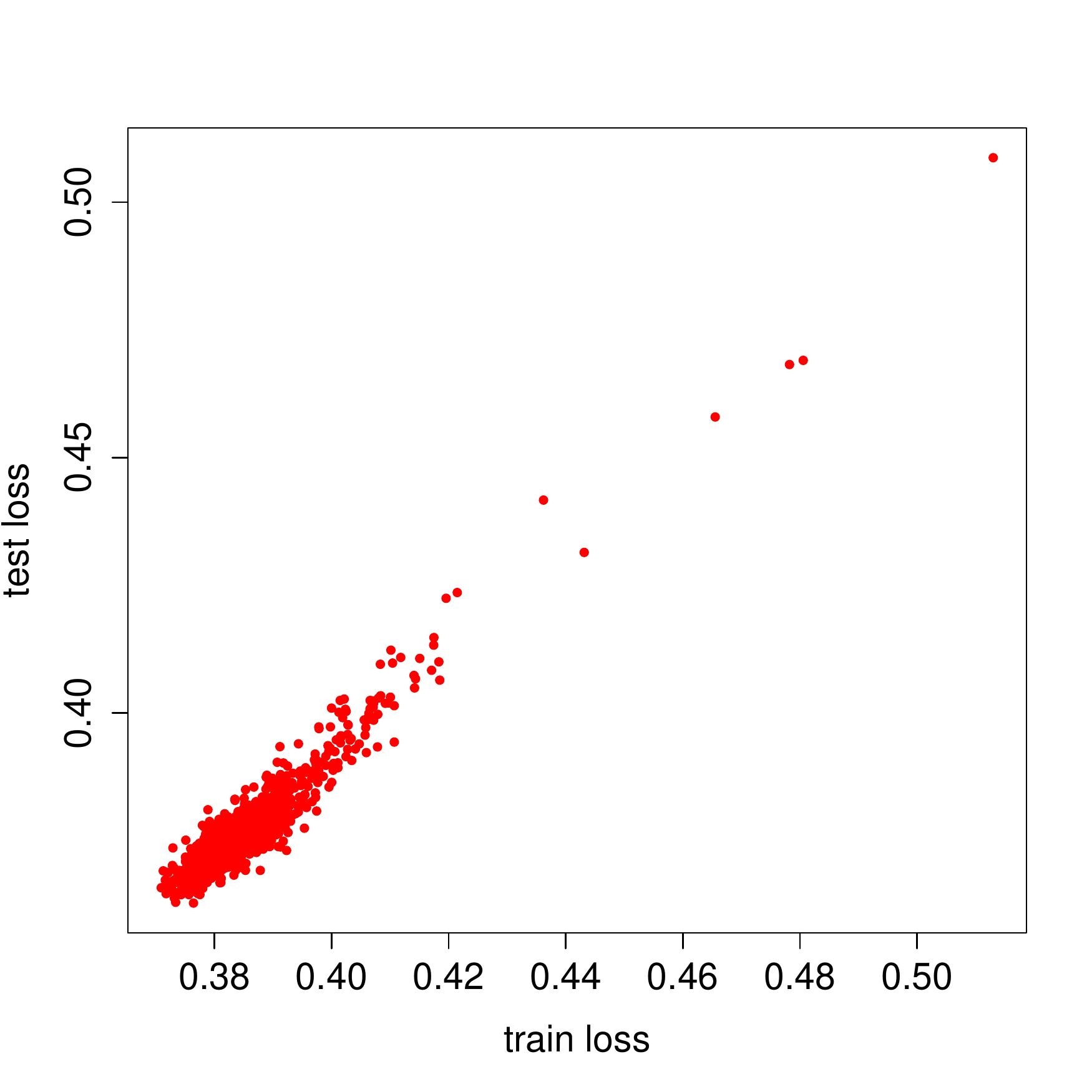} 
\hspace{-0.05in}\includegraphics[trim=0cm 0cm 0cm 1.4cm,clip,width = 1.69in]{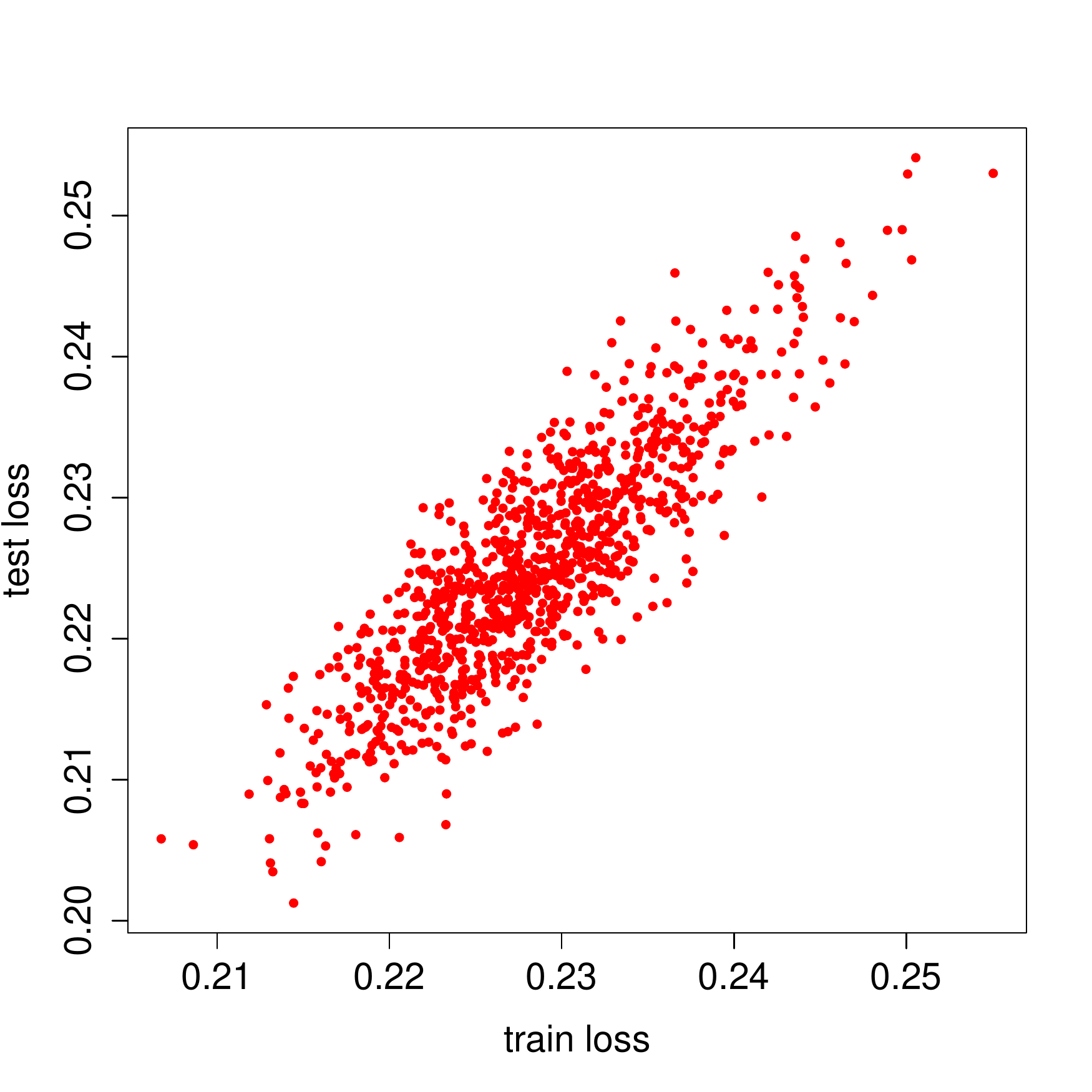} 
\hspace{-0.05in}\includegraphics[trim=0cm 0cm 0cm 1.4cm,clip,width = 1.69in]{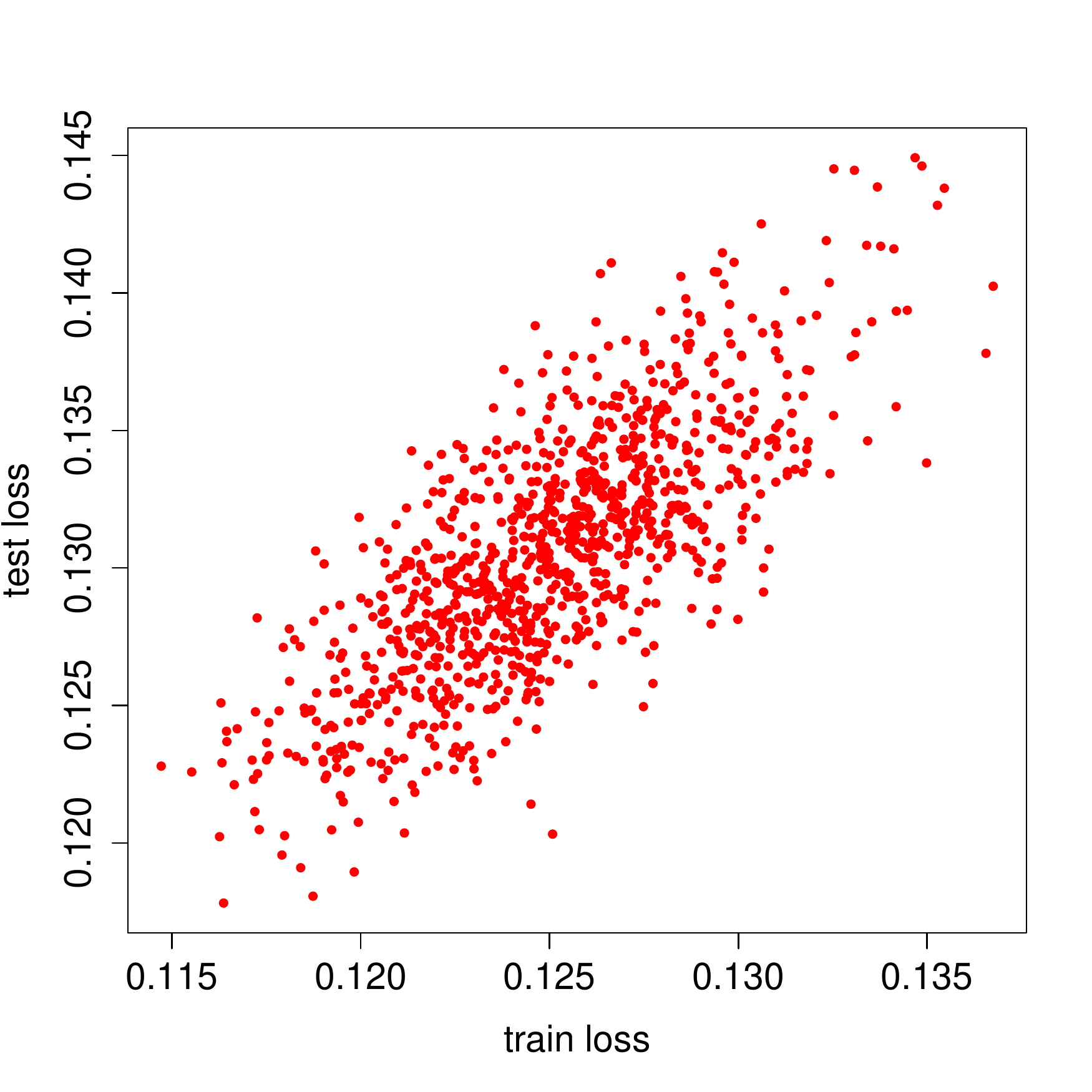}\\
\hspace{-0.7in}e) $n_1=50\:\:\:\:\:\:\:\:\:\:\:\:\:\:\:\:\:\:\:\:\:\:\:\:\:\:\:\:\:\:\:\:\:\:$f) $n_1=100\:\:\:\:\:\:\:\:\:\:\:\:\:\:\:\:\:\:\:\:\:\:\:\:\:\:\:\:\:\:\:\:\:$g) $n_1=250\:\:\:\:\:\:\:\:\:\:\:\:\:\:\:\:\:\:\:\:\:\:\:\:\:\:\:\:\:\:\:\:$h) $n_1=500$\\
\includegraphics[trim=0cm 0cm 0cm 1.4cm,clip,width = 1.69in]{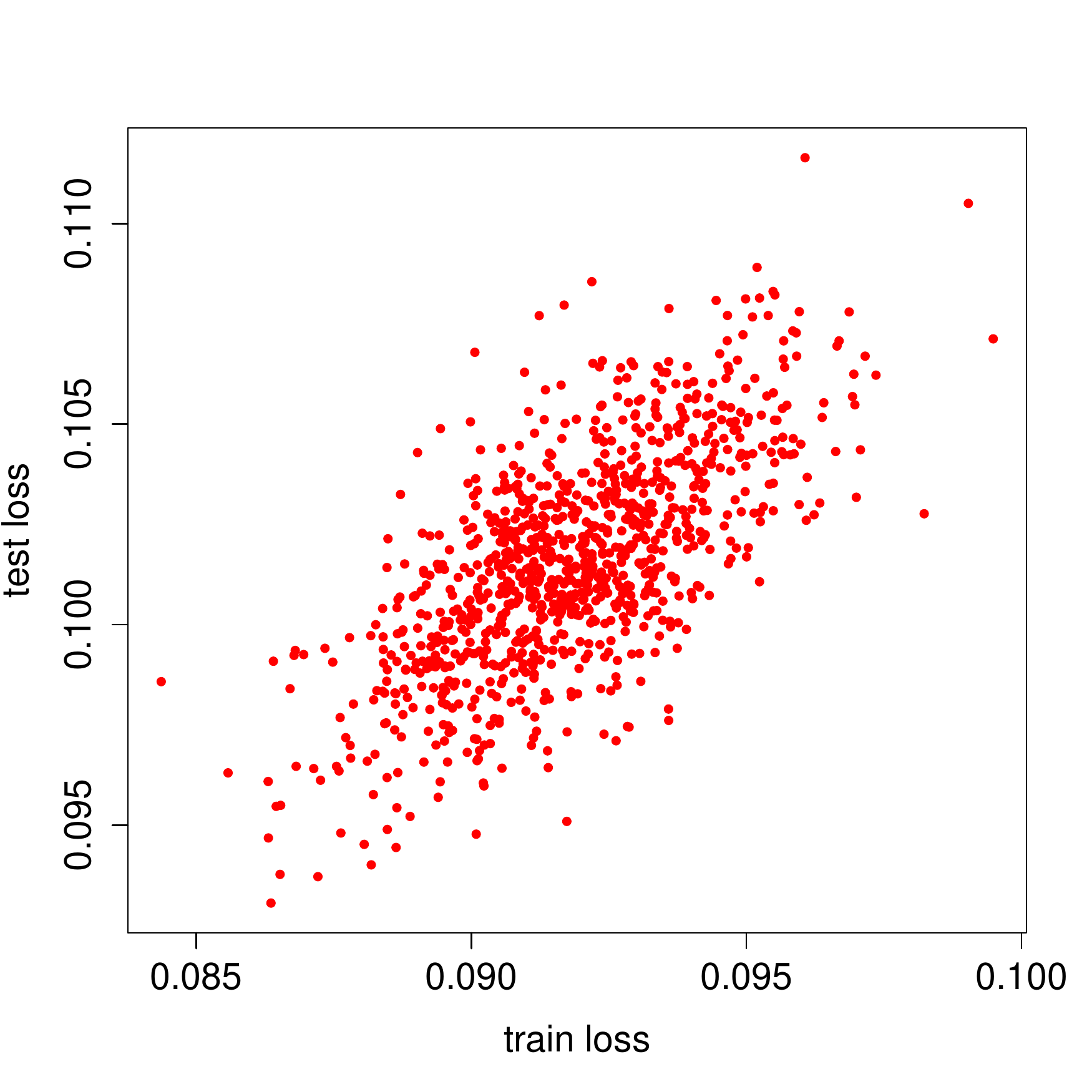}
\hspace{-0.05in}\includegraphics[trim=0cm 0cm 0cm 1.4cm,clip,width = 1.69in]{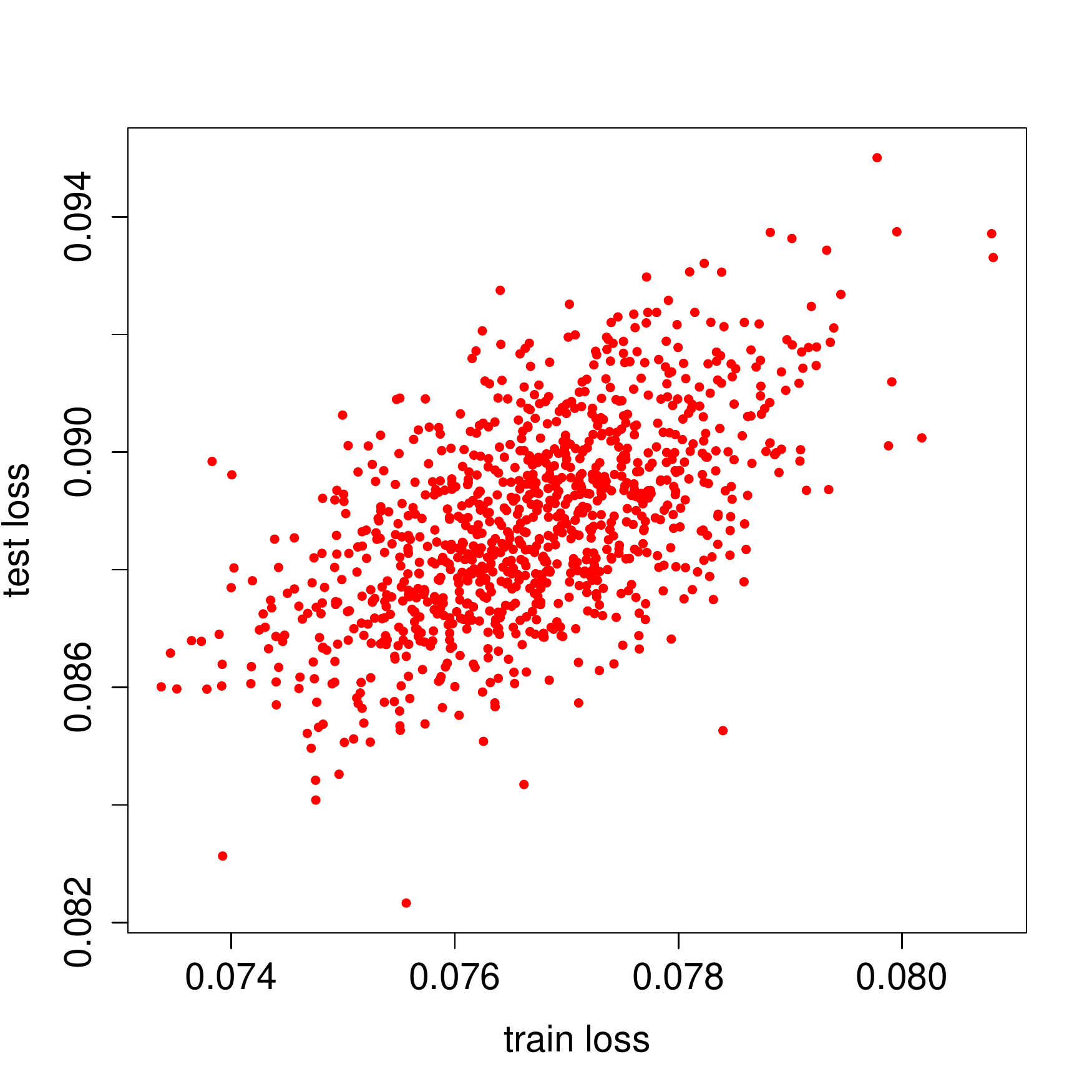}
\hspace{-0.05in}\includegraphics[trim=0cm 0cm 0cm 1.4cm,clip,width = 1.69in]{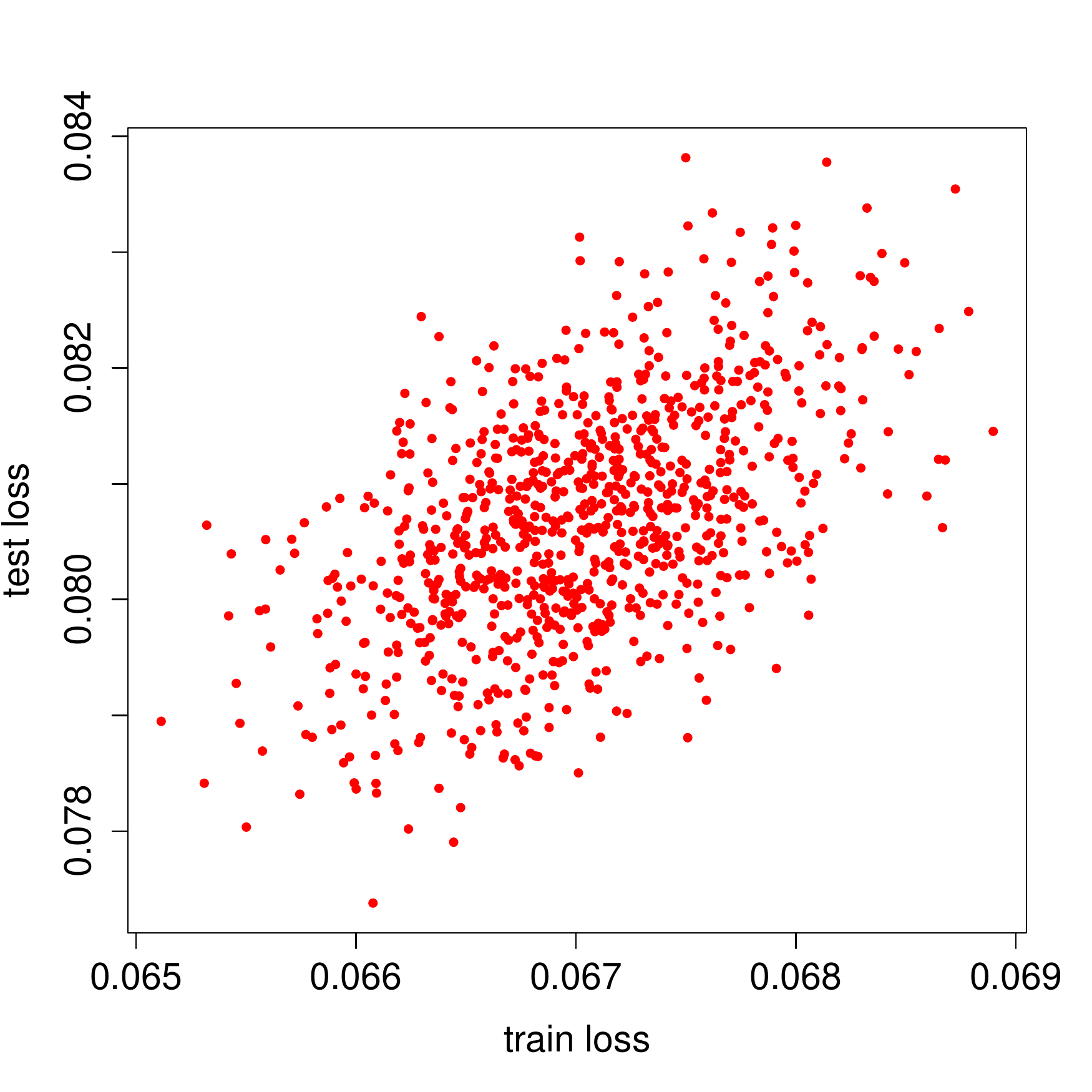}
\hspace{-0.05in}\includegraphics[trim=0cm 0cm 0cm 1.4cm,clip,width = 1.69in]{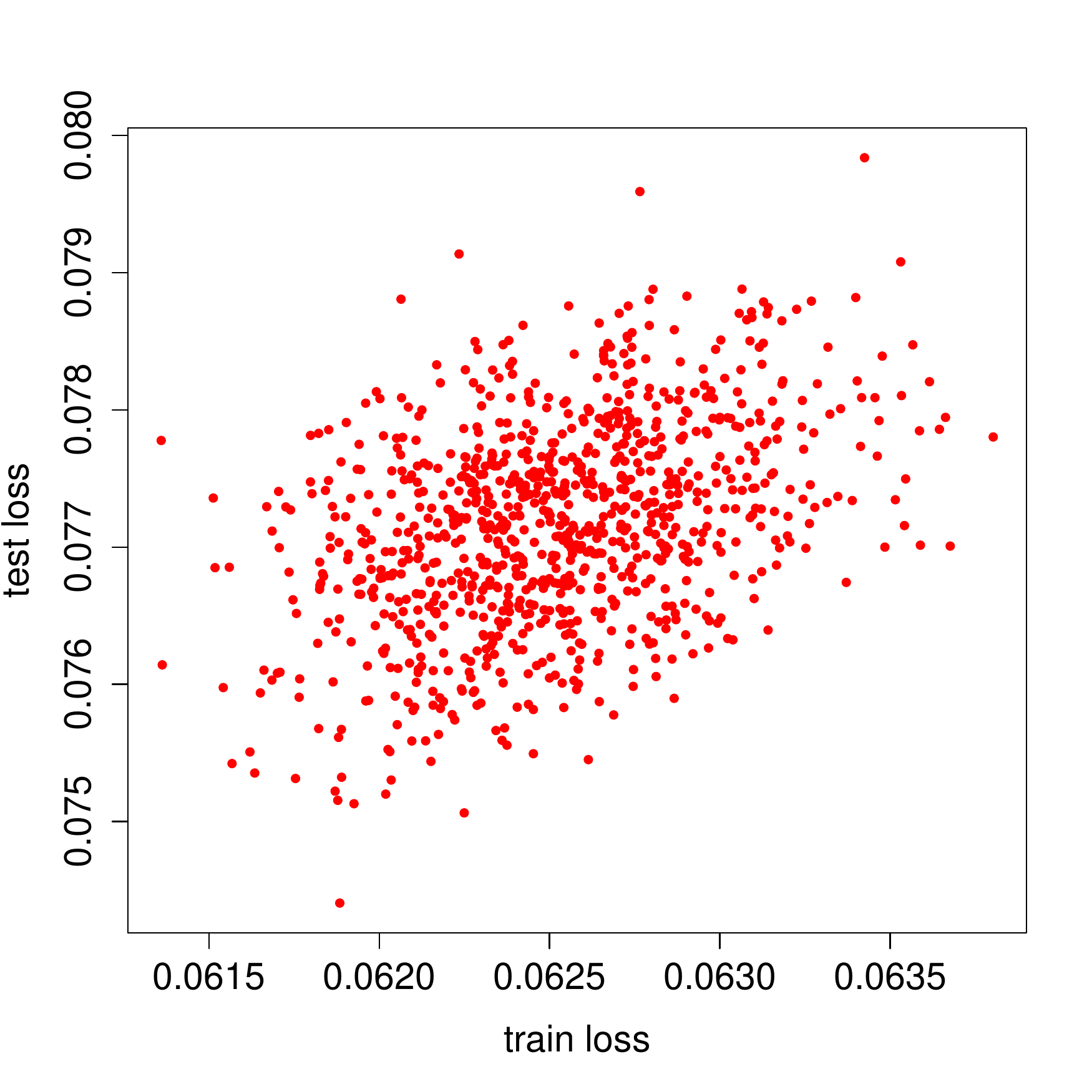}     
\vspace{-0.25in}
\caption{Test loss versus train loss for networks with different number of hidden units $n_1$.}
\label{fig:correlations}
\end{figure*}

\subsection{Distributions of the scaled test losses}

\begin{figure}[h]
  \center
\includegraphics[width = 3.0in]{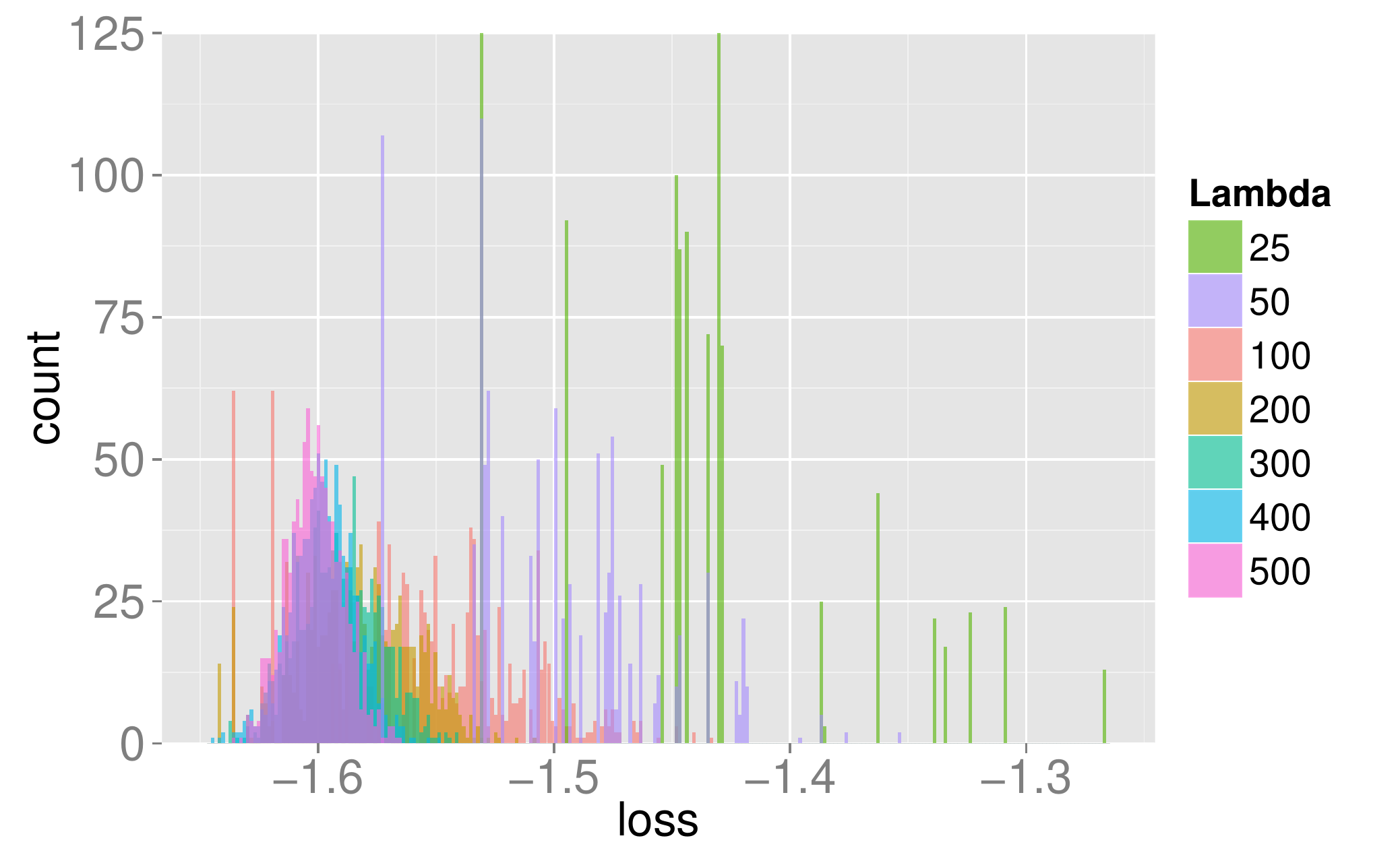}
\includegraphics[width = 3.0in]{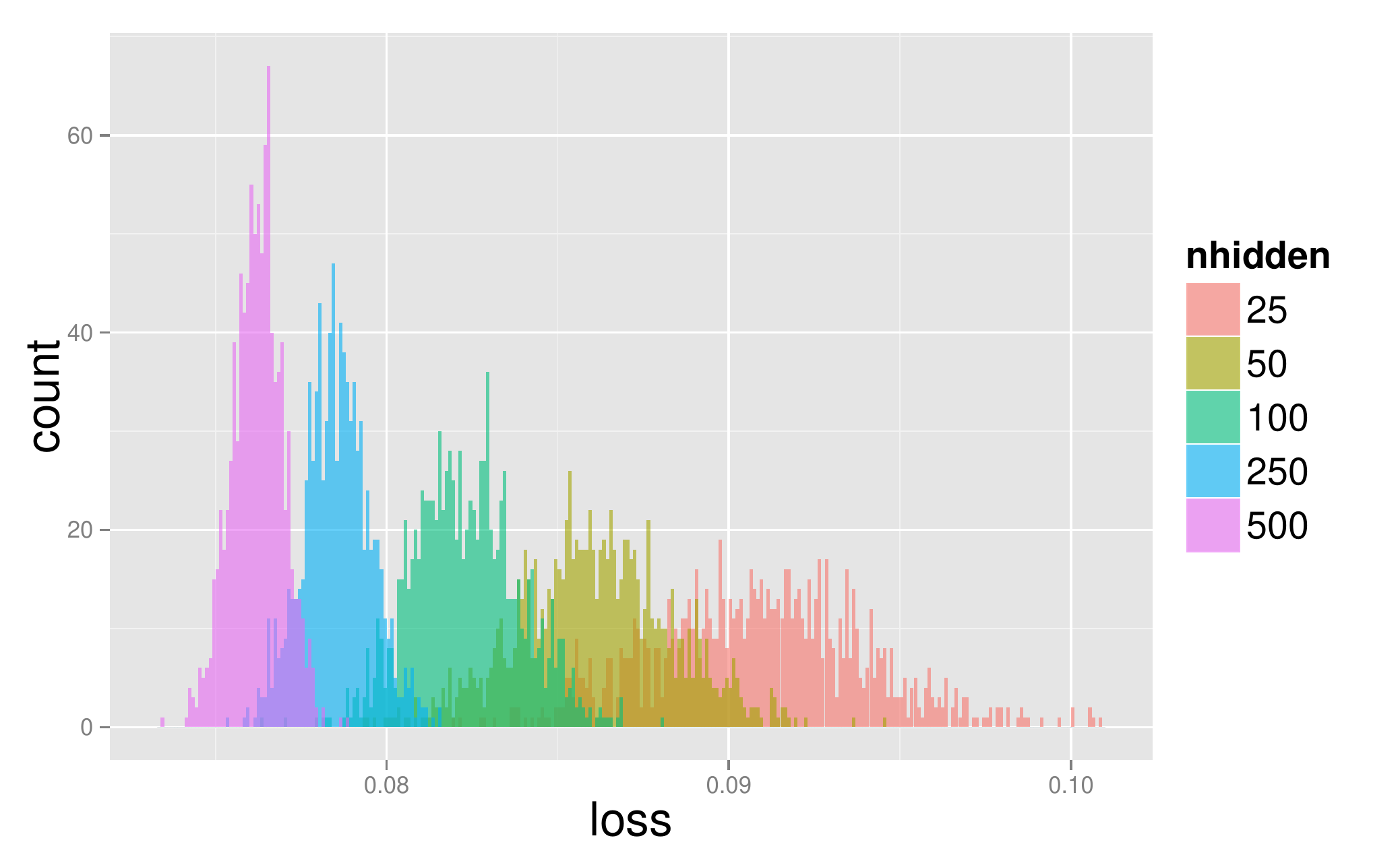} 
\vspace{-0.15in}
\caption{Distributions of the scaled test losses for the spin-glass (with $\Lambda = \{25,50,100,200,300,400,500\}$) (top) and the neural network (with $n_1 = \{25,50,100,250,500\}$) (bottom) experiments.}
\label{fig:spinglass_and_mnist2}
\end{figure}

Figure~\ref{fig:spinglass_and_mnist2} shows the distributions of the scaled test losses for the spin-glass experiment  (with $\Lambda = \{25,50,100,200,300,400,500\}$) and the neural network experiment (with $n_1 = \{25,50,100,250,500\}$). Figure~\ref{fig:mnist_statistics1} captures the boxplot generated based on the distributions of the scaled test losses for the neural network experiment (for $n_1 =\{10,25,50,100,250,500\}$) and its zoomed version (for $n_1 =\{10,25,50,100\}$).

\begin{figure}[htp!]
  \center
\includegraphics[trim=0cm 0cm 0cm 1.4cm,clip,scale=0.3]{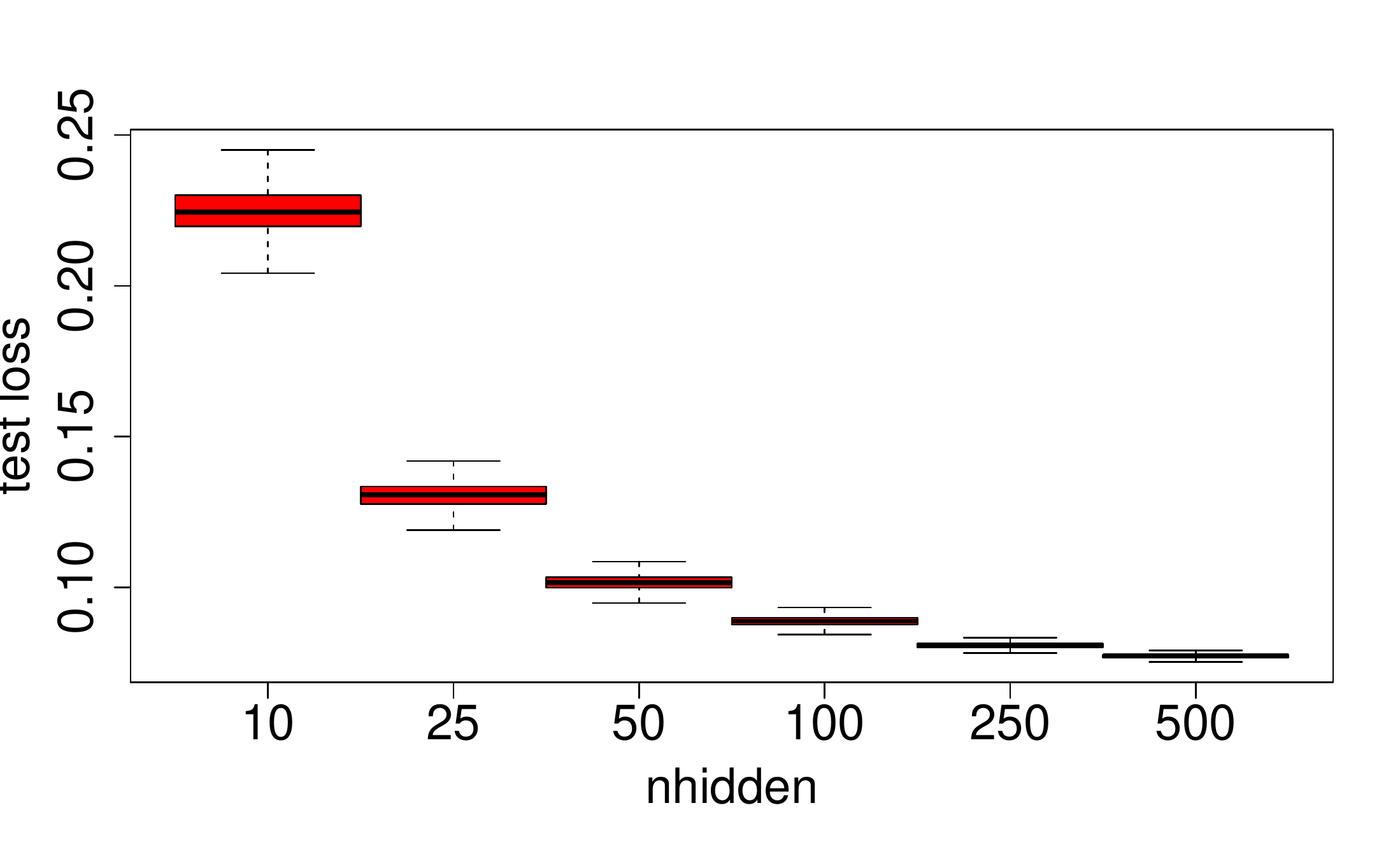}\\
\includegraphics[trim=0cm 0cm 0cm 1.4cm,clip,scale=0.3]{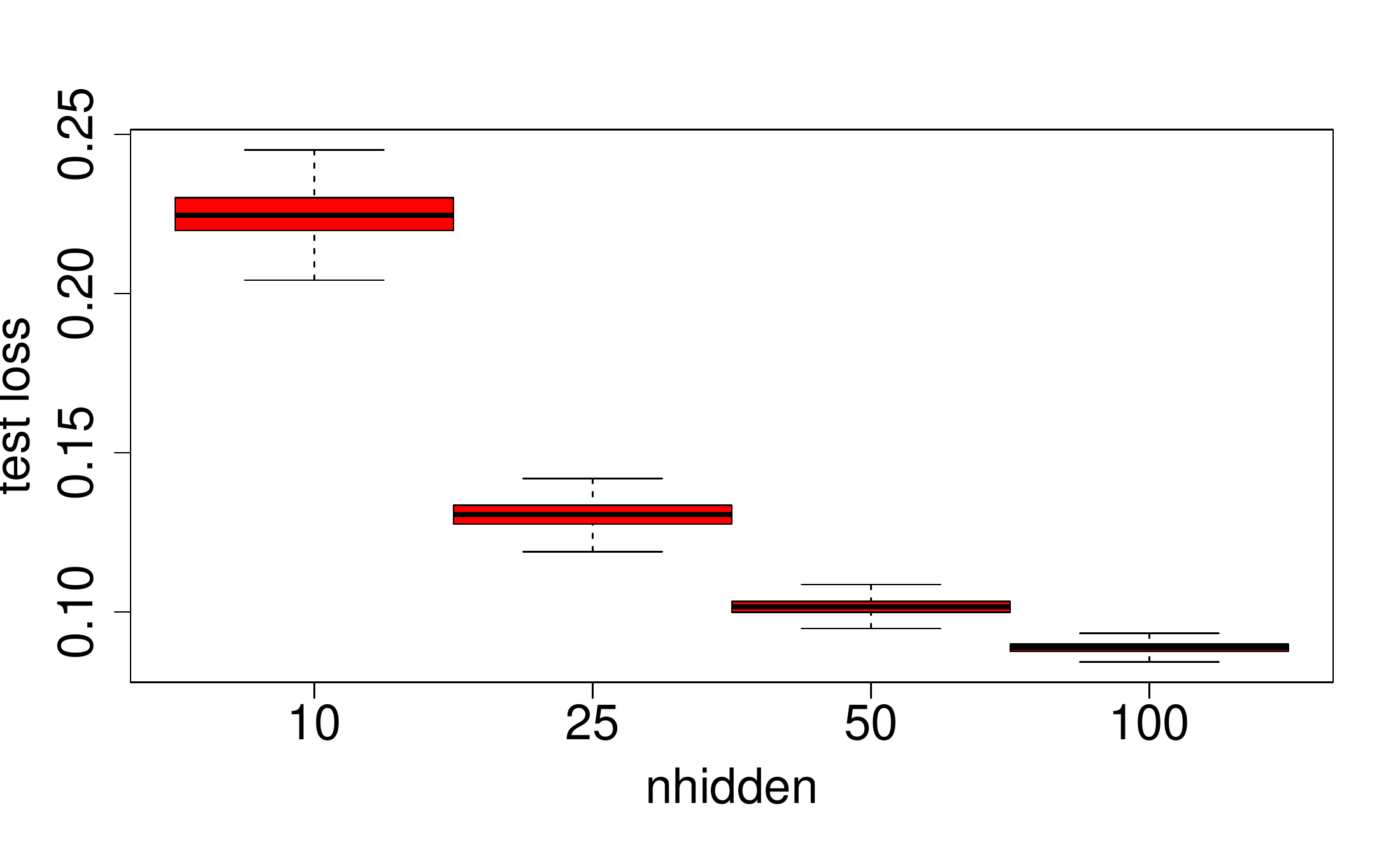}
\vspace{-0.15in}
\caption{\textbf{Top:} Boxplot generated based on the distributions of the scaled test losses for the neural network experiment, \textbf{Bottom:} Zoomed version of the same boxplot for $n_1 =\{10,25,50,100\}$.}
\label{fig:mnist_statistics1}
\end{figure}

Figure~\ref{fig:meanvar}  shows the mean value and the variance of the test loss as a function of the number of hidden units.
\begin{figure}[h]
  \center
\hspace{-0.05in}\includegraphics[trim=0cm 0cm 0cm 1.4cm,clip,scale=0.3]{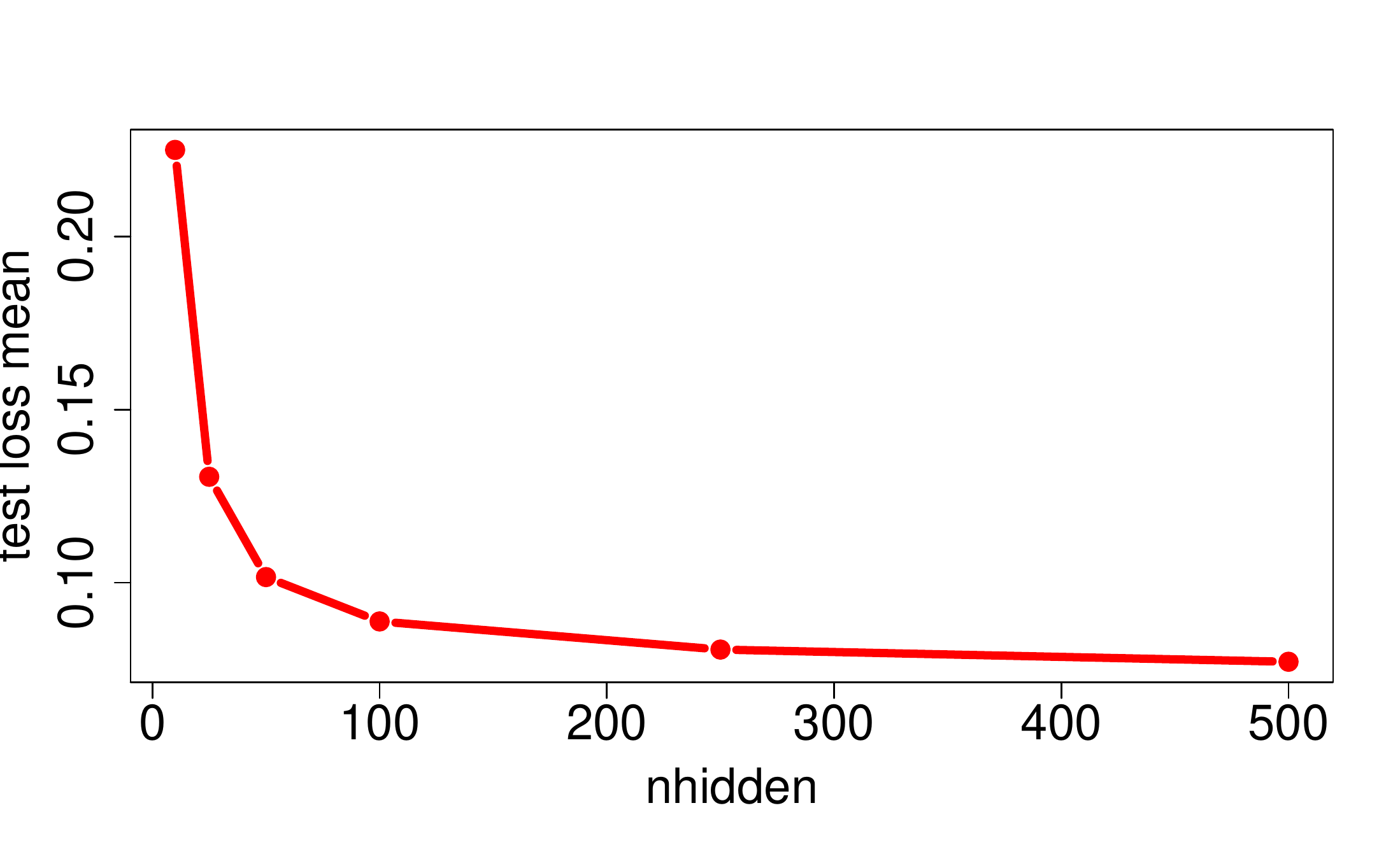}\\
\hspace{-0.05in}\includegraphics[trim=0cm 0cm 0cm 1.4cm,clip,scale=0.3]{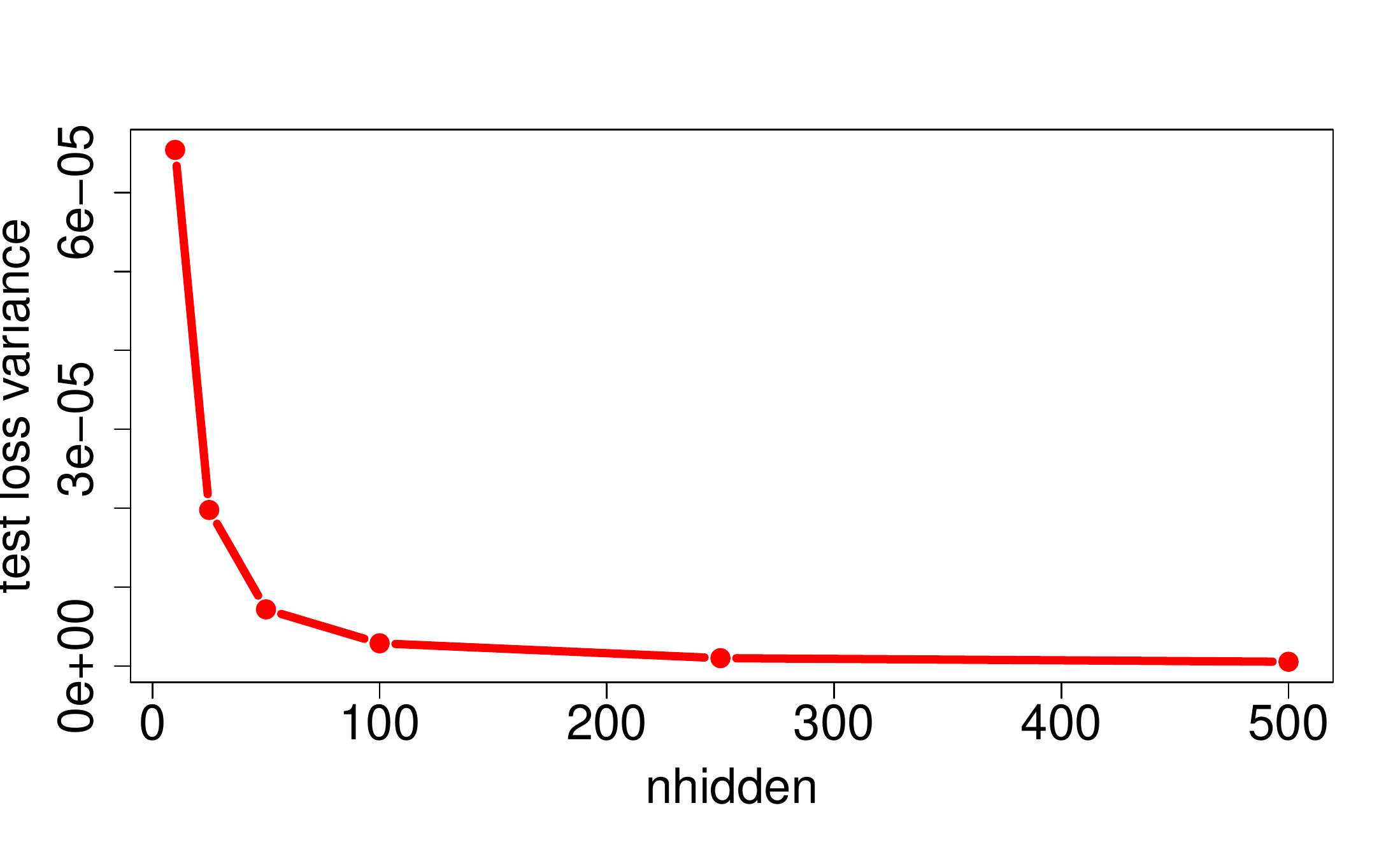} 
\vspace{-0.15in}
\caption{Mean value and the variance of the test loss as a function of the number of hidden units.}
\label{fig:meanvar}
\vspace{-0.1in}
\end{figure}

\newpage
\subsection{Correlation between train and test loss}

Figure~\ref{fig:correlations} captures the correlation between training and test loss for networks with different number of hidden units $n_1$.

\end{document}